\definecolor{darkred}{rgb}{0.75,0,0}
\definecolor{darkgreen}{rgb}{0.1,0.6,0.1}
\definecolor{darkblue}{rgb}{0.1,0.1,0.6}
\def\mt#1{{\textcolor{blue}{#1}}}
\def\mtmath#1{\mathbf{\textcolor{darkblue}{#1}}}
\newcommand{\cL}{\mathcal{L}}
\newcommand{\bbE}{\mathbb{E}}
\newcommand{\bbP}{\mathbb{P}}
\newcommand{\bbR}{\mathbb{R}}
\newcommand{\cE}{\mathcal{E}}
\newcommand{\cP}{\mathcal{P}}
\newcommand{\argmin}{\mathrm{argmin}}
\newcommand{\dd}{\mathrm{d}}
\newcommand{\Div}{\mathrm{div}}
\newcommand{\dist}{\mathrm{dist}}
\newcommand{\eps}{\varepsilon}
\newcommand{\iid}{\stackrel{\mathrm{iid}}{\sim}}
\newcommand{\Lip}{\mathrm{Lip}}
\newcommand{\one}{\mathds{1}}
\newcommand{\Vol}{\mathrm{Vol}}
\def\l{\left(}
\def\r{\right)}
\def\la{\left|}
\def\ra{\right|}
\def\lb{\left\{}
\def\rb{\right\}}
\def\rd{\right.}
\def\ls{\left[}
\def\rs{\right]}
\def\Cr#1{\textrm{\textnormal{C}}^{#1}}
\def\Lp#1{\textrm{\textnormal{L}}^{#1}}
\def\cEcon{\cE_{\mathrm{con}}}
\def\vx{{\bm{x}}}
\def\vy{{\bm{y}}}
\def\vz{{\bm{z}}}
\def \cR {{\mathcal{R}}}
\def \RR {{\mathbb{R}}}
\def \Wb {{\mathbf{W}}}
\title[Robust Certification for Laplace Learning on Geometric Graphs]{
Robust Certification for Laplace Learning on Geometric Graphs}
 \let\Ginclude@graphics\@org@Ginclude@graphics
\begin{document}

\maketitle

\begin{abstract}%
Graph Laplacian (GL)-based semi-supervised learning is one of the most used approaches for classifying nodes in a graph. Understanding and certifying the adversarial robustness of machine learning (ML) algorithms has attracted large amounts of attention from different research communities due to its crucial importance in many security-critical applied domains. There is great interest in the theoretical certification of adversarial robustness for popular ML algorithms. In this paper, we provide the first adversarial robust certification for the GL classifier. More precisely we quantitatively bound the difference in the classification accuracy of the GL classifier before and after an adversarial attack.
Numerically, we validate our theoretical certification results and show that leveraging existing adversarial defenses for the $k$-nearest neighbor classifier can remarkably improve the robustness of the GL classifier.
\end{abstract}

\begin{keywords}%
Graph Laplacian; Semi-supervised learning; Robust certification
\end{keywords}

\section{Introduction} \label{sec:Intro}
Let {$\Omega_N := \{\vx_i\}_{i=1}^N \subset \RR^d$} be a set of feature vectors with a subset of {$\Gamma_N := \{\vx_i\}_{i\in Z_N \subset [N]}$} 
being labeled. If {$i\in Z_N$} then {$\vx_i$} is labeled {$\ell(\vx_i) \in \bbR$} 
and we denote {$\ell_N := \ell |_{\Gamma_N}$}. 
The Graph Laplacian (GL) framework encodes the geometry of the feature vectors {$\Omega_N$} by constructing an undirected graph, {$G_N = (\Omega_N, \Wb_{N})$}, where {$\Omega_N$} forms the nodes of the graph and {$\Wb_{N} := (\Wb_{\vx, \vy})_{\vx, \vy \in \Omega_N}$} is the set of edge weights with {$\Wb_{\vx, \vy}$} being the weight of the edge between {$\vx$} and {$\vy$}. 
The graph Dirichlet energy is defined by
{\[ {\cE(u; \Omega_N) = \sum_{\vx, \vy\in \Omega_N} \Wb_{\vx, \vy} (u(\vx)-u(\vy))^2,} \]}%
where $u$ is a function defined on the nodes $\Omega_N$ of the graph. 
We can then predict the label for the unlabeled data by solving the following constrained energy minimization problem
{\begin{equation} \label{eq:Setting:Variational:Var}
{\text{minimize } \cE(u; \Omega_N) \text{ over } u:\Omega_N\to\bbR \text{ subject to } u(\vx) =\ell_N(\vx) \, \forall \vx\in\Gamma_N.}
\end{equation} }%
Laplacian regression is the solution to \eqref{eq:Setting:Variational:Var}.
To go from regression to (binary) classification one thresholds $u$, e.g. if the classes are represented by $\{0,1\}$ then the GL classifier predicts the label
$1$ if {$u(\vx)\geq 1/2$}, and $0$ otherwise. 
{Note that the GL classifier classifies any unlabeled data leveraging both labeled and unlabeled data.} As a comparison, for any unlabeled $\vx$, the $k$-nearest neighbor ($k$NN) classifier classifies $\vx$ with the most common label amongst its {labeled} nearest neighbors.


The GL classifier has been successfully used for semi-supervised data classification \citep{wang2006semi,zhou2004learning,zhu03}, image processing \citep{buades2006neighborhood,gilboa2009nonlocal,shi17}, improving robustness and accuracy of deep neural nets (DNNs) \citep{wang2018deep,wang2019graph}, etc. Direct application of GL classification with Gaussian \citep{belkin2004semi} or locally linear embedding weights \citep{roweis2000nonlinear} for the above tasks may cause inference inconsistency in the low labeling ratio regime. To resolve this dilemma, many regularisation strategies have been developed to adapt GL to the ultra-low ratio of the labeled training data, e.g., scaling the weights \citep{shi17,shi2018error} of the labeled data and the $p$-Laplacian \citep{calder18,rios2019algorithms,zhou05}.

Despite the tremendous success of machine learning (ML) algorithms, they are generally vulnerable to adversarial attacks \citep{szegedy2013intriguing}. 
The adversarial vulnerability of ML algorithms raises concerns in applications to security-critical domains, such as: autonomous cars \citep{Akhtar:2018,Attack:Tesla}, 
medical imaging \citep{finlayson2019adversarial},
and national defense \citep{hoadley2018artificial}. 
Many algorithms have been recently proposed to improve robustness of ML including adversarial training \citep{Goodfellow:2014AdversarialTraining,madry2017towards}, augmenting training data with unlabeled instances \citep{carmon2019unlabeled}, and noise injection 
\citep{wang2019resnets}.
Nevertheless, there is a lack of theoretical understanding of adversarial issues of ML models. In this paper, we focus on \emph{theoretical analysis of the conditions that guarantee adversarial robustness of the GL classifier for semi-supervised learning (SSL).}

\subsection{Our Contribution}
{A classifier is said to be certifiably robust in classifying $\vx$, if the classification result remains constant provided the perturbation on $\vx$ is within a ball, e.g., $\ell_2$-ball, of radius $r$. In this paper, we provide the first certification 
of the adversarial robustness of the GL classifier under the $\ell_2$-norm. 
Our theory shows that within a certain adversarial attack regime, the GL classifier with} {$O(k)$ edges per node} 
{is intrinsically more robust than the $k$NN classifier. We show that to achieve certified robustness, the GL method needs significantly} {fewer} {nearest neighbors, with a small computational overhead.} 
{Our theoretical result resonates with the finding that unlabeled data can improve the robustness of ML algorithms \citep{carmon2019unlabeled} and provides a feasible avenue to explain the observation that GL-based activation function remarkably improves DNNs' robustness \citep{wang2019graph}.} 
We summarize these high probability results in Table~\ref{tab:Theory:Summary}, where $N$ and $M$ are the total number of data and the number of unlabeled data respectively, $k$ is the 
number of nearest neighbors involved in $k$NN and the approximate order of edges per node for the GL classifier, $r$ is the maximum allowed adversarial perturbation measured in the $\ell_2$-norm, and $\kappa$ is the condition number of the matrix $\Wb_{N}$. 
We point out, however, that the results for the GL classifier in Table~\ref{tab:Theory:Summary} are a special case and in particular one can reduce the number of neighbors $k$ at the cost of reducing the probability (going from high probability bounds to low probability bounds).  
Note that if a constant fraction of the data is labeled i.e., {$(N-M)/N $} is constant, then {$k=\Omega(\log N)$} for the GL classifier.
We will numerically verify these theoretical results with the existing benchmark experiments in Section~\ref{sec:Exp}.
{More detail on how we extracted these bounds from our theoretical results is given in Remark~\ref{rem:Tab1Bounds}.}
\begin{table}[!ht]
\fontsize{9.5}{9.5}\selectfont
\centering
\caption{High probability robustness guarantees and computational complexity of GL vs. $k$NN.
}
\label{tab:Theory:Summary}
\centering
\begin{tabular}{ccccc}
\toprule
Classifier  & $k$ & Assumption on $r$ & Computational Complexity & Reference\\
\midrule
$k$NN  & \boldsymbol{$\Omega(\sqrt{N\log N})$} & None  & $O(kM\log (N-M))$  & \cite{wang2017analyzing}\cr
GL   & \boldsymbol{$\Omega\left(\frac{N\log N}{N-M}\right)$} & $r \leq c\sqrt{\frac{N-M}{N}}\left(\frac{\log N}{N-M}\right)^{\frac{1}{d}}$  & $O(kN\log N + Nk\sqrt{\kappa})$  &  {\bf This Work} \cr
\bottomrule
\end{tabular}
\end{table}

\subsection{Additional Related Works} 
{
The first theoretical characterisation of the number of nearest neighbors required for a robust $k$NN classifier appeared in~\citep{wang2017analyzing}, where the authors also proposed a robust one nearest neighbor approach.
We apply the robust characterisation used by \cite{wang2017analyzing} and develop a robust certification for the GL classifier in SSL.}




To prove robustness, we connect with large data results and we mention several here. When the labeling rate is low Laplacian regularisation becomes degenerate and the label function $u$ becomes nearly constant with sharp spikes at the labeled points~\citep{elalaoui16,nadler09,slepcev19}.
The degeneracy can be avoided by either using $p$-Dirichlet energies, with $p>d$~\citep{calder18AAA,elalaoui16,slepcev19}, by increasing the label rate $(N-M)/N$~\citep{calder20AAA}, or by reweighting the Laplacian in order to gain more regularity~\citep{calder18bAAA,shi2018error}.
Similar results hold for the game theoretic $p$-Laplacian~\citep{calder18,calder18AAA}.
In addition, pointwise convergence of Laplacians has been considered several times, for example~\citep{belkin2007convergence,calder18,calder20AAA,SpecRatesTrillos,GTSSpectralClustering,hein05,singer06}.

\subsection{Organization}
We organize this paper as follows: In Section~\ref{sec:Setting}, we present the main theory on the {certified} robustness of the GL classifier. In Section~\ref{sec:Complexity}, we analyze the computational complexity of the GL classifier. We verify the robustness of the GL classifier in different settings and compare it with {the} $k$NN {classifier} in Section~\ref{sec:Exp}. This paper ends with some concluding remarks in Section~\ref{sec:Conc}. Technical proofs and some more experimental details and results are provided in the appendix.

\subsection{Notation}
We denote 
vectors/matrices by lower/upper case bold face letters. 
Given two sequences $\{a_n\}$ and $\{b_n\}$, we write $a_n=O(b_n)$ if there exists a positive constant $C$ 
such that $a_n \leq C b_n$; and $a_n = \Omega(b_n)$ if for large enough $n$, $b_n$ is at least $ca_n$ for some constant $c$. 
{Throughout $0<c\leq C<+\infty$ will be arbitrary constants (independent of data realisations and all other parameters but possibly depending on dimension and the density of the data generating distribution) and may change value from line-to-line.}
We denote the set $\{1, 2, \cdots, N\}$ by $[N]$. 


\section{Main Theory}\label{sec:theory} \label{sec:Setting}
\subsection{Preliminaries and Assumptions}\label{subsec:Setting:Variational}
To {certify} the 
robustness of the GL classifier, we make the following assumptions on the dataset:
\begin{enumerate}
\item[(A1)] $\Omega\subset \bbR^d$ is open connected and bounded with Lipschitz boundary;
\item[(A2)] $\vx_i\iid\mu\in\cP(\Omega)$ where $\mu$ has density $\rho\in \Cr{2}(\Omega)$ that is bounded below by a positive constant, i.e. $\inf_{\vx\in\Omega}\rho(\vx)=:\rho_{\min} >0$;
\item[(A3)] $\bbP(\vx\in \Gamma_N|\vx\in\Omega_N) = \bbP(i\in Z_N|\vx_i\in \Omega_n) = \beta$ and if $\vx\in\Gamma_N$, then $\vx$ is labeled as $\ell(\vx)$ for a Lipschitz function $\ell:\Omega\to\bbR$. 
\end{enumerate}

{It will be convenient to define $\ell_N = \ell|_{\Gamma_N}$.} Note that $\beta$ is the {probability of a data point being labeled} and so (in the notation of Table~\ref{tab:Theory:Summary}) {we can make the formal association $\beta \sim (N-M)/N$.} 
For convenience, we introduce the following constrained graph Dirichlet energy functional:
\[ {\cEcon(u; D_N) = \lb \begin{array}{ll} \cE(u; \Omega_N) & \text{if } u(\vx) = \ell_N(\vx) \, \forall \vx\in\Gamma_N \\ +\infty & \text{else.} \end{array} \rd} \]
The Euler-Lagrange equation corresponding to minimizing $\cEcon(\cdot; D_N)$ is 
{\begin{align*}
\cL_N(u; \Omega_N)(\vx) & = 0 && \text{for } \vx\in \Omega_N\setminus\Gamma_N \\
u(\vx) & = \ell_N(\vx) && \text{for } \vx\in\Gamma_N,
\end{align*}}
where $\cL_N(\cdot; \Omega_N)$ is the graph Laplacian defined by
\[{ \cL_N(u; \Omega_N)(\vx) = \sum_{\vy\in\Omega_N} \Wb_{\vx, \vy} \l u(\vx) - u(\vy)\r.} \]

We have made explicit the dependence of the domain $\Omega_N$ on the functionals $\cE$, $\cEcon$ and the operator $\cL_N$. Although this notation may feel cumbersome at this stage, it will aid clarity when we have two sets of data; the original dataset $\Omega_N$ and the (adversarially-) perturbed dataset $\hat{\Omega}_N$.

{We will consider \emph{Geometric Random} graphs.
This construction involves weighting edges between all pairs of nodes as a function of the distance between nodes (and we say there is no edge between two nodes if the edge weight is zero).
We use a parameter $\eps$, which is often chosen relative to $N$, to control the length scale in the graph.
This is summarised below:}
\begin{enumerate}
\item[(A4)] 
{$\Wb_{\vx, \vy}=\Wb_{\eps, \vx, \vy}$ where}
$\Wb_{\eps, \vx, \vy} = \eta_\eps(|\vx-\vy|)$ and $\eta_\eps = \frac{1}{\eps^d}\eta(\cdot/\eps)$ and $\eta:[0,+\infty)\to[0,+\infty)$ is non-increasing, positive, $\eta(t)\geq 1$ for all $t\leq 1$ and $\eta(t)=0$ for all $t\geq 2$.
In addition, either $\eta$ is Lipschitz continuous, or $\eta(t) = \one_{t\leq 1}$.
\end{enumerate}

{We note that whilst we use the geometric random graph construction in (A4) and we use the $k$NN graph in our experiments. The parameters $k$ and $\eps$ are related as follows $k\sim N\eps^d$ (cf Lemma~\ref{lem:app:BallBound}). There are additional technical challenges when addressing the $k$NN constructions, however, we believe our results carry through to this setting (see also Remark~\ref{rem:Tab1Bounds} below).}


The assumptions in (A4) allow us to bound the degrees of nodes and, letting $\hat{\vx},\hat{\vy}$ be the adversarial perturbations of $\vx, \vy$, show that either (i) $\Wb_{\vx, \vy}$ is always close to $\Wb_{\hat{\vx},\hat{\vy}}$ (when $\eta$ is Lipschitz) or (ii) we can control the number of $\vx, \vy$ such that $\Wb_{\vx, \vy}$ is not close to $\Wb_{\hat{\vx}, \hat{\vy}}$ (when $\eta = \one_{\cdot\leq 1}$).

\subsection{Robustness of Semi-Supervised Learning with Graph Laplacian}\label{subsec:Setting:Robust}
In this subsection, we give a theoretical bound of the following question: 
{\emph{how is the classification estimate affected if an adversary replaces the clean dataset $D_N=(\Omega_N,\ell_N)$ with a new, corrupted, dataset $\hat{D}_N=(\hat{\Omega}_N,\hat{\ell}_N)$?}} Following~\cite{wang2017analyzing}, we assume that the adversary can corrupt features by adding a small perturbation to the unlabeled data; the question of robustness under poisoning attacks \citep{dalvi2004adversarial,lowd2005good} is an interesting question we leave open.
We assume the adversary can corrupt the unlabeled data by moving each point a maximum distance of $r$ in $\ell_2$-norm. That is, the adversary can replace the set $\Omega_N$ with a corrupted dataset $\hat{\Omega}_N$ by, for each $i=1, \dots, N$, choosing $\hat{\vx}_i\in B(\vx_i, r)$ thus defining $\hat{\Omega}_N = \{\hat{\vx}_i\}_{i=1}^N$. Here, and in the sequel, $\hat{\vx}, \hat{\vx}_i$ is understood to be a perturbation of $\vx, \vx_i$, respectively. Although the labels are not perturbed, the domain of the labeling function $\ell_N$ is, i.e. the perturbed domain is $\hat{\Gamma}_N = \{\hat{\vx}_i\}_{\{i\,:\,x_i\in\Gamma_N\}}$, and so we define $\hat{\ell}_N:\hat{\Gamma}_N\to \bbR$ by $\hat{\ell}_N(\hat{\vx}) = \ell_N(\vx)$ for all $\hat{\vx}\in\hat{\Gamma}_N$.
{Note that $\hat{\ell}_N(\hat{\vx}) = \ell_N(\vx)$ is precisely the condition that the adversary doesn't corrupt labels.}

A \emph{learning strategy} is a map from the dataset $D_N=(\Omega_N,\ell_N)$ to a function $u:\Omega_N\to \bbR$. For example, in the previous section we defined the learning strategy 
\begin{equation} \label{eq:Setting:Robust:LapStrat}
{
D_N=(\Omega_N, \ell_N)\mapsto u(\cdot; D_N) := \argmin_{u: \Omega_N\to\bbR}\cEcon(\cdot; D_N).
}
\end{equation}
This is the learning strategy we will analyse.

Given a dataset $D_N = (\Omega_N, \ell_N)$ and a perturbation $\hat{D}_N=(\hat{\Omega}_N, \hat{\ell}_N)$ we will compare $u(\cdot; D_N)$ with $u(\cdot; \hat{D}_N)$ by $|u(\vx; D_N) - u(\hat{\vx}; \hat{D}_N)|$. The $\Lp{\infty}$ 
distance between $u(\cdot; D_N)$ and $u(\cdot; \hat{D}_N)$ can be defined as $\max_{\vx\in\Omega_N} |u(\vx; D_N) - u(\hat{\vx}; \hat{D}_N)|$.

We let $\delta>0$ be a prescribed tolerance then the robustness radius is the smallest $r$ such that it is possible to perturb $u(\cdot; D_N)$ by more than $\delta$. More precisely, we define the $\delta$-robustness radius below which is a modification of the robustness radius in \citep{wang2017analyzing}.
\begin{definition}
{\bf $\delta$-Robustness Radius.}
\label{def:Setting:Robust:RobRadCorDat}
{Let $D_N\mapsto u(\cdot; D_N)$ be a learning strategy.
The $\delta$-robustness radius $\cR_\delta(\Omega^\prime, u, D_N)$ of $u$ over a subset $\Omega^\prime\subset \Omega$ given the data $D_N$ is the smallest radius $r$ such that $\sup_{\vx\in\Omega_N} |u(\vx; D_N)-u(\hat{\vx}; \hat{D}_N)|>\delta$ where $|\hat{\vx}-\vx|<r$ for all $\vx\in\Omega^\prime$, i.e.}
\[ {\cR_\delta(\Omega^\prime, u, D_N) = \inf_{r>0} \lb\forall \vx_i\in\Omega_N\cap\Omega^\prime\, \exists \hat{\vx}_i \in B(\vx_i, r) \text{ s.t. } \sup_{\vx\in\Omega_N} |u(\vx; D_N)-u(\hat{\vx}; \hat{D}_N)|>\delta \rb.} \]
\end{definition}

We prove $\delta$-robustness over $\Omega^\prime$ in order to avoid problems at the boundary $\partial\Omega$.
In particular, we take $\Omega^\prime$ such that $\dist(\Omega^\prime,\partial\Omega)$ is sufficiently large.
We believe our arguments can be extended to the boundary but the techniques to do so are more involved and will involve estimates between the GL and its continuum analogue at the boundary.
{In particular, our proof uses a bound between the graph Laplacian and its continuum analogue, for which there are quantitative bounds away from the boundary, e.g.~\cite{singer06,calder18}.
Near the boundary the bound between the graph Laplacian and its continuum counterpart deteriorates to $O(1)$, i.e. there are currently no established rates of convergence close to the boundary, see~\cite{calder20AAA}.}

Our main theoretical results are the following, the proofs can be found in Appendix~\ref{Appendix:proof}.


\begin{theorem}
{\bf $\delta$-Robustness of GL-based 
Regression.}
\label{thm:Setting:Robust:RobRadCorDat}
{Under Assumptions (A1-A4)}
define $u$ by~\eqref{eq:Setting:Robust:LapStrat}.
There exists constants $C_0>0$, $\eps_0>0$, $C>c>0$ such that if $\eps\in (0,\eps_0)$, $r\in (0,r_{\max})$ where $r_{\max} = c\sqrt{\beta}\eps$, $\beta\in[\eps^2,1]$ and $\Omega^\prime\subset\Omega$, with $\dist(\Omega^\prime,\partial\Omega)>C_0\beta^{-\frac12}\eps\log\l\beta^{\frac12}\eps^{-1}\r$, then $\cR_\delta(\Omega^\prime,u,D_N) \geq r$ with probability at least $1-CNe^{-cN\beta\eps^d}$ where
\begin{equation} \label{eq:MainRes:delta}
{\delta = \frac{C\eps}{\sqrt{\beta}} \log\l\frac{\sqrt{\beta}}{\eps}\r.}
\end{equation}
\end{theorem}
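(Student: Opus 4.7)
The plan is to couple the two discrete problems through the natural bijection $\vx_i\mapsto\hat{\vx}_i$ and reduce the quantity $u(\cdot;D_N)-u(\cdot;\hat{D}_N)$ to an $\Lp{\infty}$-bound for a discrete Poisson equation on $\hat{\Omega}_N$ whose source captures the weight perturbation. Write $u:=u(\cdot;D_N)$, $\hat{u}:=u(\cdot;\hat{D}_N)$ and define the lift $w:\hat{\Omega}_N\to\bbR$ by $w(\hat{\vx}_i):=u(\vx_i)$. Since the adversary preserves labels, $w=\hat{u}$ on $\hat{\Gamma}_N$, so $v:=w-\hat{u}$ vanishes on $\hat{\Gamma}_N$, while on every unlabeled node, using $\cL_N(u;\Omega_N)(\vx_i)=0$,
\[ \cL_N(v;\hat{\Omega}_N)(\hat{\vx}_i) = \sum_j\bigl(\Wb_{\eps,\hat{\vx}_i,\hat{\vx}_j}-\Wb_{\eps,\vx_i,\vx_j}\bigr)\bigl(u(\vx_i)-u(\vx_j)\bigr) =: f(\hat{\vx}_i). \]
The theorem therefore reduces to controlling $\|v\|_{\Lp{\infty}}$ for a discrete Poisson problem on $\hat{\Omega}_N$ with zero Dirichlet data on $\hat{\Gamma}_N$ and explicit source $f$.

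I would then bound $f$ pointwise through two ingredients. (i) A weight-perturbation estimate: when $\eta$ is Lipschitz, $|\Wb_{\eps,\hat{\vx}_i,\hat{\vx}_j}-\Wb_{\eps,\vx_i,\vx_j}|\leq Cr\eps^{-(d+1)}$ by the chain rule; when $\eta=\one_{[0,1]}$, weights change only for pairs whose distance crosses $\eps$, a thin annulus of volume $O(r\eps^{d-1})$ which, under (A2) together with Bernstein's inequality and a union bound over vertices, contributes at most $O(Nr\eps^{d-1})$ altered edges per vertex with probability at least $1-CNe^{-cN\beta\eps^d}$. (ii) A local Lipschitz-type bound on $u$: for $|\vx_i-\vx_j|\lesssim\eps$ with $\vx_i,\vx_j\in\Omega'$, one has $|u(\vx_i)-u(\vx_j)|\leq C|\vx_i-\vx_j|$. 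This is obtained by first identifying the formal continuum limit of~\eqref{eq:Setting:Robust:LapStrat}, a Helmholtz-type PDE with a mass term proportional to $\beta\rho$ arising from the Bernoulli-$\beta$ labelling, establishing its interior Lipschitz regularity, and transferring it back to the graph via pointwise discrete-to-continuum rates for the graph Laplacian (of the type in~\cite{calder18,singer06}) that hold only away from $\partial\Omega$.

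Combining (i) and (ii) with the degree bound $\#\{j:|\vx_i-\vx_j|\leq 2\eps\}=O(N\eps^d)$ (which also holds with probability at least $1-CNe^{-cN\beta\eps^d}$ via Bernstein and a union bound) yields a pointwise estimate $\|f\|_{\Lp{\infty}}=O(Nr)$. The desired $\Lp{\infty}$-bound on $v$ then follows by inverting the perturbed graph Laplacian on $\hat{\Omega}_N\setminus\hat{\Gamma}_N$: the key is that labels distributed at rate $\beta$ act as an effective mass term of size $\beta$ in the continuum limit, so that the relevant Green's function has $\Lp{\infty}$-norm $\sim(N\beta)^{-1}$ (with an extra logarithmic factor arising from integrating the Green's function against the source on $\Omega'$). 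This produces $\|v\|_{\Lp{\infty}}\leq Cr\beta^{-1}\log(\sqrt{\beta}/\eps)$, which under $r\leq c\sqrt{\beta}\eps$ is precisely~\eqref{eq:MainRes:delta}. The principal obstacle is ingredient (ii): it requires uniform pointwise concentration of the graph Laplacian around its continuum counterpart with high probability, and existing rates deteriorate to $O(1)$ near $\partial\Omega$, forcing the buffer $\dist(\Omega',\partial\Omega)>C_0\beta^{-1/2}\eps\log(\beta^{1/2}\eps^{-1})$; the interplay between this concentration bound and the Helmholtz-type inversion is exactly what pins down the scalings $r\leq c\sqrt{\beta}\eps$ and $\beta\geq\eps^2$.
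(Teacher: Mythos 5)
Your setup is algebraically correct and takes a genuinely different route from the paper: you perform a direct perturbation analysis, writing the difference $v=w-\hat u$ (with $w$ the lift of $u$ to $\hat\Omega_N$) as the solution of a discrete Poisson problem with zero data on $\hat\Gamma_N$ and source $f(\hat\vx_i)=\sum_j(\Wb_{\eps,\hat\vx_i,\hat\vx_j}-\Wb_{\eps,\vx_i,\vx_j})(u(\vx_i)-u(\vx_j))$, then invert via a Green's function bound. The paper instead never compares $u$ and $\hat u$ directly: it verifies that the \emph{perturbed} point cloud still satisfies the hypotheses (degree, labeled-degree, and pointwise-consistency bounds, Lemma~\ref{lem:app:AdvAss}) of the black-box convergence theorem of \cite{calder20AAA} whenever $r\leq c\sqrt{\beta}\eps$, so that both $u$ and $\hat u$ lie within $\delta$ of the \emph{same} function $\ell$, and concludes by the triangle inequality; the only stability input is a maximum-principle comparison (Lemma~\ref{lem:app:LapStab}) handling the fact that the transported labels are $\ell(\vx)$ rather than $\ell(\hat\vx)$. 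Your ingredient (i) essentially reproduces the weight-perturbation counts in Lemma~\ref{lem:app:AdvAss}, so that part is sound.

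The genuine gap is ingredient (ii), the discrete Lipschitz estimate $|u(\vx_i)-u(\vx_j)|\leq C|\vx_i-\vx_j|$ at scale $\eps$, and your whole rate hinges on it. The pointwise consistency results you invoke (\cite{calder18,singer06}) control $\la\frac{1}{N\eps^2}\cL_N\varphi-\cL\varphi\ra$ for \emph{smooth test functions} $\varphi$ and, combined with the analysis of \cite{calder20AAA}, yield only a sup-norm bound $\|u-\ell\|_{\Lp{\infty}}\leq \frac{C\eps}{\sqrt{\beta}}\log(\sqrt\beta/\eps)$; they give no control on the discrete gradient of the graph-harmonic function $u$ itself, and no mechanism is offered for transferring interior Lipschitz regularity of the continuum Helmholtz problem to a discrete $C^{0,1}$ bound on $u$ (that would require a discrete Schauder-type or barrier argument which is not in the cited literature). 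If you fall back on what the sup-norm bound actually gives for neighbors, namely $|u(\vx_i)-u(\vx_j)|\leq \frac{C\eps}{\sqrt\beta}\log(\sqrt\beta/\eps)+C\eps$, your estimate becomes $\|f\|_{\Lp{\infty}}\lesssim \frac{rN}{\sqrt\beta}\log(\sqrt\beta/\eps)$ and the Green's-function inversion yields $\|v\|_{\Lp{\infty}}\lesssim \frac{r}{\beta^{3/2}}\log(\sqrt\beta/\eps)\lesssim\frac{\eps}{\beta}\log(\sqrt\beta/\eps)$, which misses the claimed $\delta$ by a factor $\beta^{-1/2}$. Two further unresolved points: the Green's-function bound $\sim(N\beta)^{-1}$ is asserted heuristically (the random-walk hitting-time argument needs the quantitative form developed in \cite{calder20AAA}); and your source $f$ must be controlled at \emph{every} unlabeled node of $\hat\Omega_N$, including those near $\partial\Omega$ where any regularity of $u$ degenerates to $O(1)$, so the global maximum-principle inversion does not localize away from the boundary without an additional decay argument for the Green's function. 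The paper's detour through the common limit $\ell$ is precisely what sidesteps all of these regularity questions.
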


\begin{remark}
\label{rem:Tab1Bounds}
The comparison with $k$NN given in Table~\ref{tab:Theory:Summary} can be derived from the above theorem as follows.
With probability at least $1-CNe^{-cN\eps^d}$ the number of neighbors in an $\eps$ connected graph scales as $N\eps^d$ (cf Lemma~\ref{lem:app:BallBound}); hence $k\sim N\eps^d$.
Now to achieve a high probability convergence rate we require that $(N\beta\eps^d)/(\log N)$ is large, which gives a lower bound on $\eps$.
Choosing $\eps$ as small as possible then implies that $k\gg (\log N)/\beta$.
Since $\beta\sim(N-M)/N$ then we arrive at the form of the bound stated in Table~\ref{tab:Theory:Summary}. 
{Moreover, we believe the above theorem can be generalised to include the $k$NN graph construction: $\Wb_{\vx,\vy} = \Wb_{N,k,\vx,\vy}$ where $\Wb_{N,k,\vx,\vy} = \frac{N}{k}\one_{\vx\sim_k\vy}$ and $\one_{\vx\sim_k\vy}=1$ if $\vx$ is a $k$NN of $\vy$ (or vice versa) and $\one_{\vx\sim_k\vy}=0$ otherwise.
Formally, we conjecture that if one substitutes $\eps = \l\frac{k}{N}\r^{\frac{1}{d}}$ then Theorem~\ref{thm:Setting:Robust:RobRadCorDat} continues to hold with $k$NN weights, i.e. $\cR_\delta(\Omega^\prime,u,D_N) \geq r$ with probability at least $1-CNe^{-ck\beta}$ where
{\[ \delta = \frac{C k^{\frac{1}{d}}}{N^{\frac{1}{d}}\sqrt{\beta}} \log\l\frac{N^{\frac{1}{d}}\sqrt{\beta}}{k^{\frac{1}{d}}}\r. \] }}
\end{remark}

\begin{remark}
{Theorem~\ref{thm:Setting:Robust:RobRadCorDat} shows the $\delta$-robustness of GL-based regression up to { $r_{\max}= c\sqrt{\beta}\eps$}.
We can restate this in terms of the number of labels, {$N-M$}, by using the formal scaling {$\beta\sim{(N-M)}/{N}$}, so that {$r_{\max}= c\eps\sqrt{{(N-M)}/{N}}$}. In particular, the number of labels increases the $\delta$-robustness following a square-root law.}
\end{remark}

Typically, one uses Laplacian regularisation for labeling by projecting the solution $u$ of~\eqref{eq:Setting:Variational:Var} onto the set of labels.
For simplicity we consider the binary  classification problem, that is we seek a function $v:\Omega_N\to\{0,1\}$ where $0$ and $1$ are the two classes.
As is common, we define
\[{ v(\vx;D_N) = \lb \begin{array}{ll} 1 & \text{if } u(\vx;D_N)\geq\frac12 \\ 0 & \text{else.} \end{array} \rd} \]

\begin{corollary}
\label{cor:MainRes:LapClass}
Let $\delta$ be given by~\eqref{eq:MainRes:delta}. 
In addition to the assumptions of Theorem~\ref{thm:Setting:Robust:RobRadCorDat} we assume
\begin{equation} \label{eq:MainRes:ExAss}
{\sup_{\xi>0} \frac{1}{\xi} \Vol\l\lb \vx \,:\, \frac12 - \xi \leq \ell(\vx) \leq \frac12 + \xi \rb\r \leq A,}
\end{equation}
for some $A>0$, then there exists $\Omega_\delta\subset \Omega$ such that $\mu(\Omega\setminus\Omega_\delta)\leq C\delta$ and $\cR_0(\Omega_\delta,v,D_N)\geq r$ with probability at least $1-CNe^{-cN\beta\eps^d}$.
\end{corollary}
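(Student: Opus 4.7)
The plan is to convert the regression-robustness bound of Theorem~\ref{thm:Setting:Robust:RobRadCorDat} into a classification-robustness bound by carving out a thin tubular neighbourhood of the decision boundary $\{\ell = \tfrac{1}{2}\}$, on which $u(\cdot;D_N)$ could be $\delta$-close to $\tfrac{1}{2}$ and therefore flip under perturbation. Concretely, fix any admissible perturbation $\hat{D}_N$ with $|\hat{\vx}_i - \vx_i| < r$; Theorem~\ref{thm:Setting:Robust:RobRadCorDat} gives $|u(\vx;D_N) - u(\hat{\vx};\hat{D}_N)| \leq \delta$ for all $\vx \in \Omega_N$ on an event of probability $\geq 1 - CNe^{-cN\beta\eps^d}$. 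On that event, a sign change $v(\vx;D_N) \neq v(\hat{\vx};\hat{D}_N)$ forces $u(\vx;D_N)$ and $u(\hat{\vx};\hat{D}_N)$ to straddle $\tfrac{1}{2}$, and hence $|u(\vx;D_N) - \tfrac{1}{2}| \leq \delta$. Thus it suffices to produce a (essentially deterministic) set $\Omega_\delta \subset \Omega$ with $\mu(\Omega \setminus \Omega_\delta) \leq C\delta$ on which $|u(\vx;D_N) - \tfrac{1}{2}| > \delta$.

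To identify such a set I would extract from the proof of Theorem~\ref{thm:Setting:Robust:RobRadCorDat} the accompanying pointwise comparison with the continuum limit, namely $|u(\vx;D_N) - \ell(\vx)| \leq C_1\delta$ for every $\vx \in \Omega_N \cap \Omega'$ on the same high-probability event. This is the standard output of the discrete-to-continuum estimates between $\cL_N$ and its continuum counterpart combined with a maximum-principle / Poisson-type bound: with $\beta$ bounded below and $\ell$ Lipschitz, the continuum minimiser of the Dirichlet energy with labels specified on an iid dense sample collapses to $\ell$, at precisely the $\eps/\sqrt\beta \cdot \log(\sqrt\beta/\eps)$ rate entering $\delta$ in~\eqref{eq:MainRes:delta}. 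Granting this estimate, I set
\[ \Omega_\delta := \lb \vx \in \Omega' \,:\, |\ell(\vx) - \tfrac{1}{2}| > (C_1+1)\delta \rb, \]
so that by the triangle inequality any $\vx \in \Omega_N \cap \Omega_\delta$ satisfies $|u(\vx;D_N) - \tfrac{1}{2}| > \delta$, which by the first paragraph gives $\cR_0(\Omega_\delta,v,D_N) \geq r$. For the measure bound, (A2) together with $\rho \in \Cr{2}(\Omega)$ and the boundedness of $\Omega$ yield $\rho \leq \rho_{\max} < \infty$, so hypothesis~\eqref{eq:MainRes:ExAss} gives $\mu(\{|\ell - \tfrac{1}{2}| \leq (C_1+1)\delta\}) \leq \rho_{\max} A (C_1+1)\delta$. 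Combining this with the $O(\delta)$ measure lost by passing from $\Omega$ to $\Omega'$ (the distance-to-boundary condition imposed in Theorem~\ref{thm:Setting:Robust:RobRadCorDat} is itself of order $\delta$ up to constants) delivers $\mu(\Omega \setminus \Omega_\delta) \leq C\delta$, as required.

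The technical crux is the pointwise bound $|u(\vx;D_N) - \ell(\vx)| \leq C_1\delta$ invoked in the second paragraph. Although it is the natural byproduct of the continuum-limit machinery supporting Theorem~\ref{thm:Setting:Robust:RobRadCorDat}, constants must be tracked carefully so that the approximation error enters with exactly the scaling appearing in \eqref{eq:MainRes:delta}, and one must contend with the known degradation of the continuum comparison near $\partial\Omega$ — which is the very reason for restricting attention to $\Omega'$. If extracting such a pointwise bound from the proof of Theorem~\ref{thm:Setting:Robust:RobRadCorDat} proves awkward, a fallback would be to bound $\mu\l\{\vx \in \Omega : |u(\vx;D_N) - \tfrac12| \leq \delta\}\r$ directly by combining the variational characterisation of $u$ with the Lipschitz structure of $\ell$ on $\Omega'$; either way, this is where the genuine work sits.
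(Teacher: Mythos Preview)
Your proposal is correct and follows essentially the same route as the paper: define $\Omega_\delta$ as the set where $\ell$ is at least a constant multiple of $\delta$ away from $\tfrac12$, use the pointwise bound $|u(\vx;D_N)-\ell(\vx)|\leq C_1\delta$ together with the $\delta$-robustness of $u$ from Theorem~\ref{thm:Setting:Robust:RobRadCorDat} to preclude a sign flip on $\Omega_\delta$, and control $\mu(\Omega\setminus\Omega_\delta)$ via~\eqref{eq:MainRes:ExAss}. The pointwise estimate you flag as the ``technical crux'' is precisely what the paper extracts (it is the content of Theorem~\ref{thm:MainRes:LargeDataLimit} applied to the unperturbed data, coming from~\cite[Theorem~3.11]{calder20AAA}), so there is no gap there; if anything, you are more careful than the paper in accounting for the boundary layer $\Omega\setminus\Omega'$ and in converting the $\Vol$ bound of~\eqref{eq:MainRes:ExAss} into a $\mu$ bound via $\rho_{\max}$.
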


The classification decision boundary is $\{\vx\,:\,u(\vx)=\frac12\}$, which (c.f. Theorem~\ref{thm:MainRes:LargeDataLimit}) is approximately the set $\{\vx\,:\,\ell(\vx)=\frac12\}$.
The additional assumption in equation~\eqref{eq:MainRes:ExAss} is in order to ensure that the set where $\ell$ is close to $\frac12$ can be controlled.
When $\ell(\vx)$ is sufficiently far from $\frac12$ then we obtain $v(\vx;D_N) = v(\hat{\vx};\hat{D}_N)$.


The proof of Theorem~\ref{thm:Setting:Robust:RobRadCorDat} and Corollary~\ref{cor:MainRes:LapClass} is given in Appendix~\ref{Appendix:proof}, and relies on a quantitative bound between solutions of~\eqref{eq:Setting:Variational:Var} and the true function $\ell$. In particular, if the data points are close to being iid then we can use the result in~\cite{calder20AAA} to infer a high probability bound between $u(\cdot;D_N)$ and $\ell$.
Our proof shows that if $r<r_{\max}$ then we can consider the perturbed data points $\hat{\Omega}_N=\{\hat{\vx}_i\}_{i=1}^N$ to be close to iid and hence apply the result to infer a high probability bound between $u(\cdot;\hat{D}_N)$ and $\ell$. 
In fact, we can show the following result, and the proof is also given in  Appendix~\ref{Appendix:proof}.

\begin{theorem}
\label{thm:MainRes:LargeDataLimit}
Under the assumptions of Theorem~\ref{thm:Setting:Robust:RobRadCorDat} we have that $\max_{\hat{\vx}\in\hat{\Omega}_N}|u(\hat{\vx};\hat{D}_N) - \ell(\vx)| \leq \delta$ with probability at least $1-CNe^{-cN\beta\eps^d}$ where $\delta$ is given by~\eqref{eq:MainRes:delta}.
\end{theorem}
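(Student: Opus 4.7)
The plan is to reduce the claim to the known ``clean data'' convergence estimate between the graph Laplacian minimiser and the regression function $\ell$, applied to the perturbed dataset $\hat{D}_N$, and then to transport the point of evaluation from $\hat{\vx}$ back to $\vx$ using the Lipschitz regularity of $\ell$.

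First, I would invoke the large-data bound of \cite{calder20AAA}: when the feature vectors are drawn iid from $\mu$ satisfying (A1)--(A2), the labels come from a Lipschitz ground truth as in (A3), and the weights are of the geometric form (A4), one has
\[
\max_{\vy\in\Omega_N\cap\Omega''}\la u(\vy;D_N) - \ell(\vy)\ra \leq \frac{C\eps}{\sqrt{\beta}}\log\!\l\frac{\sqrt{\beta}}{\eps}\r
\]
with probability at least $1-CNe^{-cN\beta\eps^d}$, provided $\Omega''\subset \Omega$ is at distance $\gtrsim \beta^{-1/2}\eps\log(\beta^{1/2}\eps^{-1})$ from $\partial\Omega$ and $\beta\in[\eps^2,1]$. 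This is exactly the form of the RHS appearing in~\eqref{eq:MainRes:delta}.

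Second, the subtle point is that $\hat{\Omega}_N$ is \emph{not} iid from $\mu$: it is an adversarial perturbation of an iid sample. The key observation is that since $|\hat{\vx}_i-\vx_i|<r$ with $r<r_{\max}=c\sqrt{\beta}\eps$, the adversary can only shift each point by a length much smaller than the graph bandwidth $\eps$. Consequently, every edge weight is perturbed by a controlled amount: under the Lipschitz alternative in (A4) we get $|\Wb_{\vx_i,\vx_j}-\Wb_{\hat{\vx}_i,\hat{\vx}_j}|\leq C r/\eps^{d+1}$ pointwise, while under the indicator alternative we can bound the number of edges whose status changes between the two graphs by a Vapnik--Chervonenkis/union-bound argument (as in the proof of Theorem~\ref{thm:Setting:Robust:RobRadCorDat}, where the same dichotomy is flagged). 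In either case this yields an empirical measure $\hat{\mu}_N := \frac1N\sum_i \delta_{\hat{\vx}_i}$ that is $O(r)$-close in the $\infty$-Wasserstein sense to the iid empirical measure $\mu_N$, which is in turn $O(\eps)$-close to $\mu$ with the stated probability. Since Calder's bound is driven by such a transport/weight stability, it continues to apply to $\hat{D}_N$, giving
\[
\max_{\hat{\vy}\in\hat{\Omega}_N\cap\Omega'}\la u(\hat{\vy};\hat{D}_N) - \ell(\hat{\vy})\ra \leq \frac{C\eps}{\sqrt{\beta}}\log\!\l\frac{\sqrt{\beta}}{\eps}\r,
\]
with the enlarged boundary layer $\Omega'$ absorbing the $O(r)$ shift.

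Third, I would pass from $\ell(\hat{\vy})$ to $\ell(\vy)$ using the Lipschitz property of $\ell$: $|\ell(\hat{\vy})-\ell(\vy)|\leq \Lip(\ell)\cdot r$, which is of order $r\leq r_{\max}=c\sqrt{\beta}\eps\ll \eps/\sqrt{\beta}$ and hence is dominated by $\delta$. Combining with the previous display yields $|u(\hat{\vx};\hat{D}_N)-\ell(\vx)|\leq \delta$ uniformly over $\hat{\vx}\in\hat{\Omega}_N$, which is the desired statement.

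The main obstacle is the second step: making rigorous that Calder's iid bound survives an adversarial perturbation of size $r$. The cleanest way I can see is to quantify the stability of the discrete-to-continuum estimate in the weights and in the node locations (both of which are explicit in the arguments of \cite{calder18,calder20AAA}), and verify that the smallness $r\ll \eps$ forced by $r<r_{\max}$ leaves all error terms of the same order as the clean case. The dichotomy in (A4) (Lipschitz $\eta$ versus indicator $\eta$) must be handled separately, since in the indicator case the map $\vx\mapsto\Wb_{\vx,\vy}$ is not continuous and one instead has to bound the number of ``flipped'' edges.
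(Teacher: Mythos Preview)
Your overall strategy matches the paper's: verify that the structural hypotheses behind the Calder--Slep\v{c}ev--Thorpe estimate (Theorem~\ref{thm:app:Conv} in the appendix) survive an $r$-perturbation of the nodes --- this is exactly Lemma~\ref{lem:app:AdvAss}, and your Lipschitz/indicator dichotomy for $\eta$ is how it is carried out --- and then absorb the shift $|\ell(\hat{\vx})-\ell(\vx)|\le\Lip(\ell)\,r$ at the end.

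There is, however, a genuine gap in your second step. You claim that Calder's bound applied to $\hat{D}_N$ yields $|u(\hat{\vy};\hat{D}_N)-\ell(\hat{\vy})|\le\delta$. But Theorem~\ref{thm:app:Conv} requires the boundary data to be the restriction of the fixed Lipschitz function $\ell$ to the labeled set, i.e.\ $u'(\hat{\vx})=\ell(\hat{\vx})$ on $\hat{\Gamma}_N$. The function $\hat{u}=u(\cdot;\hat{D}_N)$ does \emph{not} satisfy this: its boundary values are $\hat{\ell}_N(\hat{\vx})=\ell(\vx)$, and $\hat{\ell}_N$ is not the restriction of any single Lipschitz function on $\Omega$, since the adversary's correspondence $\hat{\vx}\mapsto\vx$ is arbitrary within balls of radius $r$. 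The paper flags this explicitly and fixes it by introducing an auxiliary $\hat{w}$ that solves $\hat{\cL}_N\hat{w}=0$ on $\hat{\Omega}_N\setminus\hat{\Gamma}_N$ with the ``correct'' boundary values $\hat{w}(\hat{\vx})=\ell(\hat{\vx})$; Theorem~\ref{thm:app:Conv} then bounds $|\hat{w}(\hat{\vx})-\ell(\hat{\vx})|$, and a one-line maximum-principle argument (Lemma~\ref{lem:app:LapStab}) gives $\max_{\hat{\vx}\in\hat{\Omega}_N}|\hat{u}(\hat{\vx})-\hat{w}(\hat{\vx})|\le\Lip(\ell)\,r$. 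Without this intermediate leg, the displayed inequality in your step two is bounding the wrong function; the repair is short, but it is a step you have not written.
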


Theorem~\ref{thm:Setting:Robust:RobRadCorDat} quantifies the robustness of GL regression. More precisely, it establishes an upper bound of the maximum allowed adversarial perturbation under which the GL solution, $u$, of \eqref{eq:Setting:Robust:LapStrat} is close to the ground truth label function $\ell(\vx)$ with high probability. Furthermore, Theorem~\ref{thm:MainRes:LargeDataLimit} implies that after an adversarial attack the solutions to GL regression remain close to the true solution with a quantifiable bound. In Corollary~\ref{cor:MainRes:LapClass} we infer the robustness of GL classification.

\section{Computational Complexity Analysis}\label{sec:Complexity}
We ignore the common pre-processing time for both $k$NN and GL.
The computational complexity of $k$NN is dominated by nearest neighbor searching, 
and the total computational complexity of searching for the nearest neighbors for all unlabeled points is $O(kM\log (N-M) )$ 
\citep{muja2014scalable}.
For the GL classifier, if we use the top $k (k\ll N)$-nearest neighbors, 
the total computational complexity for constructing the weight matrix would be $O(kN\log N)$. The additional computational complexity of GL comes from solving a sparse linear system of the size $N\times N$, which can be solved by using the conjugate gradient method in $O(Nk\sqrt{\kappa})$ time, with $\kappa$ being the condition number of $\Wb_{N}$ \citep{shewchuk1994introduction}. Hence, the total computational complexity of the GL classifier is $O(kN\log N + Nk\sqrt{\kappa})$. Table~\ref{tab:CPU:time-memory} lists a comparison of CPU time and peak RAM consumption for MNIST 1v7 classification with different numbers of nearest neighbors ($k$) being used, and all the experiments are done on an Intel(R) Xeon(R) CPU E5-P2690 0 @ 2.90GHz. We provide the detailed experimental settings in 
Section~\ref{sec:Exp}. 
GL is slightly more computationally expensive than $k$NN, but for the most used $k$ the computational overhead is not an obstacle. Moreover, GL classifier can achieve at least comparable results to $k$NN with a much smaller $k$.

\begin{table}[t!]
\fontsize{9.0}{9.0}\selectfont
\centering
\caption{CPU time and memory cost of GL vs. $k$NN classifiers for MNIST 1v7 classification (without attack), and the dataset is described in Subsection~\ref{subsec:datasets}. 
}
\label{tab:CPU:time-memory}
\centering
\begin{tabular}{c|cccc|cccc}
\toprule
Classifier & \multicolumn{4}{c}{$k$NN} & \multicolumn{4}{c}{GL} \\
\midrule
Number of Nearest Neighbors (k) & 5 & 10 & 15 & 20 & 5 & 10 & 15 & 20 \\
\midrule
Peak Memory (MB) &\ \ \  382\ \ \  &\ \ \  382\ \ \  &\ \ \  382\ \ \  &\ \ \  382\ \ \  &\ \ \  439\ \ \  &\ \ \  445\ \ \  &\ \ \  448\ \ \  &\ \ \  449\ \ \ \\
CPU Time (second) & 2.55 & 2.62 & 2.70 & 2.89 & 2.99 & 3.27 & 3.89 & 5.07\\
\bottomrule
\end{tabular}
\end{table}

\section{Experiments} \label{sec:Exp}
We consider performance of GL classifier and its enhanced variants in classifying different datasets under adversarial attacks to numerically validate: 1) our certification results in Table~\ref{tab:Theory:Summary}, 2) the efficacy of adversarial defenses, and 3) the advantages of the GL classifier over the $k$NN classifier. In all experiments below, we construct $\Wb_N$ in the same way as that used in \cite{shi17}.

\subsection{Datasets, Classifiers, \& Attacks}\label{subsec:datasets}
We use the {same} benchmark datasets, adversarial attacks, and defenses as {those} used in \cite{wang2017analyzing}, where the authors analyzed robustness of $k$NN. Below we give a brief description of these baselines.
\paragraph{Datasets.} We consider three benchmarks: Halfmoon, MNIST 1v7, and Abalone \citep{Dua:2019}. Halfmoon is a randomly generated 2D synthetic dataset in which we randomly generate 2000 points and 1000 points, with a standard deviation of 0.2, as the training and test set, respectively. For the MNIST 1v7, we randomly select 500 images for each of digit 1 and 7 to form the training set and the same size for the test set. For Abalone, we use 500/100 samples for training/test. We also generate a validation set for each of the three benchmarks with the same size as the test set.

\paragraph{Adversarial Attacks.} 
We apply the same white-box (WB) and black-box (BB) attacks as that used in \cite{wang2017analyzing}. In particular, we consider the following two WB attacks:
\begin{itemize}[leftmargin=*]
\item {\bf Direct attack (DA).} Given a perturbation of $\ell_2$ magnitude $r$ and the training dataset $S$ (which might be an augmentation or pruning of $D_N$), the adversarial example of $\vx$ is $\vx_{\rm adv} = \vx + r(\vx-\vx')/|\vx-\vx'|$, where $\vx'$ is the nearest neighbor of $\vx$ in $S$ that is labeled differently from $\vx$. {In our experiments, we vary the value of $r$ to change the maximum $\ell_2$ norm of the adversarial perturbation.}

\item {\bf Kernel substitution attack (KSA).} {KSA} attacks a surrogate kernel classifier, trained on the same training set as that of GL classifier, using the fast gradient sign method {with the target maximum $\ell_2$ norm of the adversarial perturbation, see}  \citep{Goodfellow:2014AdversarialTraining}.
\end{itemize}

In {the} BB attacks, we adopt three substitute classifiers: kernel classifier (Kernel), logistic regression (LR), and neural net (NN). The adversary trains these substitute models using the method of \cite{papernot2017practical} and generate adversarial examples by attacking surrogate classifiers. We use the same setting as that used in \cite{wang2017analyzing} for both WB and BB attacks.

\subsection{{An Approximated Numerical Robust Certification for the GL Classifier}}
According to Table~\ref{tab:Theory:Summary}, GL classifier is robust when the number of nearest neighbors that used to construct $\Wb_N$ satisfies {$k\geq C(N\log N)/(N-M)$} and {$r\leq c \sqrt{(N-M)/N}\sqrt[d]{(\log N)/(N-M)}$}, where $c$ and $C$ are two constants that are independent of $N$ and $M$; we will numerical verify this in this subsection. In particular, we first perform a grid search with a small amount of labeled data, say one-fifth of the whole labeled data, for each benchmark to estimate the parameters $c$ and $C$. Then, we apply the obtained $c$ and $C$ to compute the smallest number of nearest neighbors $k$ and the largest perturbation $r$ such that the GL classifier is guaranteed to be robust.
Finally, we apply the above five attacks to the GL classifier with twenty different values of maximum perturbation uniformly sampled from $[0, r]$.
For each attack value we do 20 independent runs. 
Table~\ref{tab:certify-on-Abalone}
show{s} the accuracies of GL classifier on three benchmarks and the theoretical values (assisted with grid search on small amount of labeled data) of $k$ and $r$ with different amount of labeled data $N-M$. These results show that for any benchmark with a given amount of labeled data and $k$ that is used to construct $\Wb_N$, the GL classifier's accuracy is consistent under different adversarial attacks using different attack strengths, and these results confirm our theoretical robustness of GL classifier.

\renewcommand\arraystretch{1.3}
\setlength{\tabcolsep}{4pt}
\begin{table*}[t!]
\caption{Accuracies of GL classifier for Abalone, Halfmoon, and MNIST classification with different number of labeled data $N-M$. For different $N-M$, the GL classifier is robust when the adversarial attack does not exceed $r$ provided the number of nearest neighbors that used to construct $\Wb_N$ is at least $k$. (Unit: \%)}\label{tab:certify-on-Abalone}
\centering
\fontsize{9.5pt}{0.5em}\selectfont
\begin{tabular}{ccccccccc}
\hline
$N-M$ & $k$ & $r$ & No Attack & WB-DA & WB-KSA & BB-LR & BB-Kernel  & BB-NN \\
\hline
\multicolumn{9}{c}{Abalone}\\
\hline
100 & 18 & 0.0186 & $64.7\pm 0.48$ & $64.6\pm 0.53$ & $64.2\pm 1.61$ & $64.2\pm 0.66$ & $64.5\pm 0.87$ & $64.3\pm 0.43$\\
200 & 14 & 0.0196 & $69.6\pm 0.72$ & $69.2\pm 0.92$ &$69.8\pm 0.85$ & $69.7\pm 0.93$ & $68.6\pm 1.55$ & $68.9\pm 0.86$\\
300 & 13 & 0.0198 & $67.5\pm 0.36$& $67.1\pm 0.51$ & $67.1\pm 1.36$ & $66.6\pm 0.39$ & $68.1\pm 1.18$ & $67.1\pm 0.82$\\
400 & 13 & 0.0197 & $69.5\pm 0.87$ & $69.0\pm 0.93$ & $69.1\pm 1.50$ & $69.8\pm 1.01$ & $68.8\pm 1.72$ & $69.4\pm 1.13$\\
500 & 13 & 0.0196 & $70.2\pm 0.99$ & $70.1\pm 1.09$ & $69.8\pm 0.91$ & $69.8\pm 0.82$ & $69.4\pm 0.88$ & $69.8\pm 1.21$\\
\hline
\multicolumn{9}{c}{Halfmoon}\\
\hline
400 & 22  & 0.0216 & $97.0\pm 0.08$ & $96.8\pm 0.05$ & $96.9\pm 0.12$ & $96.6\pm 0.07$ & $96.8\pm 0.09$ & $96.6\pm 0.11$\\
800 & 15  & 0.0195 &$96.6\pm 0.15$ & $96.7\pm 0.17$ & $96.4\pm 0.08$ & $96.7\pm 0.08$ & $96.5\pm 0.10$ & $96.6\pm 0.14$\\
1200 & 13 & 0.0177 &$96.8\pm 0.06$ & $96.7\pm 0.11$ & $96.8\pm 0.05$ & $96.7\pm 0.11$ & $97.0\pm 0.16$ & $97.0\pm 0.06$\\
1600 & 11 & 0.0165 &$96.7\pm 0.07$ & $96.5\pm 0.09$ & $96.7\pm 0.16$ & $96.9\pm 0.08$ & $96.7\pm 0.18$ & $96.7\pm 0.08$\\
2000 & 10 & 0.0156 &$96.6\pm 0.10$ & $96.2\pm 0.07$ & $96.6\pm 0.10$ & $96.4\pm 0.15$ & $96.5\pm 0.08$ & $96.6\pm 0.16$\\
\hline
\multicolumn{9}{c}{MNIST}\\
\hline
200 & 19 & 0.407 &$98.9\pm 0.11$ & $98.7\pm 0.12$ & $98.8\pm 0.18$ & $98.8\pm 0.09$ & $98.8\pm 0.05$ & $98.8\pm 0.11$\\
400 & 11& 0.532 &$98.7\pm 0.11$ & $98.7\pm 0.15$ & $98.6\pm 0.13$ & $98.7\pm 0.14$ & $98.6\pm 0.12$ & $98.7\pm 0.05$\\
600 & 9 & 0.609 &$98.8\pm 0.09$ & $98.8\pm 0.11$ & $98.9\pm 0.10$ & $98.8\pm 0.14$ & $98.9\pm 0.13$ & $98.8\pm 0.11$\\
800 & 7 & 0.663 &$99.0\pm 0.13$ & $98.7\pm 0.10$ & $99.0\pm 0.12$ & $98.9\pm 0.10$ & $98.8\pm 0.05$ & $98.8\pm 0.12$\\
1000 & 7& 0.703 &$98.9\pm 0.13$ & $98.9\pm 0.14$ & $98.7\pm 0.11$ & $98.7\pm 0.11$ & $98.9\pm 0.13$ & $98.9\pm 0.09$\\
\hline
\end{tabular}
\end{table*}

\subsection{The Effects of Adversarial Defenses}\label{subsec:robust-acc}
\cite{wang2017analyzing} propose to enhance the robustness of $k$NN by i) augmenting the training set with $\vx_{\rm adv}$ generated from WB-DA, and resulting in the classifier ATNN; ii) augmenting the training data with adversarial examples crafted by all the above attacks and leads to the classifier ATNN-ALL; iii) pruning the training set such that the pruned training set is $a$-separated \cite{wang2017analyzing} and gives the RobustNN classifier. In the above $k$NN-based classifiers, if we replace $k$NN by GL, we get four more classifiers: GL, ATGL, ATGL-ALL, RobustGL. For GL-based classifiers 
we adapt the same setting as that used in \citep{wang2017analyzing} for $k$NN-based classifiers, except the number of nearest neighbors, $k$, in constructing $\Wb_N$. 
We use the values of $k$ in Table~\ref{tab:certify-on-Abalone} for the whole labeled training set, i.e., 13, 10, and $7$, respectively, for Abalone, Halfmoon, and MNIST.

Figure~\ref{fig:acc:wb-wb-kerbel} shows the results ($20$ runs) of the above eight classifiers for three datasets classification under the WB DA and KSA attacks with different perturbations $r$. We provide the results of these classifiers under BB attacks in Appendix~\ref{Appendix:RobustAcc:BlackBox:Attack}. We vary $r$ from $0$ to $0.04$, to $0.2$, and to $4$ for these three datasets classification, respectively. For these three datasets, GL-based classifiers are always more accurate than $k$NN-based classifiers with or without defense under different WB attacks. Furthermore, as $r$ increases, the improvement becomes more significant. The training data pruning-based adversarial defense remarkably improves $k$NN-based classifiers in all cases (Fig.~\ref{fig:acc:wb-wb-kerbel} (a), (b), (c), (d), and (f){) with the exception of} the Halfmoon {dataset} under the KSA attack, and improves GL-based classifiers when classifying the Abalone dataset under the 
KSA attack with large $r$
(Fig.~\ref{fig:acc:wb-wb-kerbel} (d)). Similarly, both data augmentation methods 
can usually improve classifiers' robustness, especially for MNIST 1v7.

\begin{figure}[!ht]
\centering
\begin{tabular}{ccc}
\hskip -0.65cm\includegraphics[clip, trim=0cm 0cm 0cm 0cm,width=0.31\columnwidth]{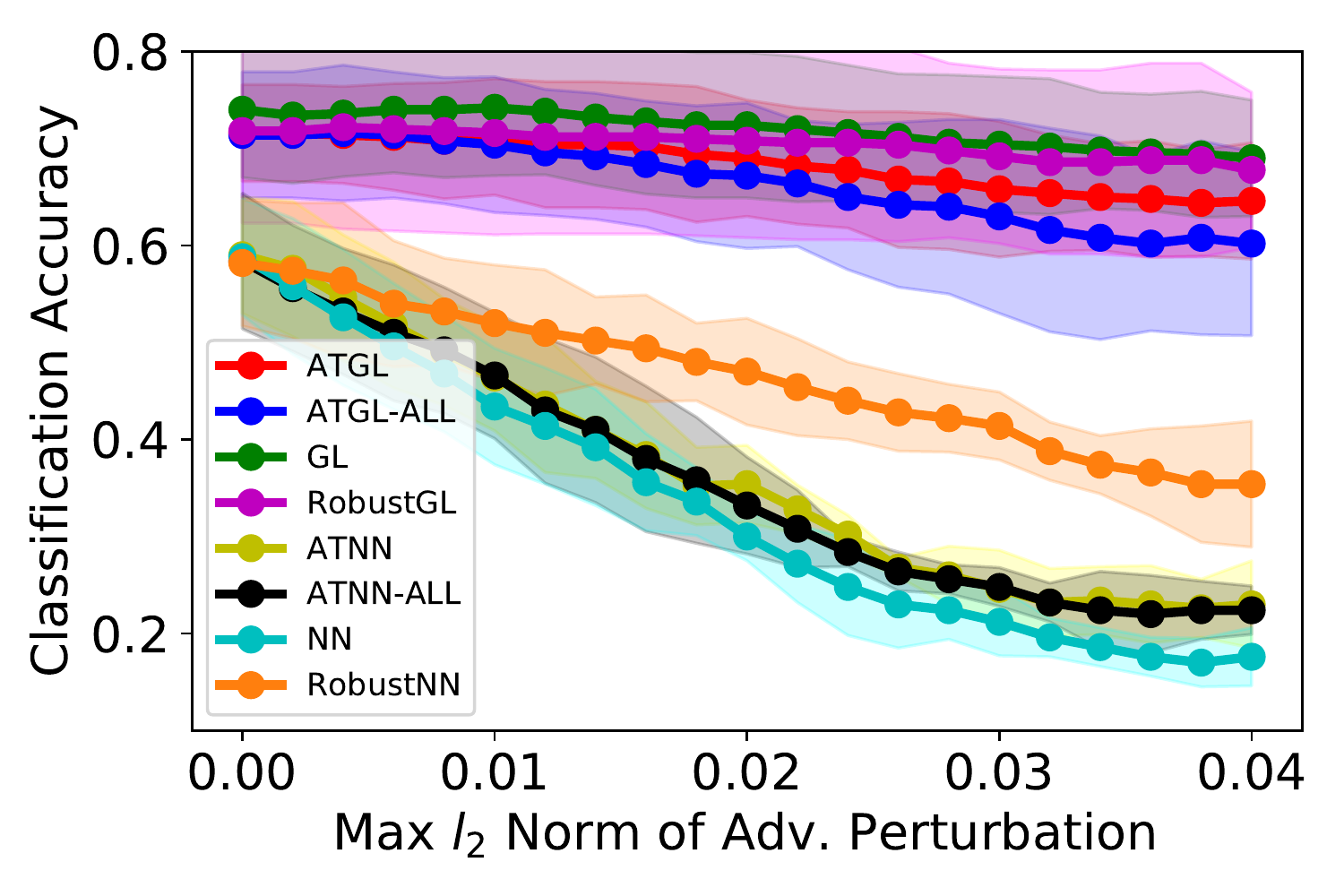}&
\hskip -0.4cm\includegraphics[clip, trim=0cm 0cm 0cm 0cm,width=0.31\columnwidth]{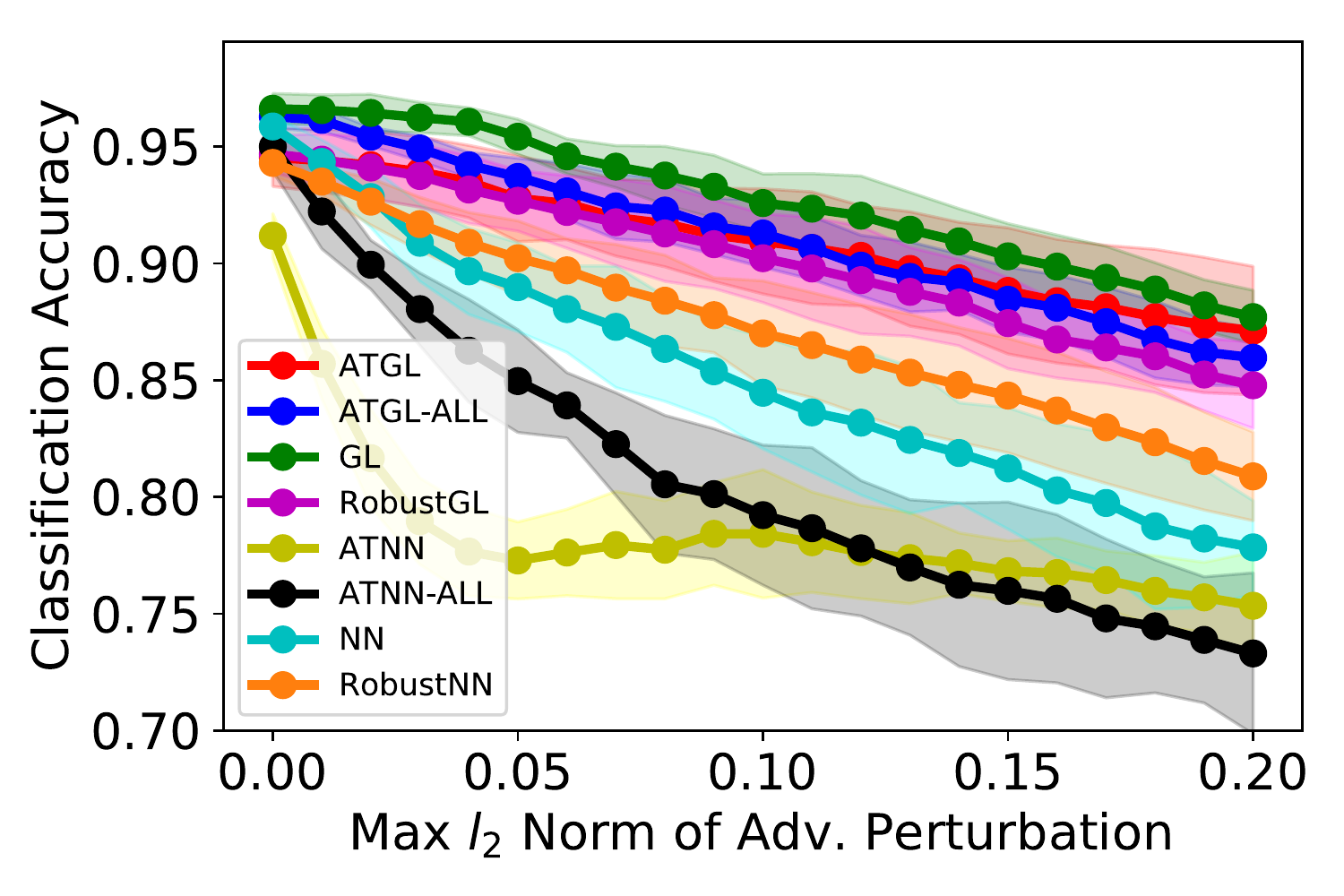}&
\hskip -0.4cm\includegraphics[clip, trim=0cm 0cm 0cm 0cm,width=0.31\columnwidth]{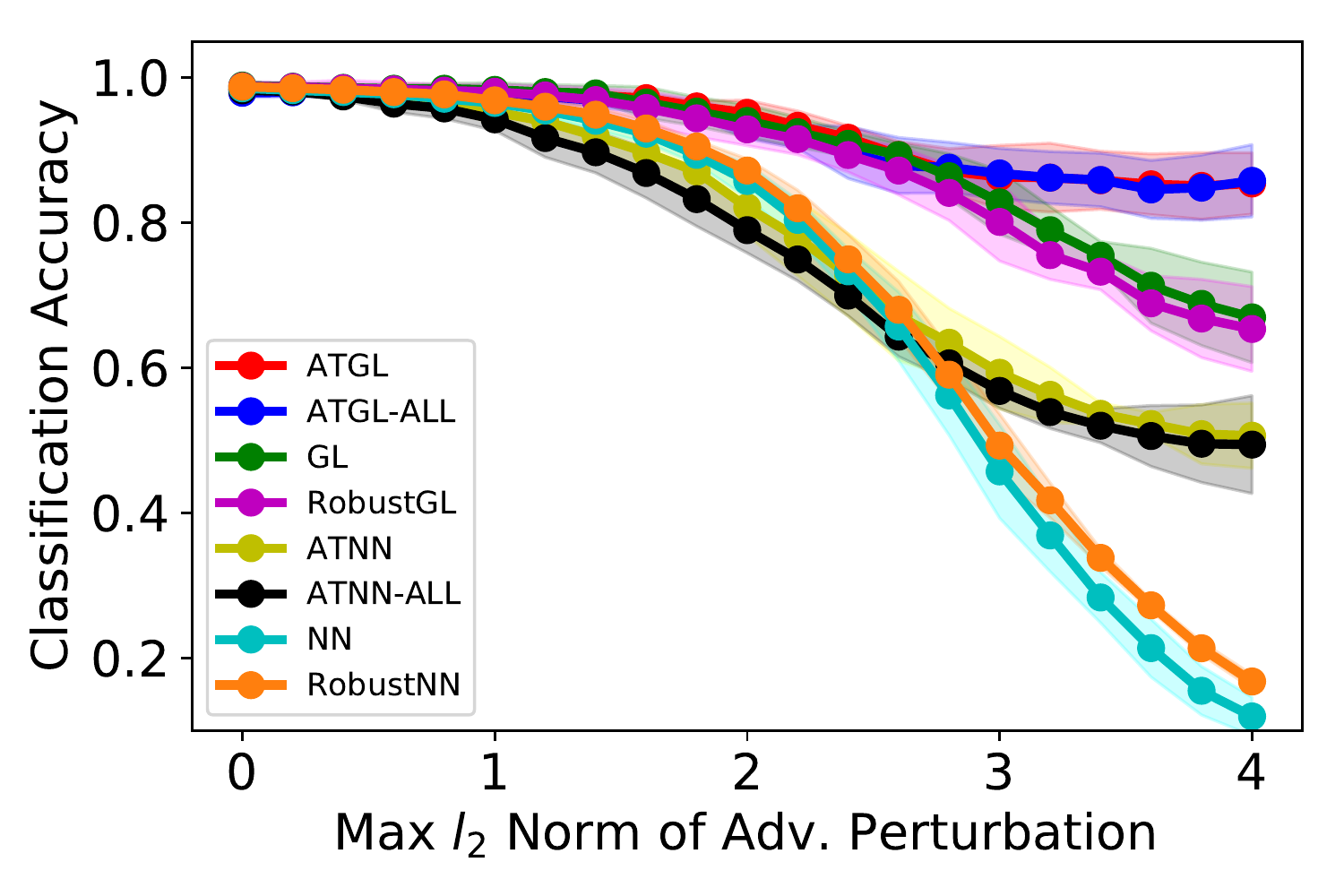}\\
{\footnotesize (a) Abalone, DA} & {\footnotesize (b) Halfmoon, DA} & {\footnotesize (c) MNIST 1v7, DA} \\
\hskip -0.65cm\includegraphics[clip, trim=0cm 0cm 0cm 0cm,width=0.31\columnwidth]{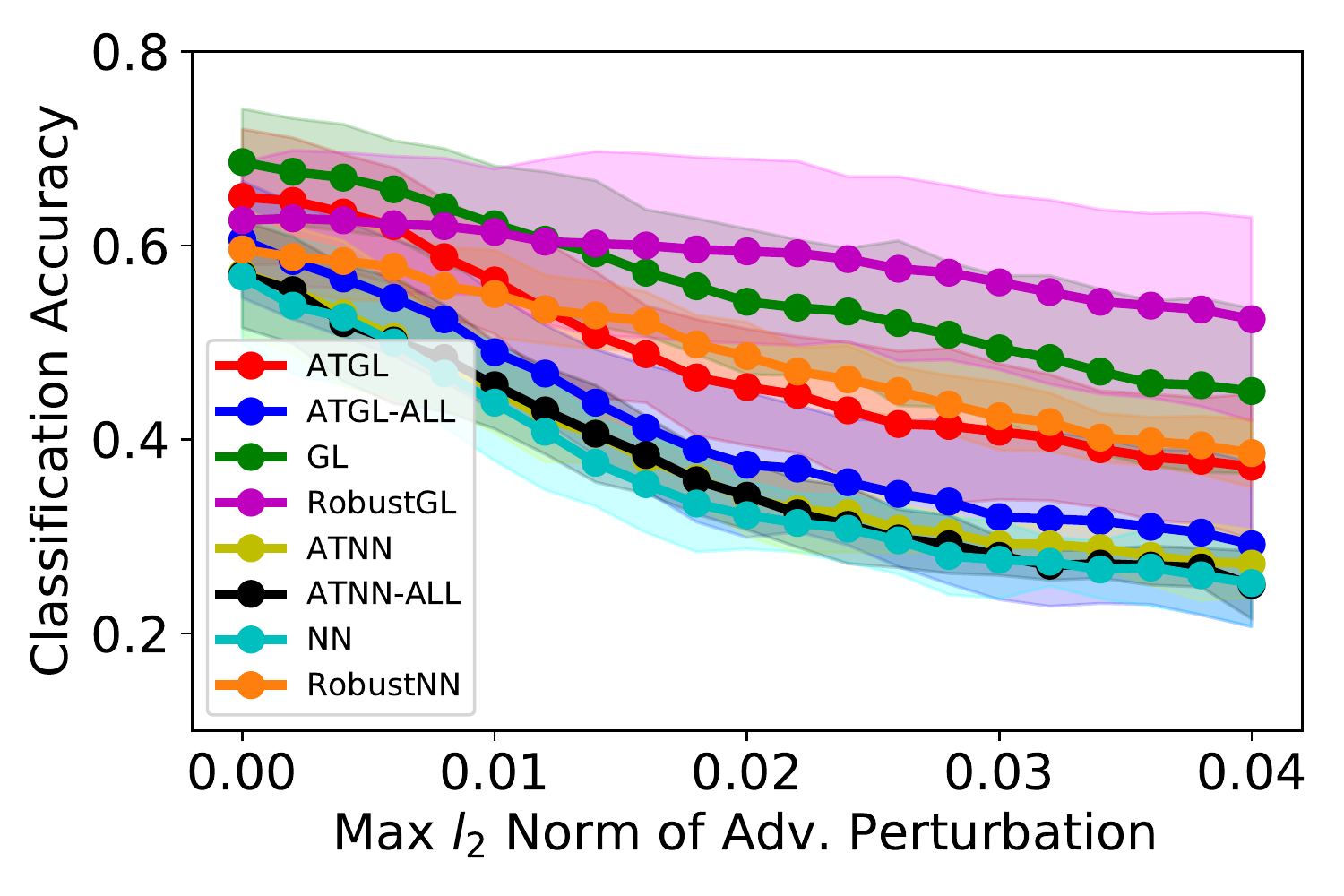}&
\hskip -0.4cm\includegraphics[clip, trim=0cm 0cm 0cm 0cm,width=0.31\columnwidth]{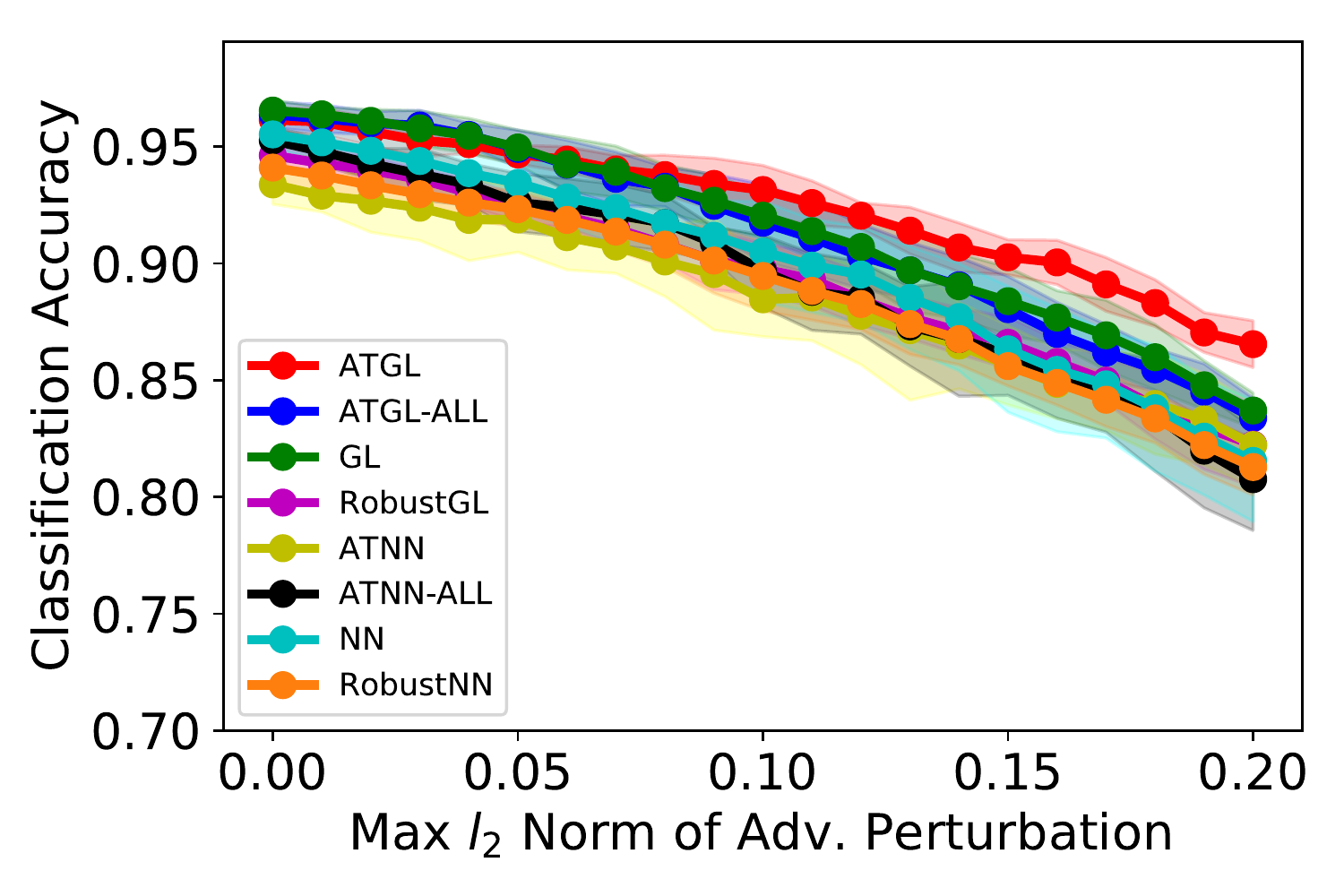}&
\hskip -0.4cm\includegraphics[clip, trim=0cm 0cm 0cm 0cm,width=0.31\columnwidth]{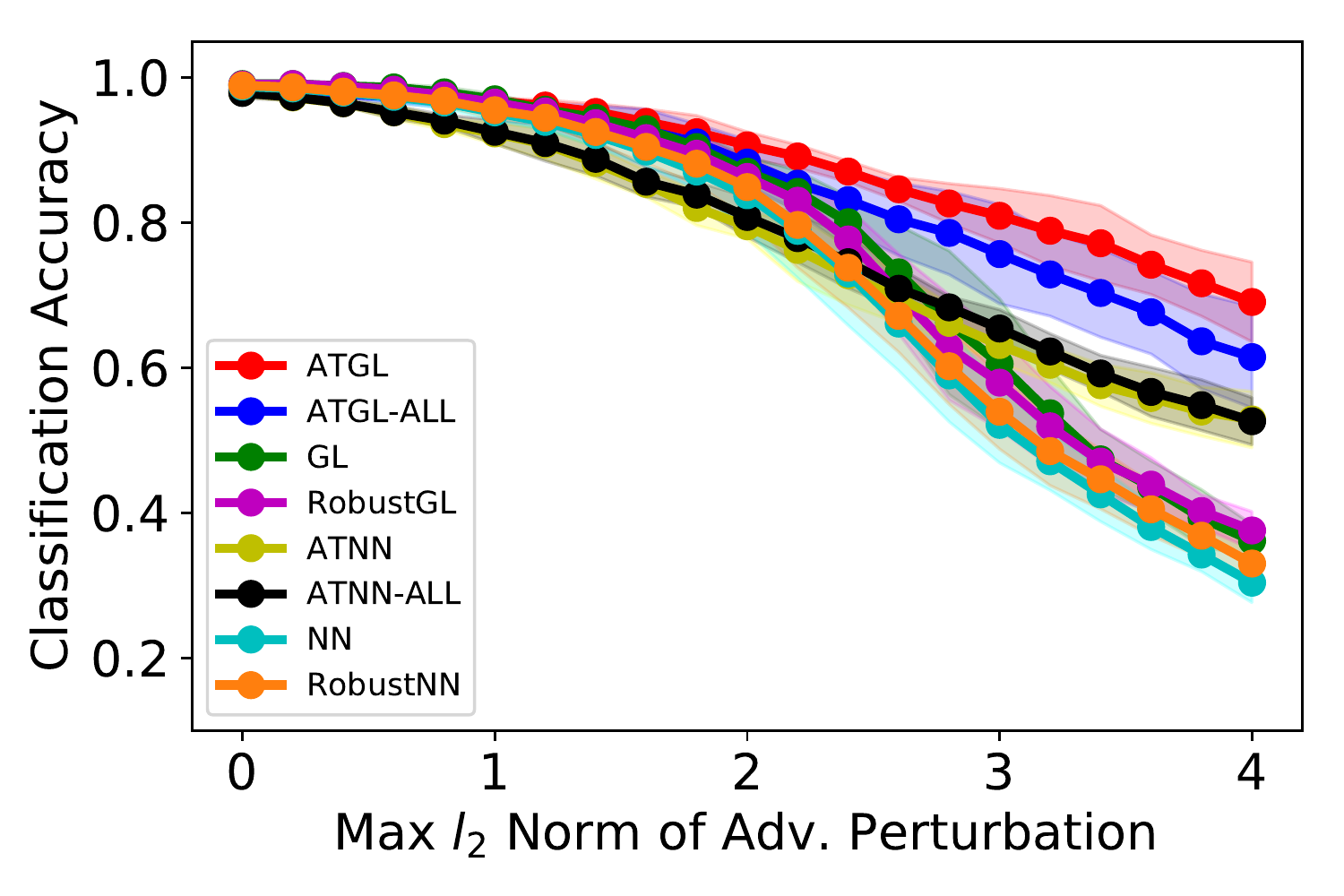}\\
{\footnotesize (d) Abalone, KSA} & {\footnotesize (e) Halfmoon, KSA} & {\footnotesize (f) MNIST 1v7, KSA} \\
\end{tabular}
\caption{Robust accuracies of GL vs. $k$NN classifiers for three datasets classification under WB attacks with different maximum perturbation measured in $\ell_2$-norm. GL-based classifiers are consistently more accurate than $k$NN-based classifiers. (Best viewed on a computer screen.)}
\label{fig:acc:wb-wb-kerbel}
\end{figure}

\begin{figure}[!ht]
\centering
\begin{tabular}{ccc}
\hskip -0.6cm\includegraphics[clip, trim=0cm 0cm 0cm 0cm, width=0.31\columnwidth]{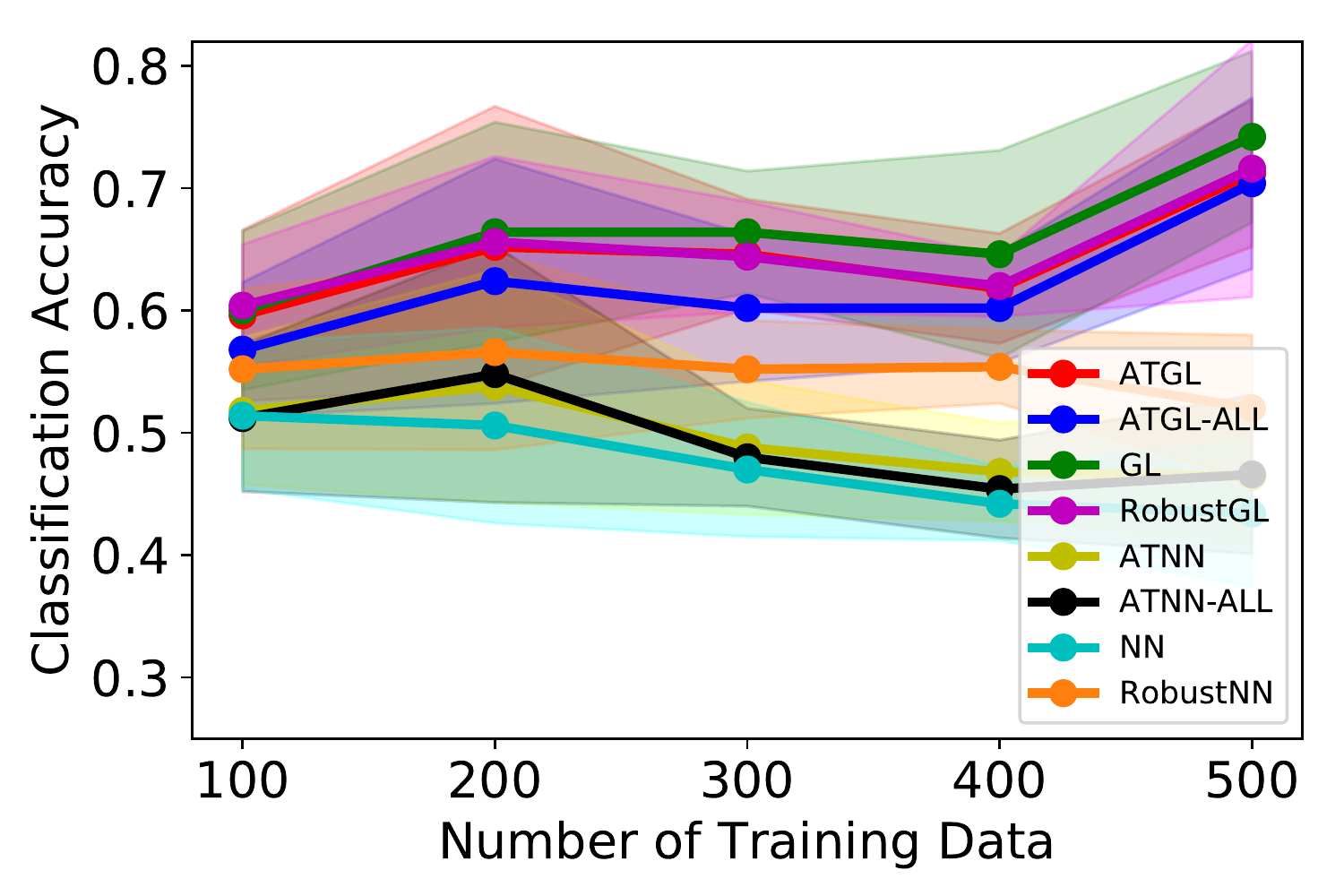}&
\hskip -0.4cm\includegraphics[clip, trim=0cm 0cm 0cm 0cm, width=0.31\columnwidth]{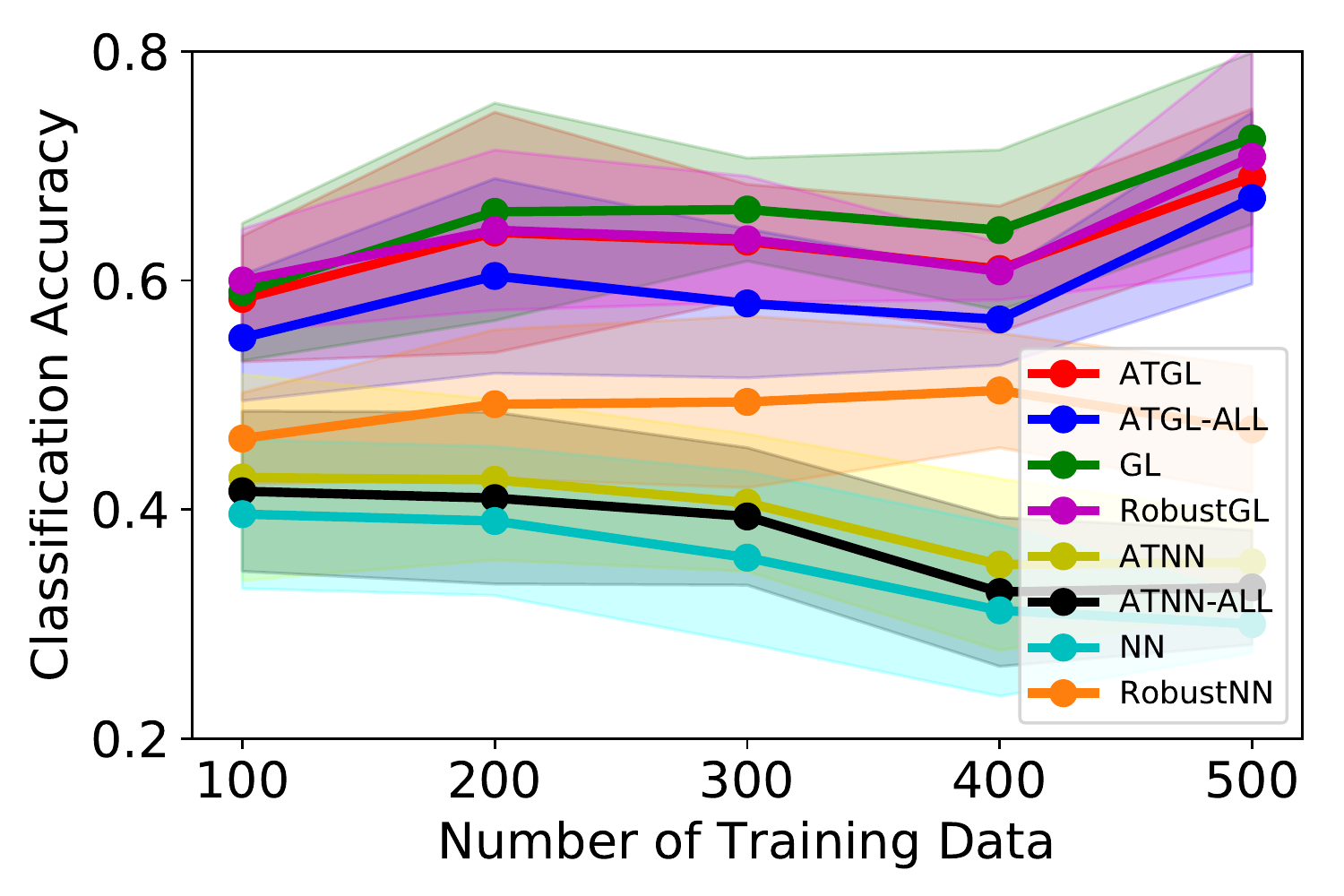}&
\hskip -0.4cm\includegraphics[clip, trim=0cm 0cm 0cm 0cm, width=0.31\columnwidth]{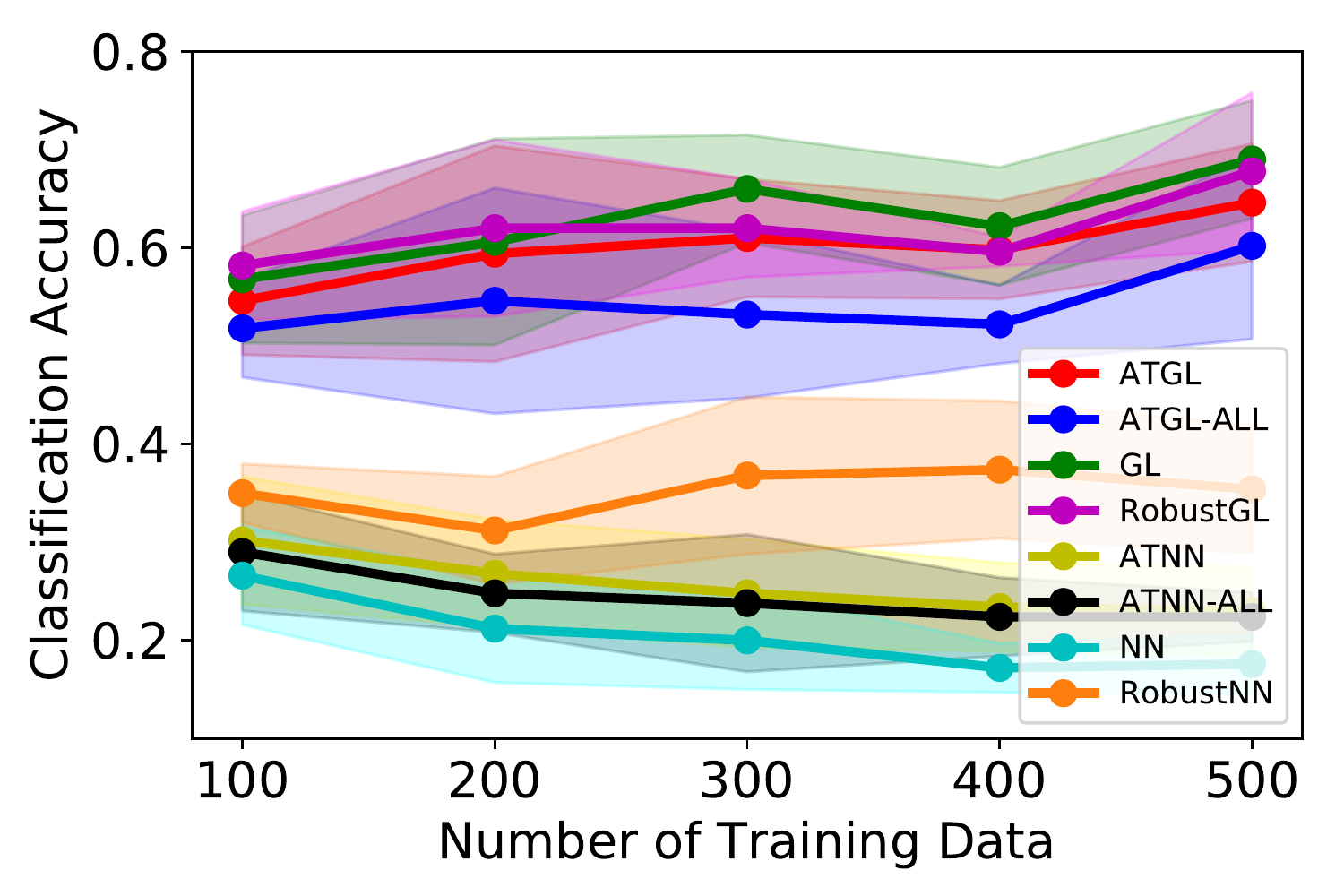}\\
DA, $r=0.01$ & DA, $r=0.02$ & DA, $r=0.04$ \\
\hskip -0.6cm\includegraphics[clip, trim=0cm 0cm 0cm 0cm, width=0.31\columnwidth]{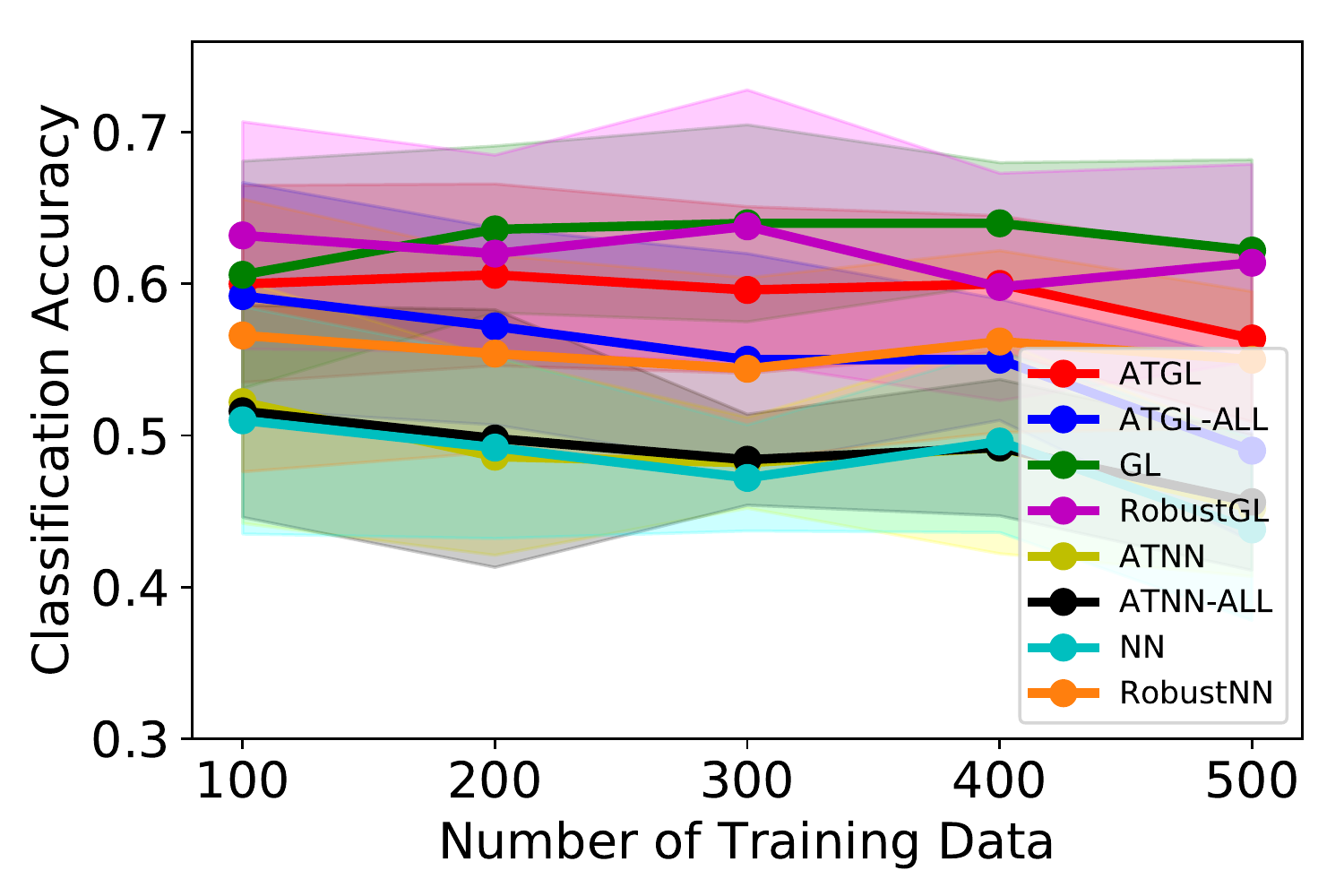}&
\hskip -0.4cm\includegraphics[clip, trim=0cm 0cm 0cm 0cm, width=0.31\columnwidth]{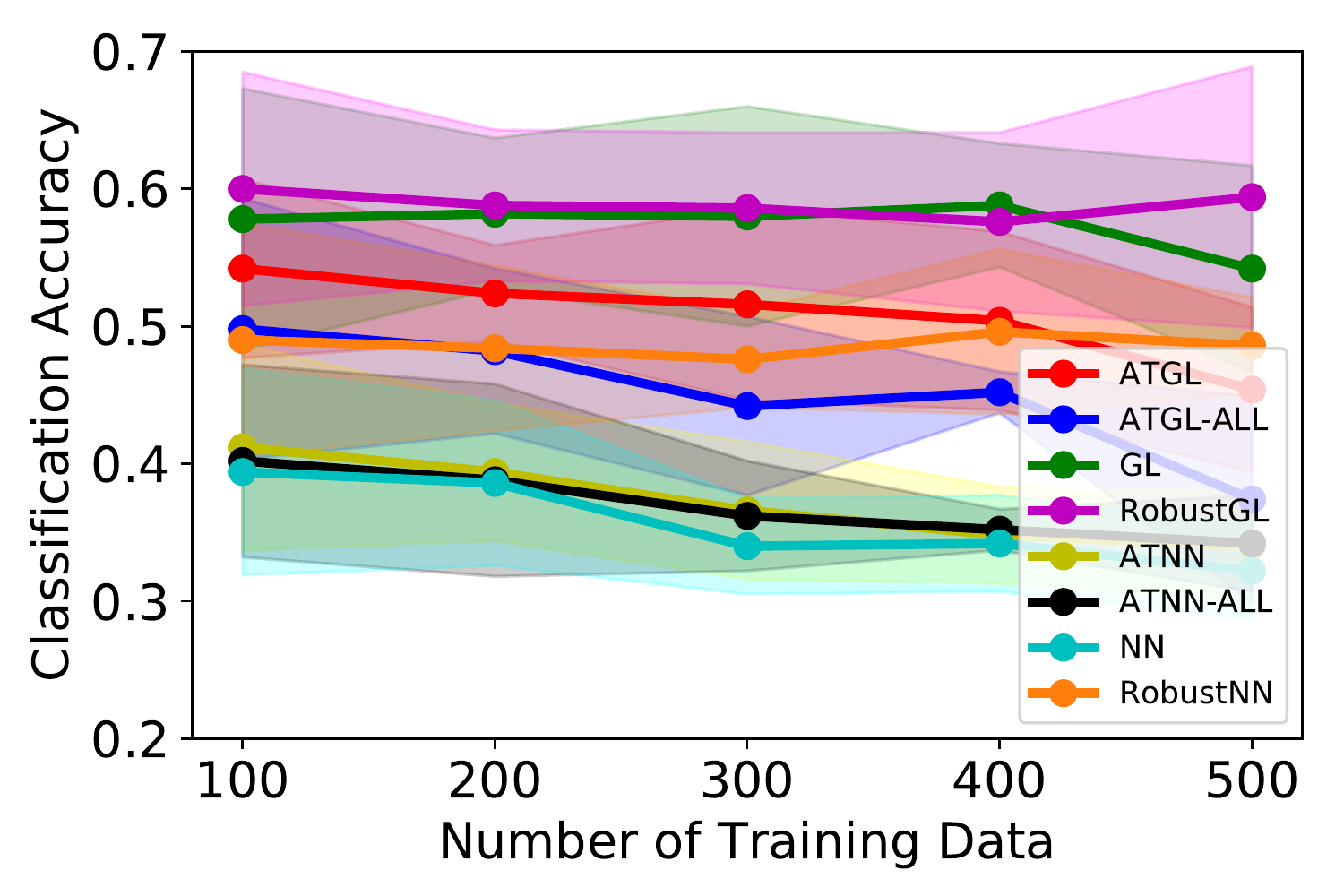}&
\hskip -0.4cm\includegraphics[clip, trim=0cm 0cm 0cm 0cm, width=0.31\columnwidth]{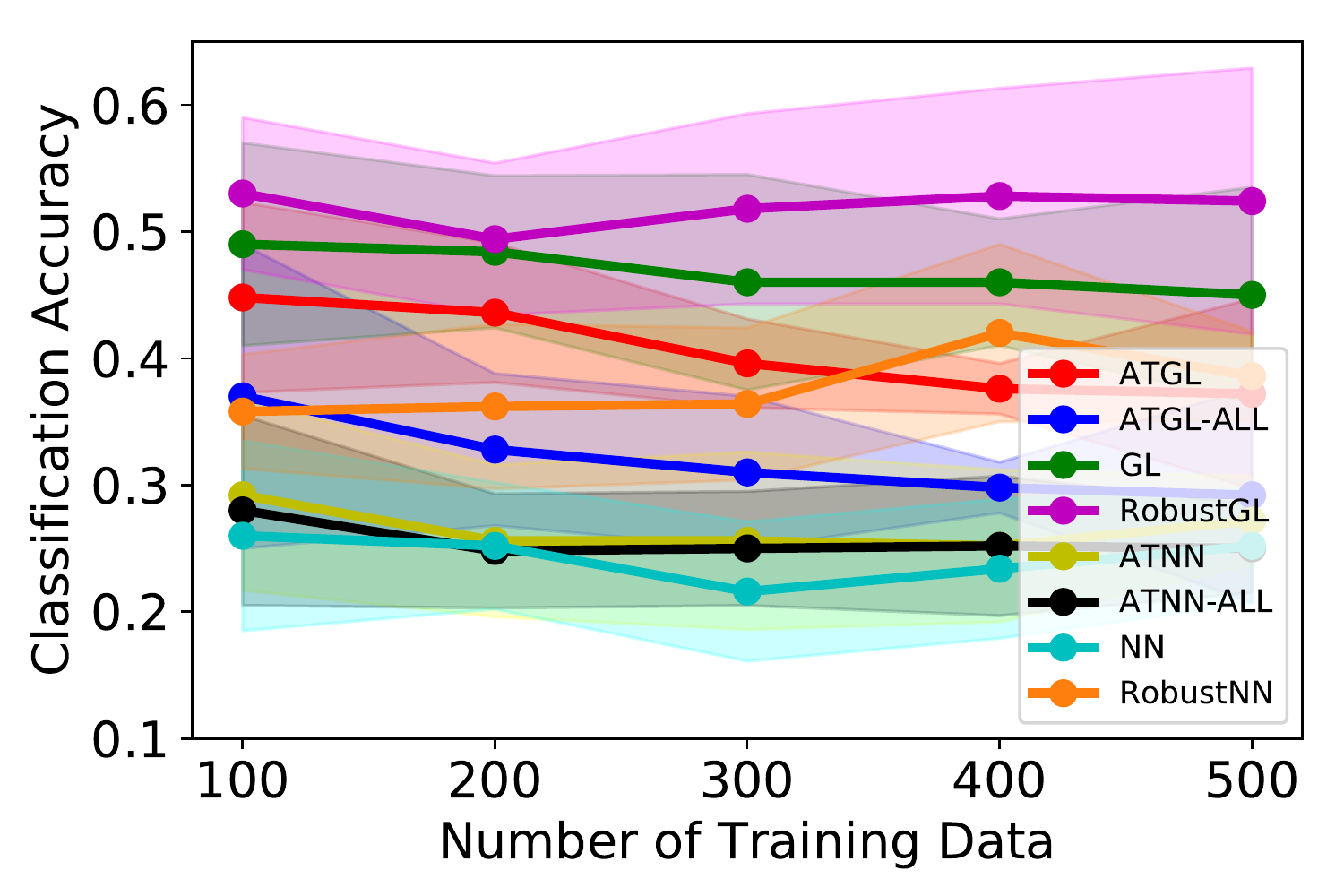}\\
KSA, $r=0.01$ & KSA, $r=0.02$ & KSA, $r=0.04$ \\
\end{tabular}
\caption{Robust accuracies of GL vs. $k$NN classifiers, trained by different number of training data, for classifying the Abalone under the WB attacks. As the number of training data increases, GL classifiers becomes more robust, while $k$NN classifiers are not. (Best viewed on a computer screen.)
}
\label{fig:acc:diff:number:training-data:WBAttack:Abalone}
\end{figure}

\subsection{Robust Accuracy with Different Number of Training Data}
Our theory indicates that for a given $k$, the robustness of GL classifier depends on the number of labeled data. In this subsection, we study the effects of the number of labeled data on the classifiers' robustness. 
We present the results of 
classifying the Abalone dataset under WB attacks with $r=0.01, 0.02$ and $0.04$, respectively, in Fig.~\ref{fig:acc:diff:number:training-data:WBAttack:Abalone}. The BB attacks are discussed in Appendix~\ref{appendix:RAcc:vs:Different-num-training-data}. In general, under these three choices of $r$, 
the GL-based classifiers tend to be more robust as the number of training data increases.

\section{Conclusions}\label{sec:Conc}
In this paper, we gave the first rigorous analysis of the adversarial robustness of the Graph Laplacian (GL)-based semi-supervised learning algorithm under evasion attacks.
Theoretically, we showed the sample limit that guarantees adversarial robustness of the GL classifier and its theoretical advantages over the $k$-nearest neighbor ($k$NN) classifier. 
We have also empirically shown that the robustness of the GL classifier can be remarkably improved by adversarial defenses and increasing the amount of training data. {Our theoretical results echo the observation that unlabeled data can improve the robustness of machine learning algorithms \citep{carmon2019unlabeled} and provides a potential explanation for the observation that GL-based activation function remarkably improves neural nets' robustness \citep{wang2019graph}.} Many interesting problems are remaining, for instance, 
how to develop a theory of robustness of GL classification under other types of the adversary, e.g., poisoning attacks?

{The results in this paper are built on large data limits and there is an increasing body of literature that could provide the tools to analyse such problems as spectral clustering, e.g.~\citep{SpecRatesTrillos,singer06}, and Cheeger cuts, e.g.~\citep{trillos2020graph}.
The main difficulty in applying these results is that one needs to be able to quantify how the ``closeness'' of the samples to being independent and identically distributed samples affects the labeling.
In this work, the bound on the maximum adversarial distance means that the perturbed data are still close to being independent and identically distributed and so the large data theory still applies.
Extending these results to other settings is of great interest and, in particular, we expect that different methods will see different robustness scalings as we believe the regularity of the solutions will play a role. 
}



\acks{This material is based on research sponsored by the NSF grant DMS-1924935 and DMS-1952339, and the DOE grant  DE-SC0021142.}

\clearpage


\clearpage
\appendix

\part{Appendices} 
The appendices are structured as follows. In Section~\ref{Appendix:proof}, we proof the results in Section~\ref{sec:theory}. In Section~\ref{Appendix:RobustAcc:BlackBox:Attack}, we numerically study the robustness of the GL-based classifier under the black-box attacks. In Section~\ref{appendix:RAcc:vs:Different-num-training-data}, we empirically study the effects of the number of training data in the robustness of the GL-based classifiers and contrast it to the $k$NN-based classifiers. In Section~\ref{sec:Vis:BB:Attack}, we visualize the adversarial examples of GL-based classifiers under different adversarial attacks.

\addcontentsline{toc}{section}{Appendix} 

\parttoc 

\section{Proofs of the Main Results}\label{Appendix:proof}

We define the following quantities:
\begin{align*}
d_{N,\eps}(\vx;\Omega_N) & = \sum_{\vy\in\Omega_N} \Wb_{\vx,\vy} = \sum_{\vy\in\Omega_N} \eta_\eps(|\vy-\vx|) \\
p_{N,\eps}(\vx;\Gamma_N) & = \sum_{\vy\in\Gamma_N} \Wb_{\vx,\vy} = \sum_{\vy\in\mtmath{\Gamma_N}} \eta_\eps(|\vy-\vx|) \\
\cL\varphi(\vx) & = \frac{\sigma_\eta}{\rho(\vx)}\Div(\rho^2\nabla\varphi)(\vx)
\end{align*}
where $\sigma_\eta = \int_{B(0,2)} \eta(|\vz|) |\vz_1|^2 \, \dd \vz$.
We also define the constant $C_\eta = \int_{B(0,2)} \eta(|\vz|) \, \dd \vz$. The value $d_{N,\eps}(\vx;\Omega_N)$ is the degree of the node at $\vx$ and $p_{N,\eps}(\vx;\Omega_N)$ is degree of the node at $\vx$ using only labeled data.
We also define the ``thickened'' boundary $\partial_\eps\Omega$ by
\[ \partial_\eps\Omega =\lb \vx\in\Omega \,:\, \dist(\vx,\partial\Omega)<\eps\rb. \]
We now recall Theorem 3.11 from~\cite{calder20AAA} which forms the basis for our proofs.

\begin{theorem}
\label{thm:app:Conv}
\cite[Theorem 3.11]{calder20AAA}
Let $\eps\in (0,1)$ and $\beta\in [\eps^2,1]$, $\ell:\Omega\to\bbR$ be Lipschitz, $\rho\in\Cr{2}(\Omega)$, $\Omega$ satisfy (A1), and $W_{\vx,\vy}$ is constructed as in (A4).
Assume there exists $C>c>0$ such that $\Omega_N^\prime = \{\vx_i^\prime\}_{i=1}^N\subset\Omega$ and $\Gamma_N^\prime\subseteq\Omega_N^\prime$ satisfy 
\begin{align}
\la d_{N,\eps}(\vx;\Omega^\prime_N) - C_\eta N\rho(\vx)\ra & \leq CN\sqrt{\beta} \label{eq:app:cond1} \\
d_{N,\eps}(\vx;\Omega_N^\prime) & \geq cN \label{eq:app:cond2} \\
p_{N,\eps}(\vx;\Gamma_N^\prime) & \geq cN\beta \label{eq:app:cond3} \\
\la \frac{1}{N\eps^2} \cL_N(\varphi;\Omega_N^\prime)(\vx) - \cL\varphi(\vx) \ra & \leq C\|\varphi\|_{\Cr{3}(\bar{\Omega})} \frac{\sqrt{\beta}}{\eps} && \forall \varphi\in\Cr{3}(\bar{\Omega}) \label{eq:app:cond4}
\end{align}
for all $\vx\in\Omega_N^\prime\setminus \partial_{2\eps}\Omega$.
There exists $C_0>0$ and $\bar{C}$ such that if $u^\prime$ satisfies
\begin{align*}
\cL_N(u^\prime;\Omega_N^\prime) & = 0 && \forall \vx\in\Omega_N^\prime\setminus\Gamma_N^\prime \\
u^\prime(\vx) & = \ell(\vx) && \forall \vx\in\Gamma_N^\prime
\end{align*}
then
\[ \max_{\vx\in\Omega_N^\prime\setminus\partial_\tau\Omega} \la u^\prime(\vx) - \ell(\vx)\ra \leq \frac{\bar{C}\eps}{\sqrt{\beta}} \log\l\frac{\sqrt{\beta}}{\eps}\r \]
where $\tau = \frac{C_0\eps}{\sqrt{\beta}} \log\l\frac{\sqrt{\beta}}{\eps}\r$.
\end{theorem}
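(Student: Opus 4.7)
The plan is to prove the $L^\infty$ bound via a comparison argument based on a discrete maximum principle for the graph Laplacian $\cL_N(\cdot;\Omega_N^\prime)$. First I would verify that $\cL_N$ satisfies a strong discrete maximum principle: if $\cL_N v \leq 0$ on $\Omega_N^\prime \setminus \Gamma_N^\prime$ then $v$ attains its maximum on $\Gamma_N^\prime$. This is immediate from the nonnegativity of the weights $\Wb_{\vx,\vy}$ and the connectivity guaranteed by the lower degree bound~\eqref{eq:app:cond2}. I would combine this with the stronger lower bound on the labeled-degree~\eqref{eq:app:cond3} to get a quantitative version: if $\cL_N v = f$ on $\Omega_N^\prime \setminus \Gamma_N^\prime$ and $v = g$ on $\Gamma_N^\prime$, then $\|v\|_\infty$ is controlled by $\|g\|_\infty$ plus a quantity involving $\|f\|_\infty / (N\beta)$ through a random-walk / Feynman--Kac representation (time to hit a labeled node scales like $1/(\beta\eps^2)$).

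Next, I would introduce a continuum reference solution. Because at low label rate the naive continuum Dirichlet problem is degenerate, I would follow the strategy of~\cite{calder20AAA} and regard $\bar u$ as a Lipschitz extension of $\ell$ that is (approximately) $\cL$-harmonic on the bulk, using the weighted operator $\cL\varphi = \frac{\sigma_\eta}{\rho}\Div(\rho^2\nabla\varphi)$. Since $\ell$ is already Lipschitz on all of $\Omega$ (A3), I can simply take $\bar u = \ell$ as the candidate and let $\cL \bar u = f_0$ be a bounded continuum residual. Applying the pointwise consistency~\eqref{eq:app:cond4} then gives
\[ \left|\frac{1}{N\eps^2}\cL_N(\bar u;\Omega_N^\prime)(\vx) - f_0(\vx)\right| \leq C \frac{\sqrt\beta}{\eps}, \]
for interior points $\vx \in \Omega_N^\prime \setminus \partial_{2\eps}\Omega$.

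The heart of the proof is then a barrier construction. I would build an explicit smooth barrier $\psi$ on $\bar\Omega$ with $\cL\psi \geq 1$ in the bulk (using the positive lower bound on $\rho$ from (A2) and the Lipschitz boundary from (A1), one can take something like a Bernstein-type exponential barrier). Then for an appropriate constant $M$ the function
\[ w^\pm = \pm(u^\prime - \bar u) - M\l(\frac{\eps}{\sqrt\beta}\log\frac{\sqrt\beta}{\eps}\r)\psi \]
is shown to satisfy $\cL_N w^\pm \leq 0$ on $\Omega_N^\prime \setminus \Gamma_N^\prime$ by combining the consistency bound with the quantitative maximum principle, and $w^\pm \leq 0$ on $\Gamma_N^\prime$ (since both agree with $\ell$ there). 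The discrete maximum principle then gives $w^\pm \leq 0$ everywhere, which is the desired bound away from $\partial_\tau\Omega$ with $\tau = C_0 \eps\beta^{-1/2}\log(\sqrt\beta/\eps)$.

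The main obstacle I anticipate is the interaction between the low-labelling regime and the boundary. The $\sqrt\beta$ factor is what makes the consistency error~\eqref{eq:app:cond4} tight enough to be absorbed by the barrier, but at the price that the random-walk hitting time to a labeled point is $\Omega(1/(\beta\eps^2))$, producing the $\log(\sqrt\beta/\eps)$ factor and forcing the exclusion zone $\partial_\tau\Omega$. Getting the constants in the barrier to match the constants coming out of the hitting-time estimates, while keeping the argument valid uniformly for all $\beta\in[\eps^2,1]$, is the delicate balance — this is exactly where the assumption $\beta\geq\eps^2$ is used, since it ensures the logarithmic correction remains controllable and $\tau$ stays smaller than any fixed tubular distance when $\dist(\Omega^\prime,\partial\Omega) > C_0 \eps\beta^{-1/2}\log(\sqrt\beta/\eps)$.
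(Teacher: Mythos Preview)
First, note that the paper does not prove this theorem: it is quoted verbatim from \cite{calder20AAA}, and the only argument given here is the one-paragraph heuristic immediately following the statement. That heuristic is \emph{not} a barrier argument. It uses the random-walk representation $u^\prime(\vx)=\bbE_\vx[\ell(X_\tau)]$, where $\{X_k\}$ is the graph random walk and $\tau$ is the first hitting time of $\Gamma_N^\prime$, and then bounds $|u^\prime(\vx)-\ell(\vx)|\leq \Lip(\ell)\,\bbE_\vx|X_\tau-\vx|$. The $\eps/\sqrt{\beta}$ rate comes from the \emph{diffusive} scaling of the displacement: each step has size $\sim\eps$, the walk takes $\sim 1/\beta$ steps before hitting a label, so $|X_\tau-\vx|\sim \eps\sqrt{1/\beta}$, with the logarithm arising from tail control.

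Your proposal does not capture this mechanism, and the barrier route as written will not produce the correct rate. Two concrete problems. First, you take $\bar u=\ell$ and invoke~\eqref{eq:app:cond4}, but that bound requires $\varphi\in\Cr{3}(\bar\Omega)$ while $\ell$ is only assumed Lipschitz; there is no ``bounded continuum residual $f_0=\cL\ell$'' to speak of. Second, and more fundamentally, in the regime $\beta\geq\eps^2$ the consistency error in~\eqref{eq:app:cond4} satisfies $\sqrt{\beta}/\eps\geq 1$, so for a smooth barrier $\psi$ with $\cL\psi\geq 1$ the estimate only gives $\frac{1}{N\eps^2}\cL_N\psi \geq 1 - C\|\psi\|_{\Cr{3}}\sqrt{\beta}/\eps$, which has the wrong sign in general. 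A continuum barrier that is blind to the scattered set $\Gamma_N^\prime$ cannot exploit the fact that labels are dense at scale $\beta$; it treats $\Gamma_N^\prime$ as a boundary and yields at best $O(\eps/\beta)$ (the ``source times hitting time'' bound you mention), not $O(\eps/\sqrt{\beta})$. The square root is genuinely a central-limit phenomenon for the walk's displacement, and any correct proof has to access it, either probabilistically as in \cite{calder20AAA} or via a martingale/variance argument for $X_k-\vx$.
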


The theorem is stated for any collection of data points $\Omega_N^\prime=\{\vx_i^\prime\}_{i=1}^N$ 
{and, under the assumptions, gives a quantitative bound between the GL solution and the label $\ell$.
The general idea behind the proof of Theorem~\ref{thm:app:Conv} is to exploit the connection between solutions of the GL-based classifier with an appropriately defined random walk that terminates when it hits a labeled data point.
In particular, the distance of the random walk after $k$ steps from its starting position is like $\eps\sqrt{k}$ (the size of each step is $\sim\eps$ so this coincides with the usual random walk bounds in continuum domains) and the random walk will terminate after approximately $k\sim N/|\Gamma_N^\prime|$ steps.
By~\eqref{eq:app:cond3} the probability of a data point being labelled is a fraction of $\beta$.
Putting all this together one gets that the error of the GL-based solution should be on the order of $\epsilon/\sqrt{\beta}$.
A more careful treatment gives the extra logarithmic terms and makes precise the high probability bound, we refer to~\cite{calder20AAA} for details.}

We will apply the theorem to the dataset $\Omega_N$ and the adversarially-perturbed domain $\hat{\Omega}_N$. 
In~\cite{calder20AAA} it is shown that the conditions (\ref{eq:app:cond1}-\ref{eq:app:cond4}) hold with high probability when the data points $\vx_i$ are iid, we recall this result in Lemma~\ref{lem:app:iidAss}.

To make notation easier we let
\begin{align*}
u & = \argmin \, \cEcon(\cdot;D_N) & \hat{u} & = \argmin \, \cEcon(\cdot;\hat{D}_N) \\
\cL_N & = \cL_N(\cdot;\Omega_N) & \hat{\cL}_N & = \cL_N(\cdot;\hat{\Omega}_N) \\
d_{N,\eps} & = d_{N,\eps}(\cdot;\Omega_N) & \hat{d}_{N,\eps} & = d_{N,\eps}(\cdot;\hat{\Omega}_N) \\
p_{N,\eps} & = p_{N,\eps}(\cdot;\Gamma_N) & \hat{p}_{N,\eps} & = p_{N,\eps}(\cdot;\hat{\Gamma}_N).
\end{align*}

We are unable to apply Theorem~\ref{thm:app:Conv} directly to the adversarially-perturbed problem. This is because the function $\hat{\ell}_N$ is not Lipschitz continuous (and depends on $N$).
To control the adversarially-perturbed problem we use stability of Laplace's equation.
In particular, we let $\hat{w}$ satisfy
\begin{align*}
\hat{\cL}_N \hat{w}(\hat{\vx}) & = 0 && \forall\hat{\vx}\in\hat{\Omega}_N\setminus\hat{\Gamma}_N \\
\hat{w}(\hat{\vx}) & = \ell(\hat{\vx}) && \forall\hat{\vx}\in\hat{\Gamma}_N.
\end{align*}
We first show that $\hat{u}$ and $\hat{w}$ are close.

\begin{lemma}
\label{lem:app:LapStab}
Assume the graph $\hat{G}_N$, which consists of nodes $\hat{\Omega}_N$ and edges between any two nodes $\hat{\vx}$, $\hat{\vy}$ for which $\Wb_{\eps,\hat{\vx},\hat{\vy}}>0$, is connected.
Let $\hat{\ell}_N(\hat{\vx})=\ell(\vx)$ where $|\hat{\vx}-\vx|\leq r$ for all $\vx\in\Omega_N$ and suppose that $\ell$ is Lipschitz continuous.
Then,
\[ \max_{\hat{\vx}\in\hat{\Omega}_N} \la \hat{u}(\hat{\vx}) - \hat{w}(\hat{\vx}) \ra\leq \Lip(\ell) r. \]
\end{lemma}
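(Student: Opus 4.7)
The plan is to exploit linearity of the graph Laplacian and the discrete maximum principle for graph harmonic functions. Set $v := \hat{u} - \hat{w}$. Because $\hat{\cL}_N$ is a linear operator on functions $\hat{\Omega}_N\to\bbR$, and both $\hat{u}$ and $\hat{w}$ are $\hat{\cL}_N$-harmonic on $\hat{\Omega}_N\setminus\hat{\Gamma}_N$, we have $\hat{\cL}_N v(\hat{\vx}) = 0$ for every $\hat{\vx}\in\hat{\Omega}_N\setminus\hat{\Gamma}_N$. On the labeled set $\hat{\Gamma}_N$ the boundary values compare directly: $v(\hat{\vx}) = \hat{\ell}_N(\hat{\vx}) - \ell(\hat{\vx}) = \ell(\vx) - \ell(\hat{\vx})$, and Lipschitz continuity of $\ell$ combined with $|\hat{\vx}-\vx|\leq r$ gives $|v(\hat{\vx})|\leq \Lip(\ell) r$ for every $\hat{\vx}\in\hat{\Gamma}_N$.

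Next I would invoke the discrete maximum principle. Rewriting $\hat{\cL}_N v(\hat{\vx})=0$ as
\[ v(\hat{\vx}) = \frac{1}{\hat{d}_{N,\eps}(\hat{\vx})} \sum_{\hat{\vy}\in\hat{\Omega}_N} \Wb_{\eps,\hat{\vx},\hat{\vy}}\, v(\hat{\vy}), \]
we see that at every interior node $\hat{\vx}\in\hat{\Omega}_N\setminus\hat{\Gamma}_N$ the value $v(\hat{\vx})$ is a convex combination of its neighbors' values (note $\hat{d}_{N,\eps}(\hat{\vx})>0$ since the graph is connected and has more than one node). Hence $v$ cannot attain a strict maximum or minimum at any interior node, and connectedness forces any maximizer/minimizer attained in the interior to propagate out to $\hat{\Gamma}_N$, so both $\max_{\hat{\Omega}_N} v$ and $\min_{\hat{\Omega}_N} v$ are attained on $\hat{\Gamma}_N$.

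Combining the two pieces,
\[ \max_{\hat{\vx}\in\hat{\Omega}_N} |v(\hat{\vx})| = \max_{\hat{\vx}\in\hat{\Gamma}_N} |v(\hat{\vx})| \leq \Lip(\ell)\, r, \]
which is the desired bound. There is no real obstacle beyond recording the maximum principle carefully: the only assumption one needs is connectedness of the graph $\hat{G}_N$, which is supplied in the hypothesis, and nonnegativity of the weights $\Wb_{\eps,\hat{\vx},\hat{\vy}}$, which is built into assumption (A4). Importantly, this argument does not require $\hat{\ell}_N$ to be the restriction of a Lipschitz function on $\Omega$; it only uses that the two boundary data on $\hat{\Gamma}_N$ differ pointwise by at most $\Lip(\ell)r$, which is exactly what lets us later replace the intractable problem for $\hat{u}$ by the tractable one for $\hat{w}$ when applying Theorem~\ref{thm:app:Conv}.
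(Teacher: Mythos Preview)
Your proof is correct and follows essentially the same approach as the paper: define the difference $v=\hat{u}-\hat{w}$, use linearity to see it is graph-harmonic on $\hat{\Omega}_N\setminus\hat{\Gamma}_N$, apply the discrete maximum/minimum principle, and bound the boundary values via the Lipschitz constant of $\ell$. The only difference is that the paper cites the maximum principle from \cite{calder18} rather than spelling out the convex-combination argument as you do.
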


\begin{proof}
Let $\hat{v} = \hat{u} - \hat{w}$.
Then $\hat{v}$ satisfies
\begin{align*}
\hat{\cL}_N \hat{v}(\hat{\vx}) & = 0 && \forall \hat{\vx}\in\hat{\Omega}_N\setminus\hat{\Gamma}_N \\
\hat{v}(\hat{\vx}) & = \ell(\vx) - \ell(\hat{\vx}) && \forall \hat{\vx}\in\hat{\Gamma}_N.
\end{align*}
By the maximum principle (for example see~\cite[Theorem 3]{calder18})
\[ \max_{\hat{\vx}\in\hat{\Omega}_N} \hat{v}(\hat{\vx}) = \max_{\hat{\vx}\in\hat{\Gamma}_N} \hat{v}(\hat{\vx}) \leq \Lip(\ell) r. \]
Similarly, by the minimum principle we have $\min_{\hat{\vx}\in\hat{\Omega}_N} \hat{v}(\hat{\vx})\geq -\Lip(\ell)r$.
Combining the two bounds we can conclude the result.
\end{proof}

Next we recall that the conditions~\ref{thm:app:Conv} hold in the unperturbed domain.

\begin{lemma}
\label{lem:app:iidAss}
Let Assumptions (A1-A4) hold.
There exists $C>c>0$ such that if $\eps\in (0,1)$ and $\beta\in [\eps^2,1]$ then
\begin{align*}
\la d_{N,\eps}(\vx) - C_\eta N\rho(\vx)\ra & \leq CN\sqrt{\beta} \\
d_{N,\eps}(\vx) & \geq cN \\
p_{N,\eps}(\vx) & \geq cN\beta \\
\la \frac{1}{N\eps^2} \cL_{N}\varphi(\vx) - \cL\varphi(\vx) \ra & \leq C\|\varphi\|_{\Cr{3}(\bar{\Omega})} \frac{\sqrt{\beta}}{\eps} && \forall \varphi\in\Cr{3}(\bar{\Omega})
\end{align*}
for all $\vx\in\Omega_N\setminus\partial_{2\eps}\Omega$ with probability at least $1-CNe^{-cN\beta\eps^d}$.
\end{lemma}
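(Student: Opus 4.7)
The four bounds are standard concentration inequalities for sums of iid bounded random variables; in fact each is a special case of estimates carried out in~\cite{calder20AAA,SpecRatesTrillos,singer06}. The plan is to apply Bernstein's inequality to each of the four sums pointwise in $\vx$, and then take a union bound over the at most $N$ points of $\Omega_N$. Under assumption~(A2) the $\vx_i$ are iid from $\mu$, so conditioning on $\vx=\vx_i$ leaves $N-1$ iid samples and all four quantities $d_{N,\eps}(\vx)$, $p_{N,\eps}(\vx)$, $\cL_N\varphi(\vx)$ become sums of iid bounded random variables whose expectations can be computed by changing variables $\vz=(\vx_j-\vx)/\eps$ (the hypothesis $\dist(\vx,\partial\Omega)>2\eps$ guarantees that $B(\vx,2\eps)\subset\Omega$, so no boundary corrections appear).

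For~\eqref{eq:app:cond1}, the summands $X_j=\eta_\eps(|\vx_j-\vx|)$ satisfy $|X_j|\leq C\eps^{-d}$ and $\mathrm{Var}(X_j)\leq\bbE X_j^2\leq C\eps^{-d}$, and Taylor expansion of $\rho\in\Cr{2}(\bar\Omega)$ gives $\bbE X_j=C_\eta\rho(\vx)+O(\eps^2)$. Since $\beta\geq\eps^2$ forces $\eps^2\leq\sqrt\beta$, the bias is absorbed in the target tolerance $CN\sqrt\beta$. Bernstein at scale $t=CN\sqrt\beta$ yields failure probability at most $2\exp(-cN^2\beta/(N\eps^{-d}+\eps^{-d}t))\leq 2\exp(-cN\beta\eps^d)$. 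Bound~\eqref{eq:app:cond2} then follows from~\eqref{eq:app:cond1} using $\rho(\vx)\geq\rho_{\min}>0$ for $\eps$ small. Bound~\eqref{eq:app:cond3} is analogous with the extra indicator $\one_{\vx_j\in\Gamma_N}$; the expectation becomes $\beta(C_\eta\rho(\vx)+O(\eps^2))\geq c\beta$, and Bernstein at scale $cN\beta$ produces the same exponent.

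The delicate bound is~\eqref{eq:app:cond4}. Here the summands $Y_j=\eps^{-2}\eta_\eps(|\vx_j-\vx|)(\varphi(\vx)-\varphi(\vx_j))$ satisfy $|Y_j|\leq C\|\varphi\|_{\Cr{3}(\bar\Omega)}\eps^{-d-1}$ and $\bbE Y_j^2\leq C\|\varphi\|_{\Cr{3}(\bar\Omega)}^2\eps^{-d-2}$. A second-order Taylor expansion of both $\varphi$ and $\rho$ (the $\eps^{-1}$-order odd terms vanish by the isotropy of $\eta$, and the remaining even terms reconstruct $\cL\varphi$) gives $N\bbE Y_1=\cL\varphi(\vx)+O(\|\varphi\|_{\Cr{3}}\eps)$; since $\beta\geq\eps^2$, this bias is dominated by the target $\|\varphi\|_{\Cr{3}}\sqrt\beta/\eps$. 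Bernstein at scale $t=C\|\varphi\|_{\Cr{3}}\sqrt\beta/\eps$ gives failure probability at most $\exp(-cNt^2\eps^{d+2}/\|\varphi\|_{\Cr{3}}^2)\leq\exp(-cN\beta\eps^d)$. Union bounding over the $N$ points of $\Omega_N$ and over the four bounds yields the claimed $1-CNe^{-cN\beta\eps^d}$.

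The main obstacle is uniformity in $\varphi\in\Cr{3}(\bar\Omega)$ in~\eqref{eq:app:cond4}, since the infinite-dimensional parameter $\varphi$ cannot be handled by a naive union bound. This is resolved by observing that, by linearity in $\varphi$, both $\frac{1}{N\eps^2}\cL_N\varphi(\vx)$ and $\cL\varphi(\vx)$ depend on $\varphi$ only through its 2-jet at $\vx$ plus a $\Cr{3}$-controlled cubic remainder whose contribution is bounded deterministically by $C\|\varphi\|_{\Cr{3}}\eps$ (already smaller than $\|\varphi\|_{\Cr{3}}\sqrt\beta/\eps$). It therefore suffices to verify the concentration estimate against the finite basis of monomials of degree $\leq 2$, at the cost of an additional constant union bound absorbed into $C$; this reduction is precisely the one carried out in~\cite{calder20AAA}, from which the result can be quoted directly.
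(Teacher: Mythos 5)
Your argument is correct and is essentially the paper's: the paper proves this lemma by citing exactly the concentration results you unpack (the first and fourth bounds from \cite[Theorem 5]{calder18} with $\delta=\sqrt{\beta}$ and $\delta=\sqrt{\beta}/\eps$ respectively, the second from \cite[Propositions 3.5 and 3.8]{calder20AAA}, and the third via an in-paper Bernstein-plus-union-bound lemma counting labeled points in $B(\vx,\eps)$), and your Bernstein/Taylor/union-bound sketch, including the reduction of the $\varphi$-uniformity to the $2$-jet plus a deterministic cubic remainder, is precisely the mechanism inside those references. One small inaccuracy: the bound $d_{N,\eps}(\vx)\geq cN$ does not literally follow from the first inequality when $\beta$ is of order one, since $C\sqrt{\beta}$ need not be smaller than $C_\eta\rho_{\min}$; instead one applies Bernstein directly at scale $\tfrac12\bbE\, d_{N,\eps}(\vx)\geq cN$ (which only improves the failure probability, as $\beta\leq 1$).
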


\begin{proof}
The first inequality holds by choosing $\delta=\sqrt{\beta}$ in~\cite[Theorem~5]{calder18} (and noting that in the proof one establishes the bound with probability at least $1-CNe^{-cN\beta\eps^d}$).
The second inequality holds by~\cite[Propositions 3.5 and 3.8]{calder20AAA}.
The third inequality holds by Remark~\ref{rem:app:LabelDegreeSpecCase} below. 
The fourth inequality holds by choosing $\delta=\frac{\sqrt{\beta}}{\eps}$ in~\cite[Theorem~5]{calder18}.
\end{proof}

To prove the bounds for the perturbed model we will use the following preliminary result.

\begin{lemma}
\label{lem:app:BallBound}
Let $A_N\subset\Omega_N$ satisfy $\bbP(\vx\in A_N) = \alpha\in [0, 1]$.
Then, there exists, $a>0$ and $C>c>0$ (independent of $\alpha$) such that for all $\tau\in(0,1]$ and $0<\vartheta\leq 1$
\begin{align*}
& \bbP\l (1-\vartheta-a\tau)C_\tau(\vx)N\alpha\leq \#\lb \vy\in A_N\,:\,|\vx-\vy|\leq \tau\rb \leq (1+\vartheta+a\tau)C_\tau(x)N\alpha, \, \forall \vx\in \Omega_N\r \\
& \hspace{4cm} \geq 1-CNe^{-cN\alpha\tau^d\vartheta^2}
\end{align*}
where $C_\tau(\vx) = \rho(\vx)\Vol(B(\vx,\tau)\cap\Omega)$.
Moreover, there exists $\tau_0>0$, $C_2>C_1>0$, such that, for all $\tau\in (0,\tau_0)$,
\[ \bbP\l C_1N\alpha\tau^d \leq \# \lb \vy\in A_N\,:\, |\vx-\vy|\leq \tau \rb \leq C_2 N\alpha\tau^d, \, \forall \vx\in \Omega_N\r \geq 1-CNe^{-cN\alpha\tau^d}. \]
\end{lemma}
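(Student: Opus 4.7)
The plan is to prove a Bernstein-style concentration inequality on the Bernoulli sum counting how many $\vy\in A_N$ lie in $B(\vx,\tau)$, followed by a union bound over $\vx\in\Omega_N$.

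First I would fix $\vx\in\Omega_N$ (viewing it as deterministic for the sake of bounding the other $N-1$ points; the extra term is $O(1)$ and absorbed into constants) and write
\[ Z_{\vx} = \#\lb \vy\in A_N \,:\, |\vx-\vy|\leq \tau\rb = \sum_{i=1}^N \one_{\vx_i\in A_N} \one_{|\vx_i-\vx|\leq \tau}, \]
which by (A2)--(A3) is a sum of i.i.d. Bernoulli random variables with parameter $p_{\vx,\tau} = \alpha \mu(B(\vx,\tau)\cap\Omega)$. Since $\rho\in \Cr{2}(\Omega)$ is bounded and Lipschitz, a first-order Taylor expansion gives
\[ \mu(B(\vx,\tau)\cap\Omega) = \rho(\vx)\Vol(B(\vx,\tau)\cap\Omega) + R(\vx,\tau), \qquad |R(\vx,\tau)|\leq a\tau \cdot \Vol(B(\vx,\tau)\cap\Omega), \]
for a constant $a>0$ depending only on $\Lip(\rho)$ and $\rho_{\min}$. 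Consequently $\bbE Z_{\vx}$ lies in the interval $[(1-a\tau)C_\tau(\vx)N\alpha,(1+a\tau)C_\tau(\vx)N\alpha]$.

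Next I would apply the multiplicative Chernoff bound to the sum of i.i.d. Bernoullis: for $\vartheta\in(0,1]$,
\[ \bbP\l |Z_{\vx} - \bbE Z_{\vx}|>\vartheta\bbE Z_{\vx}\r\leq 2\exp\l -c\vartheta^2 \bbE Z_{\vx}\r. \]
Because $\Omega$ has Lipschitz boundary, $\Vol(B(\vx,\tau)\cap\Omega)\geq c^\prime\tau^d$ uniformly in $\vx\in\bar{\Omega}$ for $\tau$ small, so $\bbE Z_{\vx}\geq c N\alpha\tau^d$ and the bound becomes $2\exp(-cN\alpha\tau^d\vartheta^2)$. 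Combining the Chernoff deviation with the bias bound gives, on the good event,
\[ (1-\vartheta-a\tau)C_\tau(\vx)N\alpha \leq Z_{\vx} \leq (1+\vartheta+a\tau)C_\tau(\vx)N\alpha. \]
A union bound over the $N$ choices of $\vx\in\Omega_N$ then yields the claimed $1-CNe^{-cN\alpha\tau^d\vartheta^2}$ probability.

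For the second statement I would fix $\vartheta=\tfrac12$ and choose $\tau_0>0$ small enough that $a\tau_0\leq \tfrac14$. Then with probability at least $1-CNe^{-cN\alpha\tau^d}$, for every $\vx\in\Omega_N$,
\[ \tfrac{1}{4}C_\tau(\vx) N\alpha \leq Z_{\vx}\leq \tfrac{7}{4}C_\tau(\vx)N\alpha, \]
and using the uniform upper and lower bounds $c\tau^d\leq \Vol(B(\vx,\tau)\cap\Omega)\leq C\tau^d$ (from the Lipschitz boundary of $\Omega$ and (A2)) together with $\rho_{\min}\leq \rho(\vx)\leq \|\rho\|_\infty$ gives constants $C_2>C_1>0$ such that $C_1N\alpha\tau^d\leq Z_{\vx}\leq C_2 N\alpha\tau^d$, as required.

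The only delicate step is the uniform lower bound $\Vol(B(\vx,\tau)\cap\Omega)\gtrsim\tau^d$ near the boundary: this is the main obstacle, but it follows from the Lipschitz boundary assumption in (A1), which guarantees an interior cone condition and hence a dimension-free lower bound on the volume of the intersection $B(\vx,\tau)\cap\Omega$ for all $\vx\in\bar{\Omega}$ and all sufficiently small $\tau$.
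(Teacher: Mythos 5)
Your proposal is correct and follows essentially the same route as the paper: both write the count as a sum of i.i.d.\ Bernoulli indicators with mean $\alpha\Vol(B(\vx,\tau)\cap\Omega)(\rho(\vx)+O(\tau))$, apply a Bernstein/Chernoff concentration bound with variance proxy $O(\alpha\tau^d)$, and union bound over the $N$ centres, with the second statement following by fixing $\vartheta$ and taking $\tau_0$ small. The only cosmetic difference is that you measure the deviation relative to $\bbE Z_{\vx}$ rather than to $C_\tau(\vx)N\alpha$, which produces a harmless cross term $\vartheta a\tau$ absorbed by enlarging $a$.
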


\begin{remark}
\label{rem:app:DegreeSpecCase}
We can apply the above lemma to lower bound the number of labeled data points in $B(\vx,\eps)$. In particular, if $\vx\in\Omega\setminus\partial_{2\eps}\Omega$ then $C_\eps(\vx) = \rho(\vx)\Vol(B(0,1)) \eps^d$, and if $\eta=\one_{\cdot \leq 1}$ then $C_\eta = \Vol(B(0,1))$. We choose $\alpha = 1$ and $\tau=\eps$ to infer
\[ d_{N,\eps}(\vx) = \sum_{\vy\in\Omega_N} \Wb_{\vx,\vy} = \frac{1}{\eps^d} \#\lb \vy\in\Omega_N \,:\, |\vx-\vy|\leq \eps\rb \leq C_\eta N\rho(\vx) + O\l(\vartheta+\eps)N\r \]
with probability at least $1-CNe^{-cN\eps^d\vartheta^2}$.
Choosing $\vartheta=\sqrt{\beta}$ proves the first inequality in Lemma~\ref{lem:app:iidAss} for the special case $\eta=\one_{\cdot \leq 1}$.
\end{remark}

\begin{remark}
\label{rem:app:LabelDegreeSpecCase}
We can also use the above lemma to bound the number of labeled data points in $B(\vx,\eps)$. Let $\vx\in\Omega\setminus\partial_{2\eps}\Omega$, and choose $\alpha = \beta$.
Applying the second bound in Lemma~\ref{lem:app:BallBound}, and using that $\eta(t) \geq 1$ for all $t<1$, we have
\[ p_{N,\eps}(\vx) = \sum_{\vy\in\Omega_N} \Wb_{\vx,\vy} \geq \frac{1}{\eps^d} \# \lb \vy\in \Gamma_N\,:\, |\vx-\vy|\leq \eps \rb \geq C_1 N \beta \]
with probability at least $1-CNe^{-cN\beta\eps^d}$.
This proves the third inequality in Lemma~\ref{lem:app:iidAss}.
\end{remark}

\begin{proof}[Proof of Lemma~\ref{lem:app:BallBound}]
Fix $\vx\in\Omega_N$ and $\tau>0$.
Let $\xi_\vy = 1$ if $\vy\in A_N$ and $|\vx-\vy|\leq \tau$, and $\xi_\vy=0$ otherwise.
We can write
\[ \#\lb \vy\in A_N\,:\, \la \vx-\vy\ra\leq \tau\rb = \sum_{\vy\in\Omega_N} \xi_\vy. \]
By Bernstein's inequality
\[ \bbP\l\sum_{\vy\in\Omega_N}\l \xi_\vy - \bbE[\xi_\vy]\r \geq t\r \leq \exp\l-\frac{ct^2}{N\sigma^2+t}\r \]
for all $t>0$ and where
\[ \sigma^2 = \bbE\l\xi_\vy - \bbE[\xi_\vy]\r^2 \]
(note that the right hand side is independent of $\vy$).
Using the lower bound on the density $\rho$ of $\vx$ we infer
\begin{align*}
\bbP(\xi_\vy=1) & = \bbP\l B\l \vx,\tau\r\cap\Omega\r\bbP(\vy\in A_N) \\
 & = \alpha \Vol\l B(\vx,\tau)\cap\Omega\r \l \rho(\vx) + O(\tau)\r \\
 & = \alpha C_\tau(\vx)\l 1 +O(\tau)\r.
\end{align*}
So there exists $a>0$ such that
\[ \alpha C_\tau(\vx)\l 1-a\tau\r \leq \bbE[\xi_\vy] = \bbP(\xi_\vy=1) \leq \alpha C_\tau(\vx)\l 1+a\tau\r. \]
Hence,
\[ \sigma^2 \leq \bbE[\xi_\vy^2] = \bbE[\xi_\vy]  \leq C\alpha\tau^d \]
for some $C>0$.
We can then bound
\begin{align*}
& \bbP\l \#\lb \vy\in A_N\,:\,|\vx-\vy|\leq \tau\rb\leq t+C_\tau(\vx)N\alpha(1+a\tau)\r \\
& \hspace{2cm} \geq \bbP\l\sum_{\vy\in\Omega_N}\l\xi_\vy-\bbE[\xi_\vy]\r\leq t\r \\
& \hspace{2cm} \geq 1-\exp\l-\frac{ct^2}{N\sigma^2+t}\r \\
& \hspace{2cm} \geq 1-\exp\l-\frac{ct^2}{CN\alpha\tau^d+t}\r.
\end{align*}
Similarly,
\begin{align*}
& \bbP\l \#\lb \vy\in A_N\,:\,|\vx-\vy|\leq \tau\rb\geq -t+C_\tau(x)N\alpha(1-a\tau)\r \\
& \hspace{2cm} \geq \bbP\l \sum_{\vy\in\Omega_N} \l\xi_\vy -\bbE[\xi_\vy]\r \geq -t\r \\
& \hspace{2cm} \geq 1-\exp\l-\frac{ct^2}{N\sigma^2+t}\r \\
& \hspace{2cm} \geq 1-\exp\l-\frac{ct^2}{CN\tau^d\alpha+t}\r.
\end{align*}
Choosing $t = C_\tau(x)N\alpha\vartheta$ implies
\begin{align*}
& \bbP\l (1-a\tau-\vartheta)C_\tau(x)N\alpha\leq \#\lb y\in A_N\,:\,|\vx-\vy|\leq \tau\rb \leq (1+a\tau+\vartheta)C_\tau(\vx)N\alpha\r \\
& \hspace{4cm} \geq 1-2e^{-cN\alpha\tau^d\vartheta^2}
\end{align*}
where we restrict $\vartheta\leq 1$.
Union bounding (Fr\'echet inequality for logical conjugation) implies the first result.

Choose $\tau_0>0$ sufficiently small so that $1-a\tau_0>0$ and set $\vartheta = \frac12(1-a\tau_0)>0$.
Noticing that we can find $C_1$, $C_2$ such that $0<C_1\leq (1-a\tau-\vartheta)C(\vx)\leq (1+a\tau+\vartheta)C(\vx) \leq C_2$ we can conclude the second part of the lemma.
\end{proof}

We are left to show the analogue of Lemma~\ref{lem:app:iidAss} for the perturbed quantities $\hat{d}_{N,\eps}$, $\hat{p}_{N,\eps}$ and $\hat{\cL}_{N}$.

\begin{lemma}
\label{lem:app:AdvAss}
Under Assumptions (A1-A4) and assuming $|\vx_i-\hat{\vx}_i|\leq r$ for all $i=1,\dots, N$ there exists $\eps_0>0$, $C>c>0$ such that if $\eps\in (0,\eps_0)$, $\beta\in [\eps^2,1]$ and $r\in (0,c\sqrt{\beta}\eps]$ then
\begin{align*}
\la \hat{d}_{N,\eps}(\hat{\vx}) - C_\eta N\rho(\hat{\vx})\ra & \leq CN\sqrt{\beta} \\
\hat{d}_{N,\eps}(\hat{\vx}) & \geq cN \\
\hat{p}_{N,\eps}(\hat{\vx}) & \geq cN\beta \\
\la \frac{1}{N\eps^2} \hat{\cL}_{N}\varphi(\hat{\vx}) - \cL\varphi(\hat{\vx}) \ra & \leq C\|\varphi\|_{\Cr{3}(\bar{\Omega})} \frac{\sqrt{\beta}}{\eps} && \forall \varphi\in\Cr{3}(\bar{\Omega})
\end{align*}
for all $\hat{\vx}\in\hat{\Omega}_N\setminus\partial_{2\eps}\Omega$ with probability at least $1-CNe^{-cN\beta\eps^d}$.
\end{lemma}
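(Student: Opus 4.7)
The plan is to reduce each of the four bounds in Lemma~\ref{lem:app:AdvAss} to its unperturbed counterpart in Lemma~\ref{lem:app:iidAss}, paying for the perturbation only by an error that is absorbed by the constants thanks to the restriction $r \leq c\sqrt{\beta}\eps$. The centrepiece of the argument is a single ``aggregate weight perturbation'' estimate: I will show that, for any $\vx\in\Omega_N\setminus\partial_{2\eps}\Omega$,
\[ \sum_{\vy\in\Omega_N} \la \Wb_{\eps,\hat{\vx},\hat{\vy}} - \Wb_{\eps,\vx,\vy}\ra \leq \frac{CNr}{\eps}, \qquad \sum_{\vy\in\Omega_N} \l \Wb_{\eps,\hat{\vx},\hat{\vy}} + \Wb_{\eps,\vx,\vy}\r \leq CN, \]
with probability at least $1-CNe^{-cN\eps^d}$. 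The first bound splits into the two cases of (A4): when $\eta$ is Lipschitz, each summand is at most $2\Lip(\eta) r / \eps^{d+1}$ and the number of nonzero summands is $O(N\eps^d)$ by Lemma~\ref{lem:app:BallBound}; when $\eta=\one_{\cdot\leq 1}$, a summand can be nonzero only for $\vy$ in the annulus $|\vx-\vy|\in[\eps-2r,\eps+2r]$, whose cardinality is $O(Nr\eps^{d-1})$, again by Lemma~\ref{lem:app:BallBound} applied to that shell. Either way the total is $O(Nr/\eps) \leq O(N\sqrt{\beta})$ under the restriction on $r$.

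Given this estimate, the first three inequalities of Lemma~\ref{lem:app:AdvAss} are immediate. For the degree bound I write $|\hat{d}_{N,\eps}(\hat{\vx})-C_\eta N\rho(\hat{\vx})|$ as the triangle sum
\[ |\hat{d}_{N,\eps}(\hat{\vx}) - d_{N,\eps}(\vx)| + |d_{N,\eps}(\vx) - C_\eta N\rho(\vx)| + C_\eta N|\rho(\vx)-\rho(\hat{\vx})|; \]
the first term is $O(N\sqrt{\beta})$ by the aggregate weight bound, the second by Lemma~\ref{lem:app:iidAss}, and the third is $O(Nr) \leq O(N\sqrt{\beta}\eps)$ using $\rho \in \Cr{2}$. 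The lower bound $\hat{d}_{N,\eps}(\hat{\vx})\geq cN$ follows because $d_{N,\eps}(\vx)\geq cN$ and the aggregate weight perturbation is $O(N\sqrt{\beta})\ll cN$; the labeled-degree bound $\hat{p}_{N,\eps}(\hat{\vx})\geq cN\beta$ follows identically after restricting to $\vy\in\Gamma_N$ and invoking Lemma~\ref{lem:app:BallBound} with $\alpha=\beta$ (so the perturbation error is $O(N\beta \sqrt{\beta})$, still absorbed by $cN\beta$).

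The fourth (graph Laplacian) bound is the main obstacle and requires somewhat more care. I would bound
\[ \left|\tfrac{1}{N\eps^2}\hat{\cL}_N\varphi(\hat{\vx}) - \cL\varphi(\hat{\vx})\right| \leq \left|\tfrac{1}{N\eps^2}\hat{\cL}_N\varphi(\hat{\vx}) - \tfrac{1}{N\eps^2}\cL_N\varphi(\vx)\right| + \left|\tfrac{1}{N\eps^2}\cL_N\varphi(\vx) - \cL\varphi(\vx)\right| + |\cL\varphi(\vx) - \cL\varphi(\hat{\vx})|. \]
The middle term is $O(\|\varphi\|_{\Cr{3}}\sqrt{\beta}/\eps)$ by Lemma~\ref{lem:app:iidAss}, and the last term is $O(\|\varphi\|_{\Cr{3}} r)\leq O(\|\varphi\|_{\Cr{3}}\sqrt{\beta}\eps) \leq O(\|\varphi\|_{\Cr{3}}\sqrt{\beta}/\eps)$ for $\eps<1$ since $\cL\varphi$ is Lipschitz in terms of $\|\varphi\|_{\Cr{3}}$ and $\rho\in\Cr{2}$. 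For the first term I expand each summand as
\[ \Wb_{\hat{\vx},\hat{\vy}}(\varphi(\hat{\vx})-\varphi(\hat{\vy})) - \Wb_{\vx,\vy}(\varphi(\vx)-\varphi(\vy)) = \l\Wb_{\hat{\vx},\hat{\vy}} - \Wb_{\vx,\vy}\r(\varphi(\hat{\vx})-\varphi(\hat{\vy})) + \Wb_{\vx,\vy}\l(\varphi(\hat{\vx})-\varphi(\hat{\vy}))-(\varphi(\vx)-\varphi(\vy))\r. \]
On each nonzero term the function differences satisfy $|\varphi(\hat{\vx})-\varphi(\hat{\vy})|\leq C\eps\|\varphi\|_{\Cr{1}}$ and $|(\varphi(\hat{\vx})-\varphi(\hat{\vy}))-(\varphi(\vx)-\varphi(\vy))| \leq Cr\|\varphi\|_{\Cr{2}}$. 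Summing and applying the aggregate weight perturbation bound to the first piece and the aggregate total weight bound to the second gives a total of $O(Nr\|\varphi\|_{\Cr{2}})$, so that after dividing by $N\eps^2$ we obtain $O(\|\varphi\|_{\Cr{2}} r/\eps^2) \leq O(\|\varphi\|_{\Cr{3}} \sqrt{\beta}/\eps)$ by the restriction on $r$. A final union bound over the $O(N)$ events underlying Lemma~\ref{lem:app:iidAss} and the aggregate weight estimate yields the stated probability $1-CNe^{-cN\beta\eps^d}$. The chief difficulty is handling the discontinuous case $\eta=\one_{\cdot\leq 1}$ in the Laplacian bound, because individual summands then have size $\eps^{-d-2}$ rather than something small; it is only the shell count $O(Nr\eps^{d-1})$ from Lemma~\ref{lem:app:BallBound} that saves the bound.
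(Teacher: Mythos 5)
Your proposal follows essentially the same route as the paper's proof: a pointwise bound on $\la \Wb_{\vx,\vy}-\Wb_{\hat{\vx},\hat{\vy}}\ra$ split into the Lipschitz and indicator cases of (A4), aggregated via Lemma~\ref{lem:app:BallBound}, then triangle inequalities against Lemma~\ref{lem:app:iidAss}, with the identical two-term decomposition of the Laplacian difference. The only cosmetic deviation is that in the indicator case the paper obtains the annulus count as a difference of two ball counts with fluctuation parameter $\vartheta=\sqrt{\beta}$ (which contributes an extra $CN\vartheta$ term, absorbed into the final $CN\sqrt{\beta}$ at probability $1-CNe^{-cN\beta\eps^d}$), rather than your direct shell estimate of $O(Nr\eps^{d-1})$, which as stated slightly understates the stochastic fluctuation but does not change the conclusion.
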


\begin{proof}
%
Let $\dist(\hat{\vx},\partial\Omega)> 2\eps$.
We first consider a bound on the difference between weights $\la \Wb_{\vx,\vy} - \Wb_{\hat{\vx},\hat{\vy}}\ra$.
If $\eta$ is Lipschitz continuous we have,
\begin{align*}
\la \Wb_{\vx,\vy} - \Wb_{\hat{\vx},\hat{\vy}}\ra & = \frac{1}{\eps^d} \la \eta\l\frac{\vx-\vy}{\eps}\r - \eta\l\frac{\hat{\vx}-\hat{\vy}}{\eps}\r\ra \\
 & \leq \frac{\Lip(\eta)}{\eps^{d+1}} \la |\vx-\vy| - |\hat{\vx}-\hat{\vy}|\ra \one_{|\vx-\vy|\leq2(\eps+r)} \\
 & \leq \frac{2\Lip(\eta)r}{\eps^{d+1}} \one_{|\vx-\vy|\leq2(\eps+r)}.
\end{align*}
On the other hand, If $\eta(t) = \one_{t<1}$ then we have
\[ \la \Wb_{\vx,\vy} - \Wb_{\hat{\vx},\hat{\vy}}\ra \leq \frac{1}{\eps^d} \one_{\eps-2r\leq |\vx-\vy|\leq \eps+2r}. \]

Now, for the first inequality we have, when $\eta$ is Lipschitz continuous,
\begin{align*}
\la \hat{d}_{N,\eps}(\hat{\vx}) - d_{N,\eps}(\vx) \ra & \leq \sum_{\vy\in\Omega_N} \la \Wb_{\hat{\vx},\hat{\vy}} - \Wb_{\vx,\vy}\ra \\
 & \leq \frac{Cr}{\eps^{d+1}} \#\lb \vy\in\Omega_N\,:\, |\vx-\vy|\leq 2(\eps+r)\rb \\
 & \leq \frac{CrN(\eps+r)^d}{\eps^{d+1}} \\
 & \leq CN\sqrt{\beta}
\end{align*}
by Lemma~\ref{lem:app:BallBound} with probability at least $1-CNe^{-cN\eps^d}$.
And when $\eta(t) = \one_{t<1}$,
\begin{align*}
\la \hat{d}_{N,\eps}(\hat{\vx}) - d_{N,\eps}(\vx) \ra & \leq \frac{1}{\eps^d} \#\lb \vy\in\Omega_N\,:\, \eps-2r\leq |\vx-\vy|\leq\eps+2r\rb \\
 & \leq \frac{CN}{\eps^d} \ls \l 1+\vartheta+ a(\eps+2r)\r\l \eps+2r \r^d - \l 1-\vartheta-a(\eps-2r)\r \l \eps-2r \r^d \rs \\
 & \leq CN(\sqrt{\beta}+\vartheta)
\end{align*}
by Lemma~\ref{lem:app:BallBound} with probability at least $1-CNe^{-cN\eps^d\vartheta^2}$, choosing $\vartheta=\sqrt{\beta}$ we have, in both cases,
\[ \la \hat{d}_{N,\eps}(\hat{\vx}) - d_{N,\eps}(\vx) \ra \leq \sum_{\vy\in\Omega_N} \la \Wb_{\vx,\vy} - \Wb_{\hat{\vx},\hat{\vy}}\ra \leq CN\sqrt{\beta} \]
with probability at least $1-CNe^{-cN\beta\eps^d}$.
By the triangle inequality and Lemma~\ref{lem:app:iidAss}
\[ \la \hat{d}_{N,\eps}(\vx) - C_\eta N\rho(\vx)\ra \leq \la \hat{d}_{N,\eps}(\vx) - d_{N,\eps}(\vx)\ra + \la d_{N,\eps}(\vx) - C_\eta N\rho(\vx)\ra \leq CN\sqrt{\beta} \]
with probability at least $1-CNe^{-cN\beta\eps^d}$.

By assuming that $c$ is sufficiently small (where $r\leq c\beta\eps$) we can make $C$ in the above bound arbitrarily small.
Hence we can assume that $C< C_\eta\rho_{\min}$ and therefore the second inequality holds.

Similarly, for the third inequality when $\eta$ is Lipschitz,
\begin{align*}
\la \hat{p}_{N,\eps}(\hat{\vx}) - p_{N,\eps}(\vx) \ra & \leq \sum_{\vy\in\Gamma_N} \la \Wb_{\hat{\vx},\hat{\vy}} - \Wb_{\vx,\vy}\ra \\
 & \leq \frac{Cr}{\eps^{d+1}} \#\lb \vy\in\Gamma_N\,:\, |\vx-\vy|\leq 2(\eps+r)\rb \\
 & \leq \frac{CrN\beta(\eps+r)^d}{\eps^{d+1}} \\
 & \leq CN\beta^{\frac32}
\end{align*}
by Lemma~\ref{lem:app:BallBound} with probability at least $1-CNe^{-cN\beta\eps^d}$.
And, if $\eta = \one_{\cdot<1}$,
\[ \la \hat{p}_{N,\eps}(\hat{\vx}) - p_{N,\eps}(\vx) \ra \leq \frac{1}{\eps^d} \#\lb \vy\in\Gamma_N\,:\, \eps-2r\leq |\vx-\vy|\leq\eps+2r\rb \leq CN\beta^{\frac32} \]
by Lemma~\ref{lem:app:BallBound} with probability at least $1-CNe^{-cN\beta\eps^d}$.
Again, by assuming that $c$ is sufficiently small (where $r\leq c\beta\eps$) we can make $C$ in the above bound arbitrarily small so that
\[ \hat{p}_{N,\eps}(\hat{\vx}) \geq p_{N,\eps}(\vx) - \la \hat{p}_{N,\eps}(\hat{\vx}) - p_{N,\eps}(\vx) \ra \geq cN\beta - CN\beta^{\frac32} \geq cN\beta \]
by Lemma~\ref{lem:app:iidAss} with probability at least $1-CNe^{-cN\beta\eps^d}$.

For the final inequality we have
\begin{align*}
\la \hat{\cL}_{N} \varphi(\hat{\vx}) - \cL_{N}\varphi(\vx) \ra & \leq \la \sum_{\vy\in\Omega_N} \l \Wb_{\hat{\vx},\hat{\vy}} - \Wb_{\vx,\vy}\r \l \varphi(\hat{\vx}) - \varphi(\hat{\vy})\r \ra \\
 & \hspace{1cm} + \la \sum_{\vy\in\Omega_N} \Wb_{\vx,\vy} \l \varphi(\hat{\vx}) - \varphi(\hat{\vy}) - \varphi(\vx) + \varphi(\vy) \r \ra \\
 & \leq \|\varphi\|_{\Cr{1}(\overline{\Omega})}(\eps+2r) \sum_{\vy\in\Omega_N} \la \Wb_{\hat{\vx},\hat{\vy}} - \Wb_{\vx,\vy}\ra \\
 & \hspace{1cm} + 2r\|\varphi\|_{\Cr{1}(\overline{\Omega})} \sum_{\vy\in\Omega_N} \Wb_{\vx,\vy} \\
 & \leq CN\|\varphi\|_{\Cr{1}(\overline{\Omega})}(\eps+2r)\sqrt{\beta} + CN\|\varphi\|_{\Cr{1}(\overline{\Omega})}\sqrt{\beta}\eps \\
 & \leq CN\|\varphi\|_{\Cr{1}(\overline{\Omega})}\sqrt{\beta}\eps
\end{align*}
by Lemma~\ref{lem:app:iidAss} with probability at least $1-CNe^{-cN\beta\eps^d}$.
\end{proof}

It is now very easy to prove Theorem~\ref{thm:Setting:Robust:RobRadCorDat}.

\begin{proof}[Proof of Theorem~\ref{thm:Setting:Robust:RobRadCorDat} and Theorem~\ref{thm:MainRes:LargeDataLimit}.]
By Theorem~\ref{thm:app:Conv}, Lemmas~\ref{lem:app:LapStab}, \ref{lem:app:iidAss} and~\ref{lem:app:AdvAss}, and the Lipschitz condition on $\ell$ we have
\begin{align*}
|u(\vx) - \ell(\vx)| & \leq \frac{C\eps}{\sqrt{\beta}}\log\frac{\sqrt{\beta}}{\eps} \\
|\hat{w}(\hat{\vx}) - \ell(\hat{\vx})| & \leq \frac{C\eps}{\sqrt{\beta}}\log\frac{\sqrt{\beta}}{\eps} \\
|\hat{w}(\hat{\vx}) - \hat{u}(\hat{\vx})| & \leq C\sqrt{\beta}\eps \\
|\ell(\hat{\vx}) - \ell(\vx)| & \leq C\sqrt{\beta}\eps
\end{align*}
with probability at least $1-CNe^{-cN\beta\eps^d}$.
So by the triangle inequality
\[ |u(\vx) - \hat{u}(\hat{\vx})| \leq \frac{C\eps}{\sqrt{\beta}}\log\frac{\sqrt{\beta}}{\eps} \]
and
\[ |\ell(\vx) - \hat{u}(\hat{\vx})| \leq \frac{C\eps}{\sqrt{\beta}}\log\frac{\sqrt{\beta}}{\eps} \]
with probability at least $1-CNe^{-cN\beta\eps^d}$.
This implies that $\cR_\delta(\Omega^\prime,u,D_N)\geq r$ with probability at least $1-CNe^{-cN\beta\eps^d}$.
\end{proof}

Corollary~\ref{cor:MainRes:LapClass} is not difficult to show using the Theorem~\ref{thm:Setting:Robust:RobRadCorDat}.

\begin{proof}[Proof of Corollary~\ref{cor:MainRes:LapClass}.]
We work on the set of realisations of $\{\vx_i\}_{i=1}^\infty$ such that the conclusions of Theorem~\ref{thm:Setting:Robust:RobRadCorDat} hold; i.e. the following statements hold with probability at least $1-CNe^{-cN\beta\eps^d}$.
Let us define $\hat{v}(\hat{\vx}) = v(\hat{\vx};\hat{D}_N)$ and 
\[ \Omega_\delta = \lb \vx \, :\, \frac12 - 2\delta > \ell(\vx) \text{ or } \ell(\vx) \geq \frac12 + 2 \delta\rb. \]
By Assumption~\ref{eq:MainRes:ExAss} we have that $\Vol(\Omega_\delta^c)\leq C\delta$.
Moreover, for $\vx\in\Omega_N$ such that $\ell(\vx)\geq \frac12 + 2\delta$ we have
\[ \hat{u}(\hat{\vx}) \geq u(\vx) - \delta \geq \ell(\vx) - 2\delta \geq \frac12 \]
so $|v(\vx) - \hat{v}(\hat{\vx})|=0$.
And similarly, if $\vx\in\Omega_N$ is such that $\ell(\vx)< \frac12 - 2\delta$ we have
\[ \hat{u}(\hat{\vx}) \leq u(\vx) + \delta \leq \ell(\vx) + 2\delta < \frac12 \]
so $|v(\vx) - \hat{v}(\hat{\vx})|=0$.
Hence, for any $\vx\in\Omega_N\cap \Omega_\delta$ we have $|v(\vx) - \hat{v}(\hat{\vx})|=0$.
Therefore, $\cR_0(\Omega_\delta,v,D_N) \geq r$ as required.
\end{proof}

\section{Robust Accuracy Under Different Black-Box Adversarial Attacks}\label{Appendix:RobustAcc:BlackBox:Attack}
In this section, we report the robustness results of the GL- and $k$NN-based classifiers under the three different BB attacks, where these BB attacks attack the substitute models: logistic regression (LR), neural net (NN), and kernel classifier (Kernel). 
For Abalone classification, GL-based classifiers performs better than the corresponding $k$NN-based classifiers; in particular, the RobustGL (GL classifier with the pruned dataset that satisfies the $a$-separation condition \citep{wang2017analyzing}) outperforms all the other classifiers under three BB attacks with different maximum perturbation. Data augmentation with adversarial data can also enhance the classifiers' adversarial robustness. These adversarial defenses can also improve $k$NN-based classifiers for Abalone classification. For Halfmoon classification, GL-based classifiers again outperforms $k$NN-based classifiers consistently. Furthermore, with adversarial data augmentation, the classifiers' robustness can be significantly improved.
\begin{figure}[!ht]
\centering
\begin{tabular}{ccc}
\hskip -0.6cm\includegraphics[clip, trim=0cm 0cm 0cm 0cm,width=0.33\columnwidth]{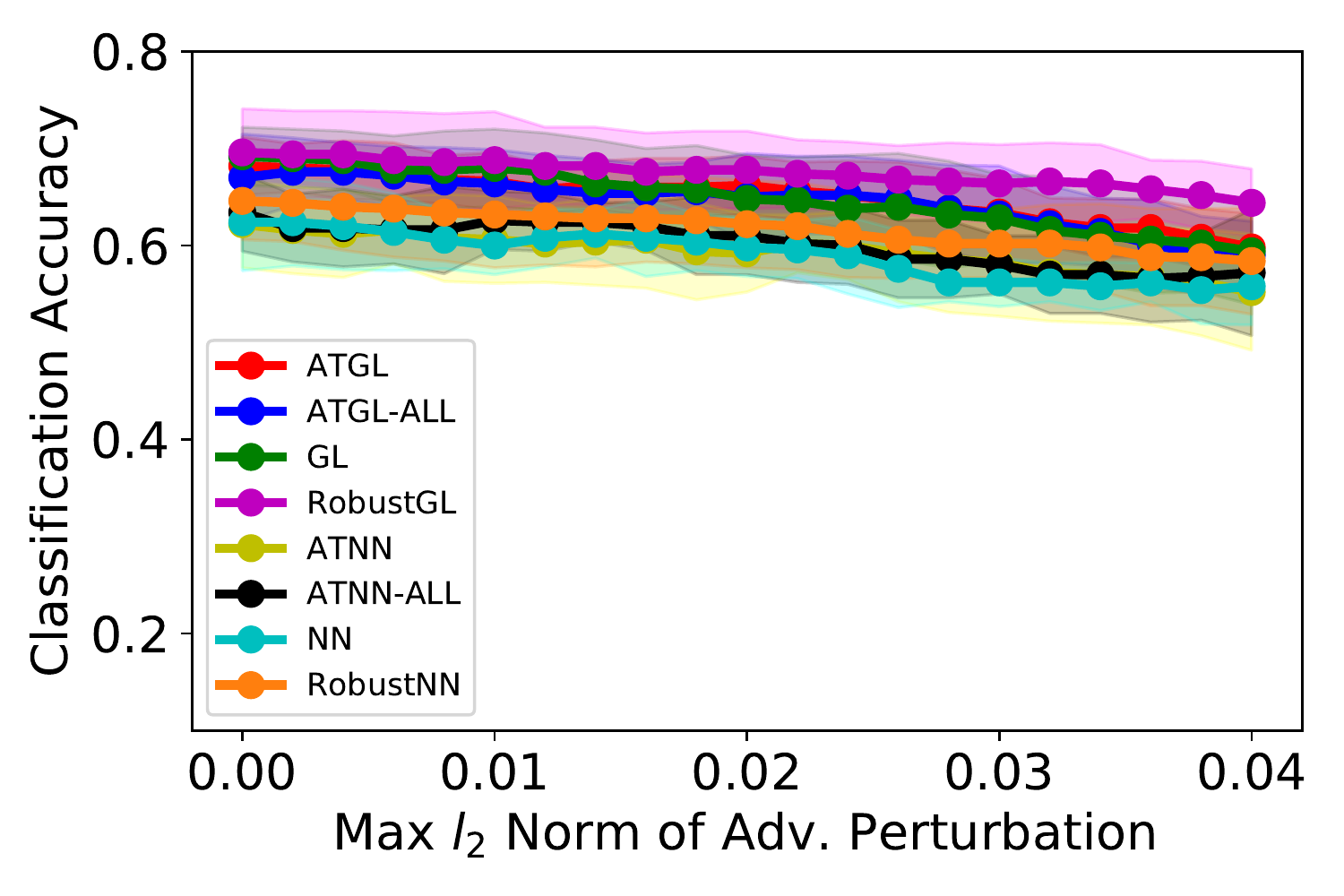}&
\hskip -0.3cm\includegraphics[clip, trim=0cm 0cm 0cm 0cm,width=0.33\columnwidth]{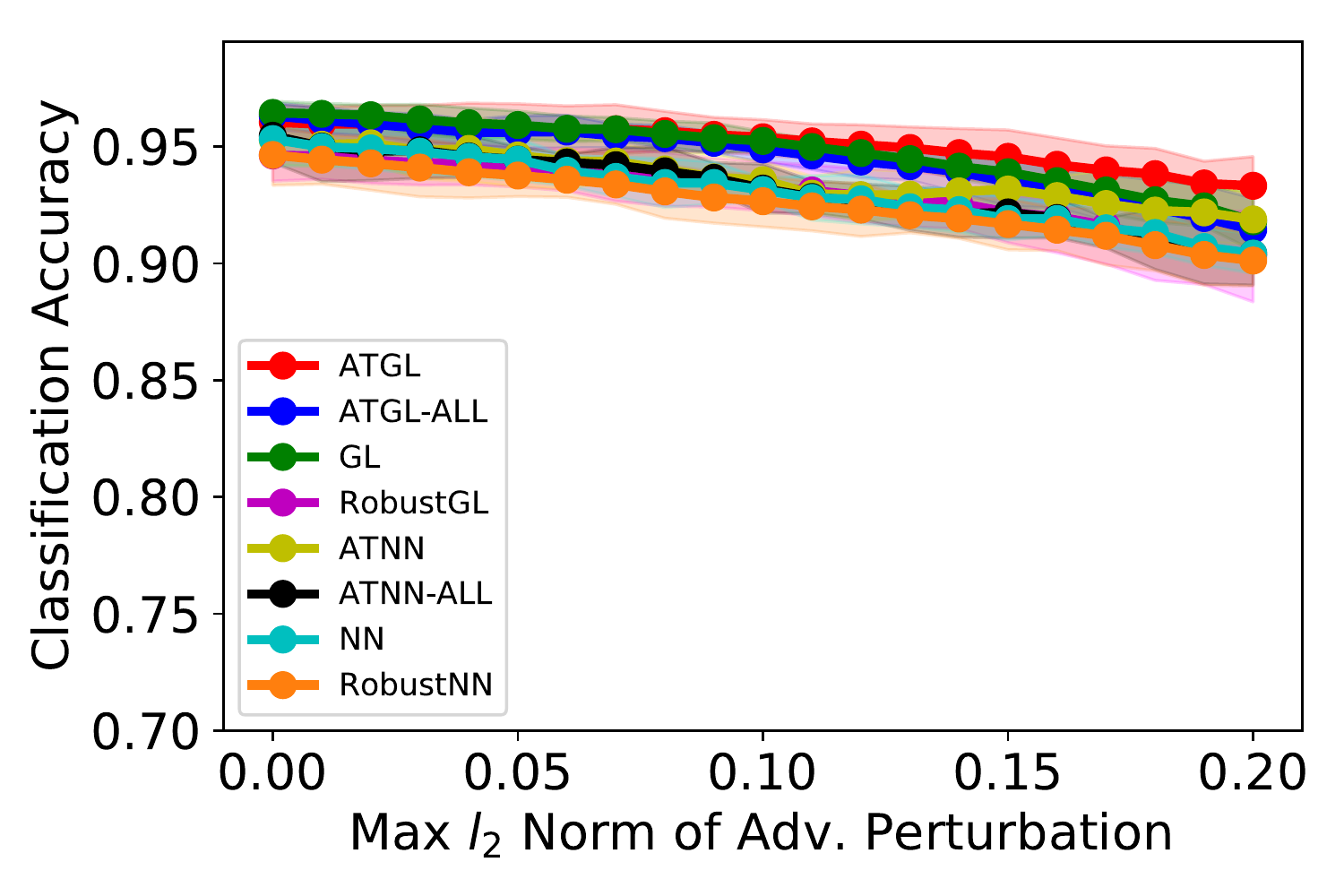}&
\hskip -0.3cm\includegraphics[clip, trim=0cm 0cm 0cm 0cm,width=0.33\columnwidth]{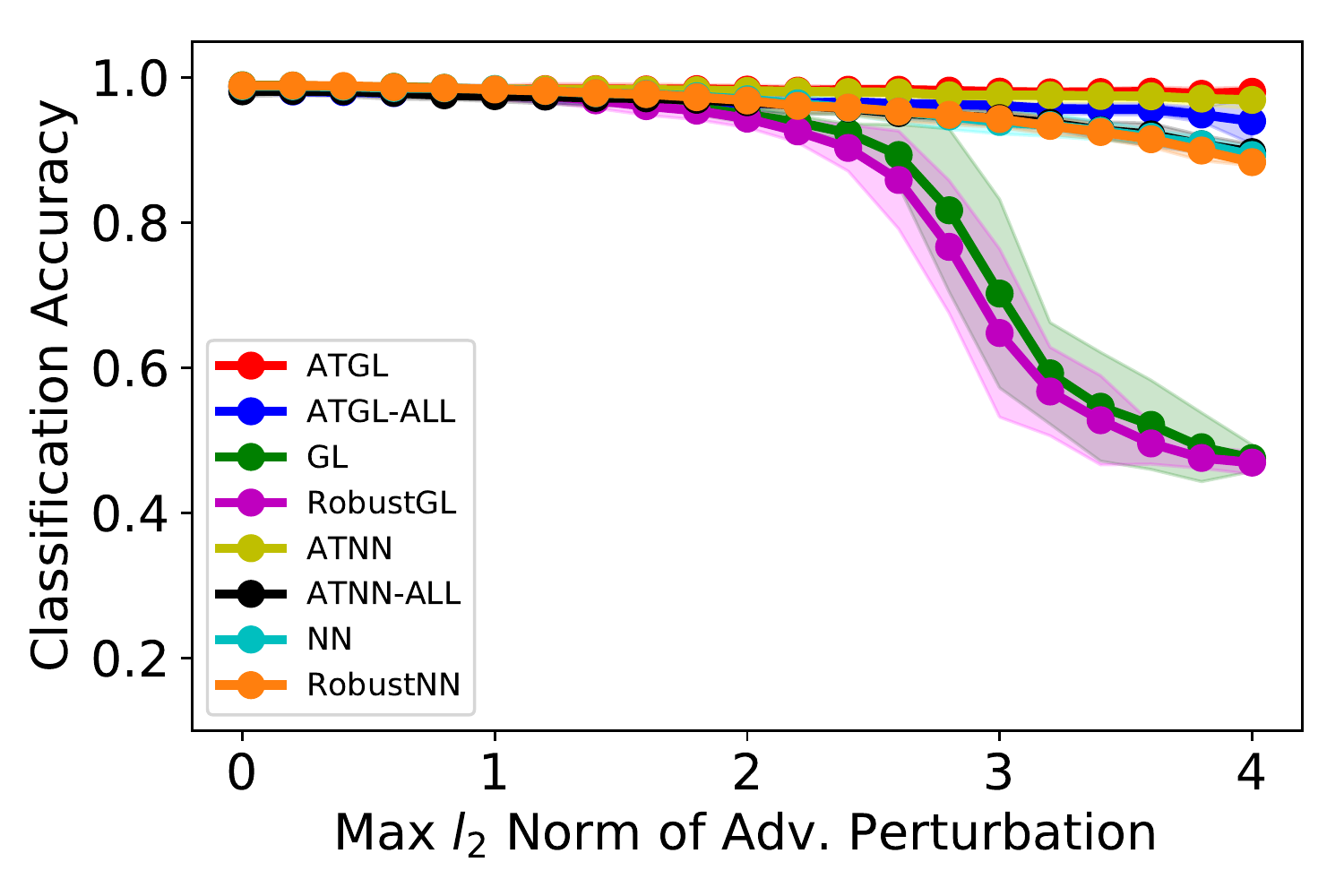}\\
{\footnotesize (a) Abalone} & {\footnotesize (b) Halfmoon} & {\footnotesize (c) MNIST 1v7} \\
\end{tabular}
\caption{Robust accuracies of GL- vs. $k$NN-based classifiers for three different datasets classification under black-box attack using logistic regression as the substitute model with different maximum perturbation in $l_2$-norm. (Best viewed on a computer screen.)}
\label{fig:acc:lr}
\end{figure}

For MNIST 1v7 classification, GL-based classifiers are not always more robust than $k$NN-based classifiers. Nevertheless, ATGL or ATGL-ALL (GL with adversarial data augmentation) gives the most robust classification under different BB attacks. 

\begin{figure}[!ht]
\centering
\begin{tabular}{ccc}
\hskip -0.6cm\includegraphics[clip, trim=0cm 0cm 0cm 0cm,width=0.33\columnwidth]{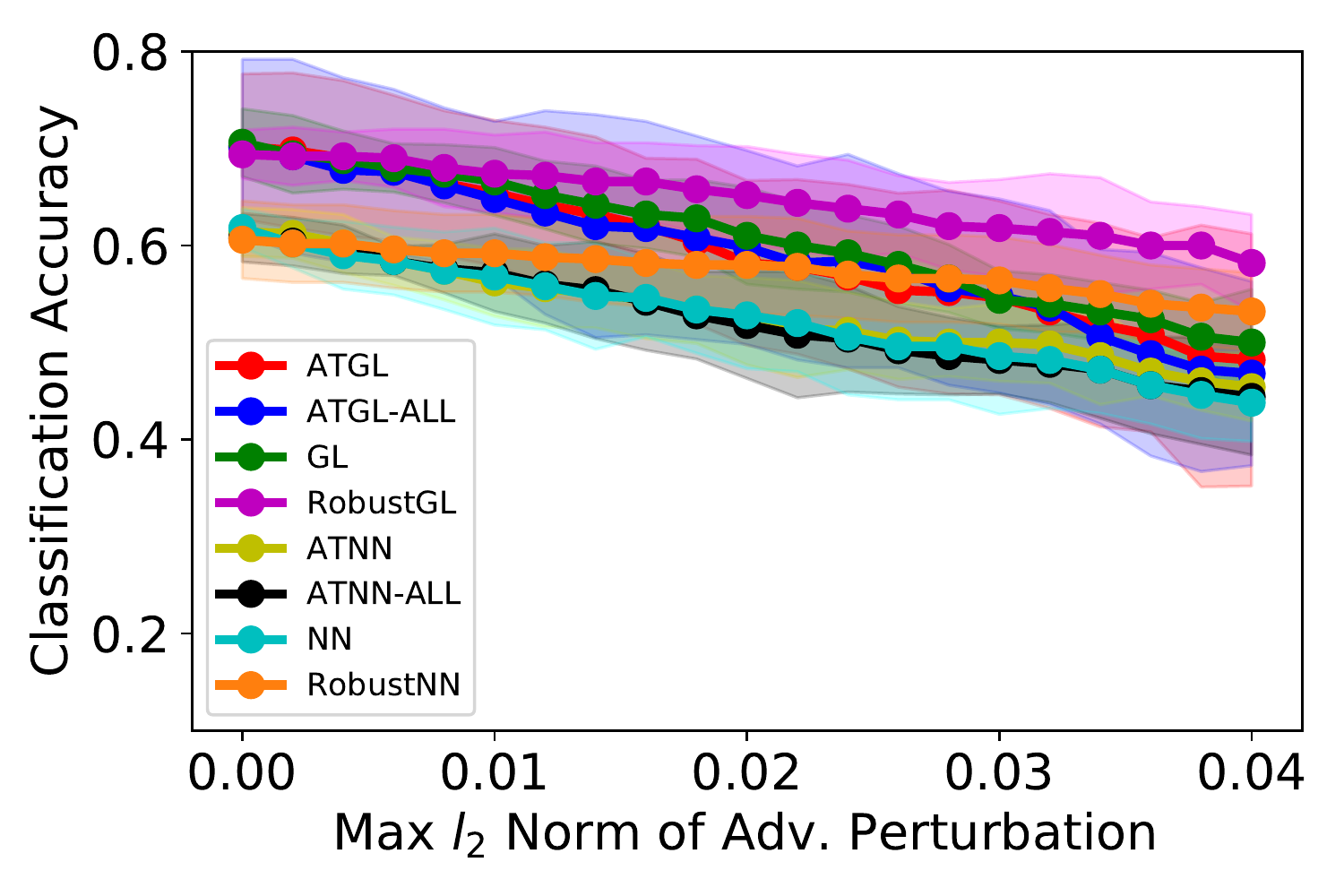}&
\hskip -0.3cm\includegraphics[clip, trim=0cm 0cm 0cm 0cm,width=0.33\columnwidth]{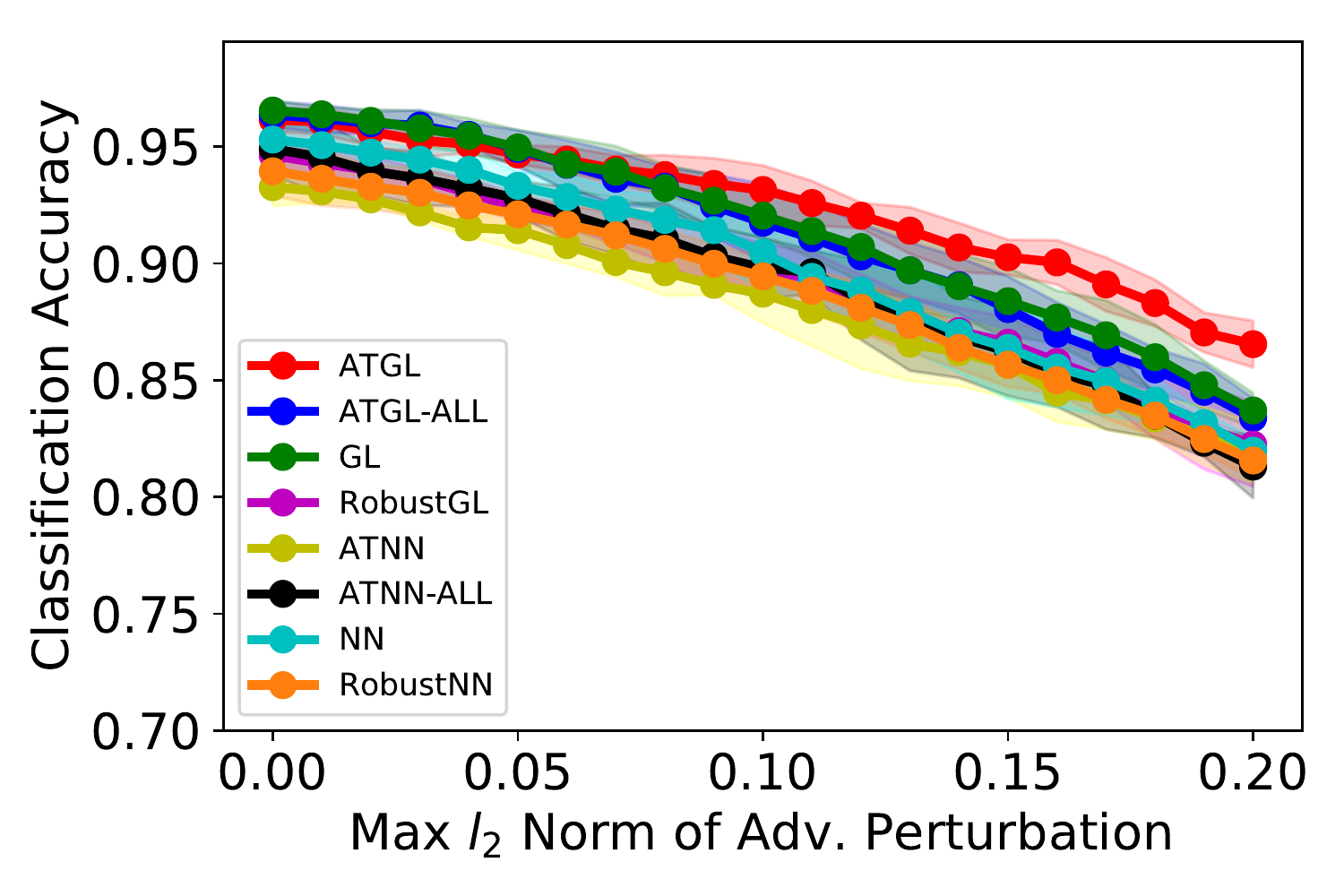}&
\hskip -0.3cm\includegraphics[clip, trim=0cm 0cm 0cm 0cm,width=0.33\columnwidth]{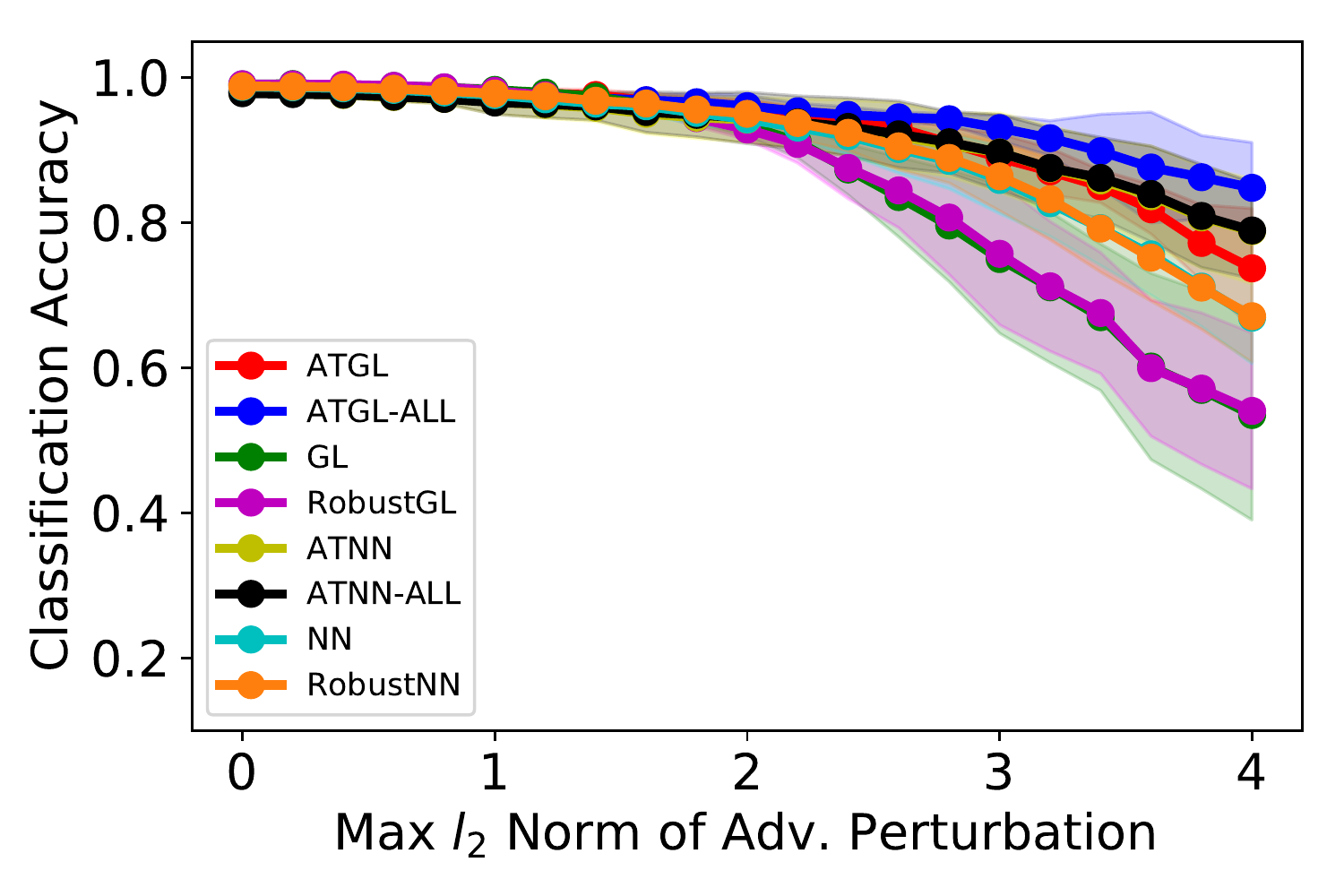}\\
{\footnotesize (a) Abalone} & {\footnotesize (b) Halfmoon} & {\footnotesize (c) MNIST 1v7} \\
\end{tabular}
\caption{Robust accuracies of GL vs. $k$NN classifiers for three different datasets classification under BB attack using kernel model as the substitute model with different maximum perturbation in $l_2$-norm. (Best viewed on a computer screen.)}
\label{fig:acc:kernel}
\end{figure}

\begin{figure}[!ht]
\centering
\begin{tabular}{ccc}
\hskip -0.6cm\includegraphics[clip, trim=0cm 0cm 0cm 0cm,width=0.33\columnwidth]{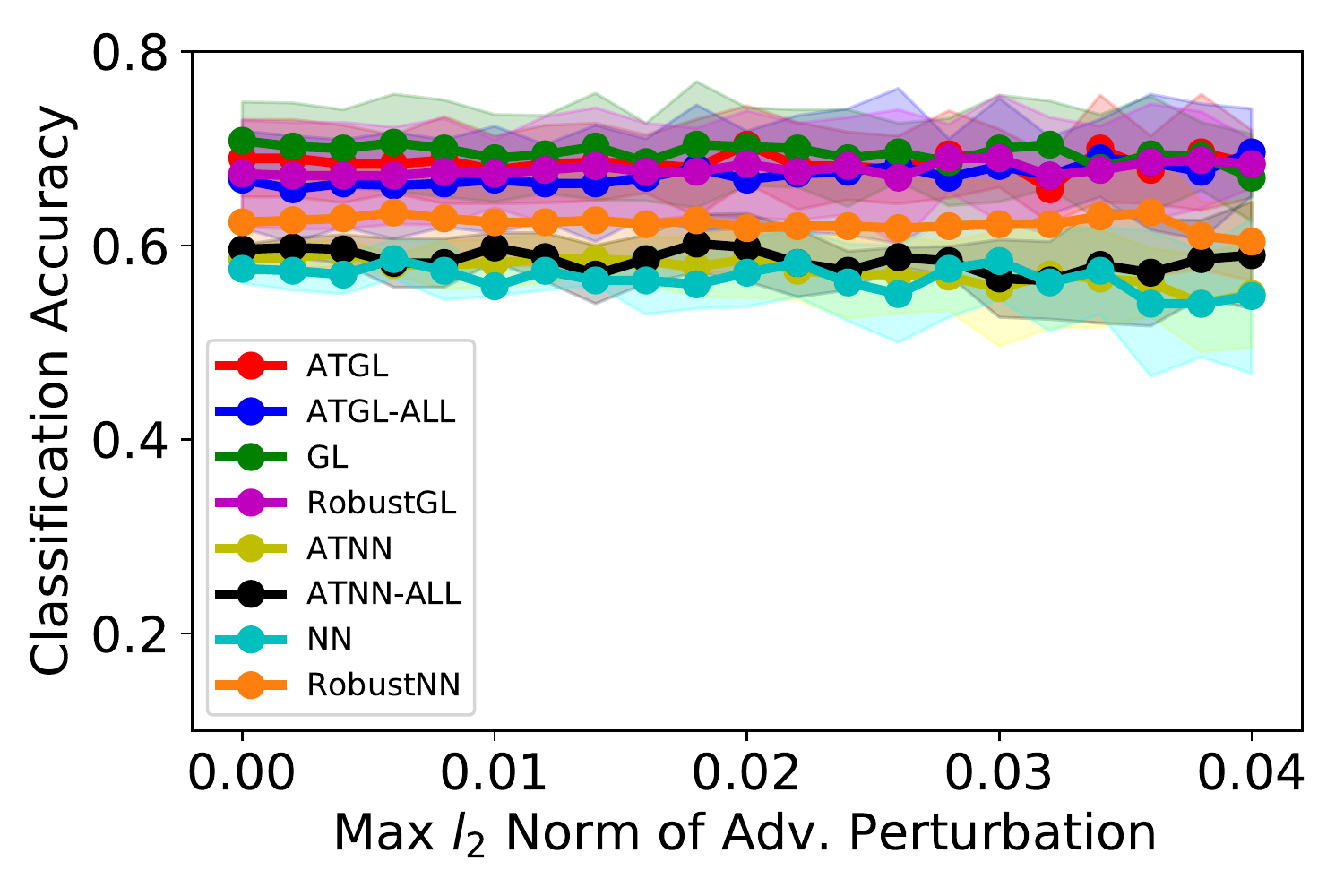}&
\hskip -0.3cm\includegraphics[clip, trim=0cm 0cm 0cm 0cm,width=0.33\columnwidth]{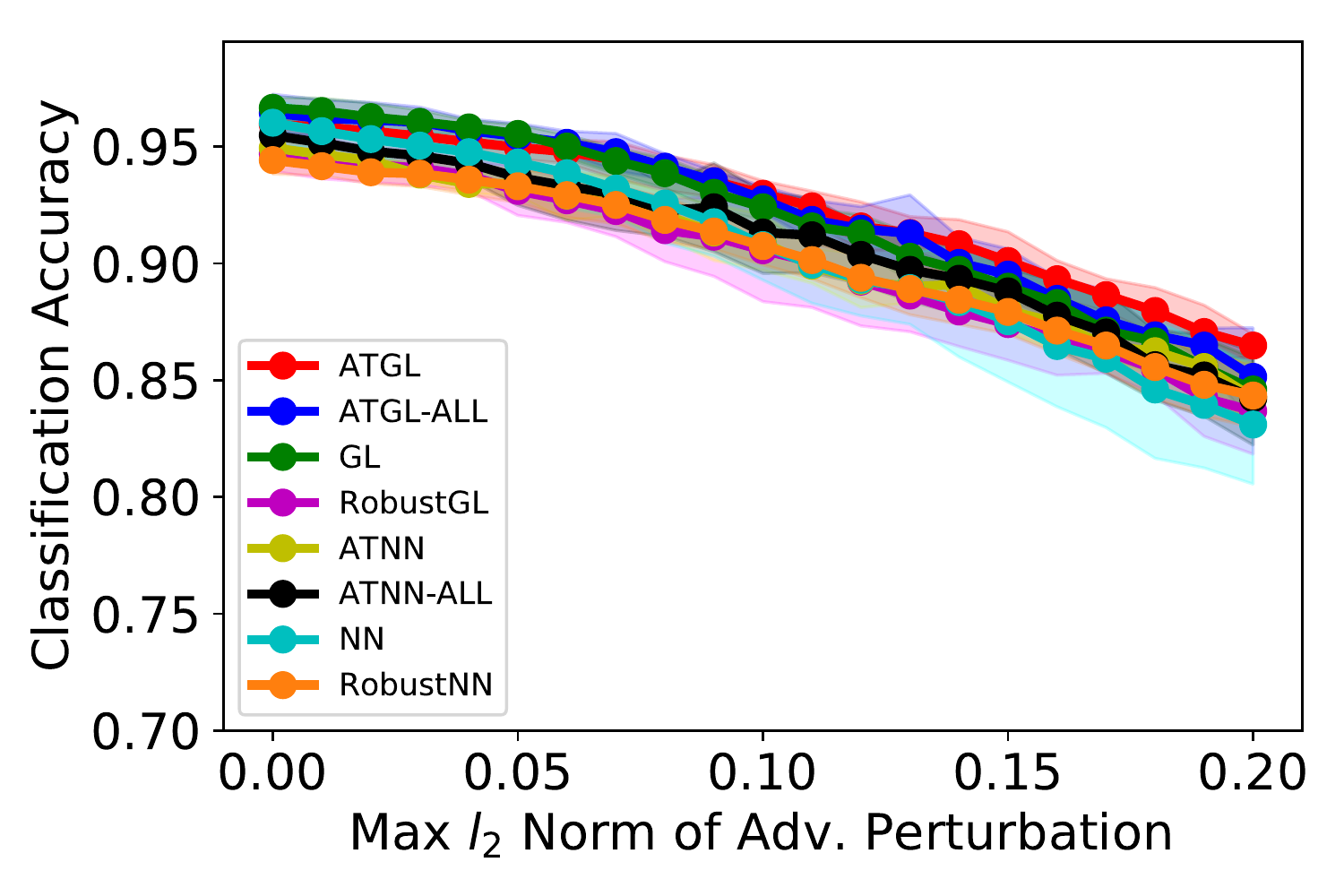}&
\hskip -0.3cm\includegraphics[clip, trim=0cm 0cm 0cm 0cm,width=0.33\columnwidth]{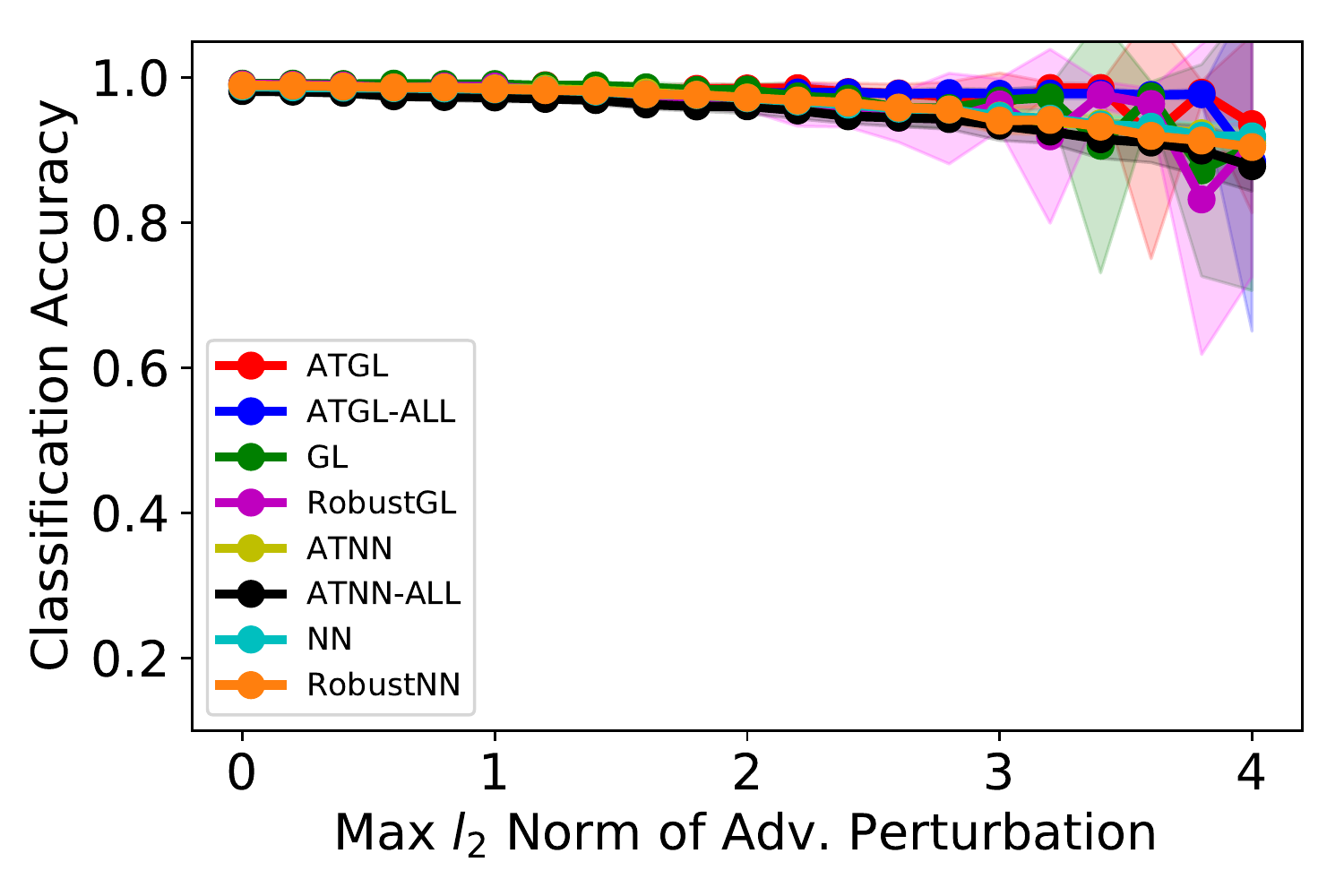}\\
{\footnotesize (a) Abalone} & {\footnotesize (b) Halfmoon} & {\footnotesize (c) MNIST 1v7} \\
\end{tabular}
\caption{Robust accuracies of GL vs. $k$NN classifiers for three different datasets classification under BB attack using neural net as the surrogate model with different maximum perturbation in $l_2$-norm. (Best viewed on a computer screen.)}
\label{fig:acc:nn}
\end{figure}

\section{Robust Accuracy with Different Number of Training Data}\label{appendix:RAcc:vs:Different-num-training-data}
We have numerically shown that as the number of training data increases, the robustness of GL-based classifiers on the Abalone dataset will increase under the WB attacks. In this section, we consider the effects of the number of training data on the classifiers' robustness under BB attacks. As shown in Fig.~\ref{fig:acc:diff:number:training-data:BBAttack:Abalone}, under the BB attack with LR or Kernel as the substitute model, both GL- and $k$NN-based classifiers become more robust as the number of training data increases. As we increase the number of training data from $100$ to $500$, the models classification accuracy can increase more than $5$\% under the BB attacks with different maximum perturbations. For BB attacks with $k$NN as the surrogate model, the GL- and $k$NN- based classifiers accuracy under adversarial attacks fluctuates, the average robust accuracy of the GL-based classifiers also has an upward trend as the number of training data increases.


\begin{figure}[!ht]
\centering
\begin{tabular}{ccc}
\hskip -0.6cm\includegraphics[clip, trim=0cm 0cm 0cm 0cm, width=0.33\columnwidth]{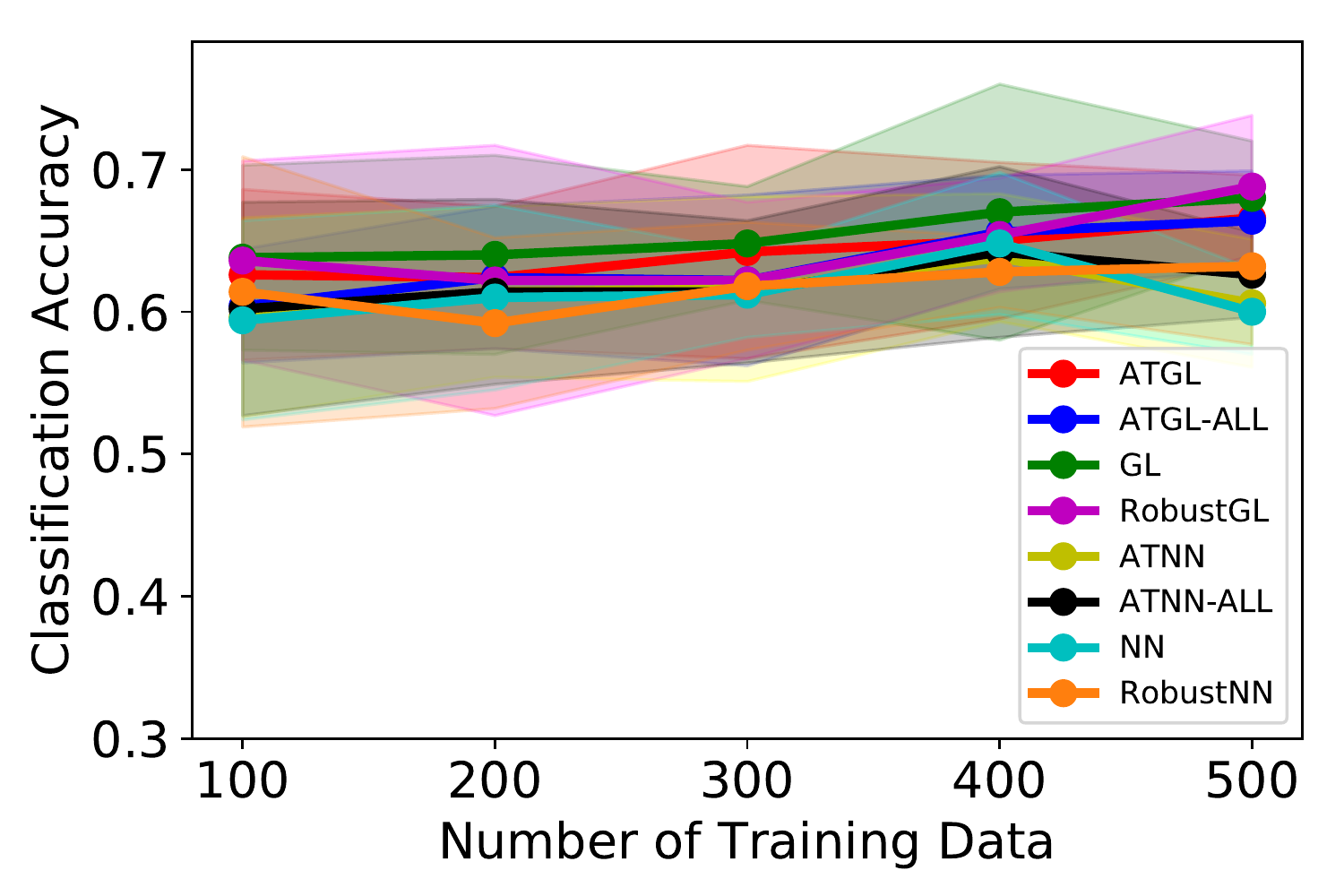}&
\hskip -0.4cm\includegraphics[clip, trim=0cm 0cm 0cm 0cm, width=0.33\columnwidth]{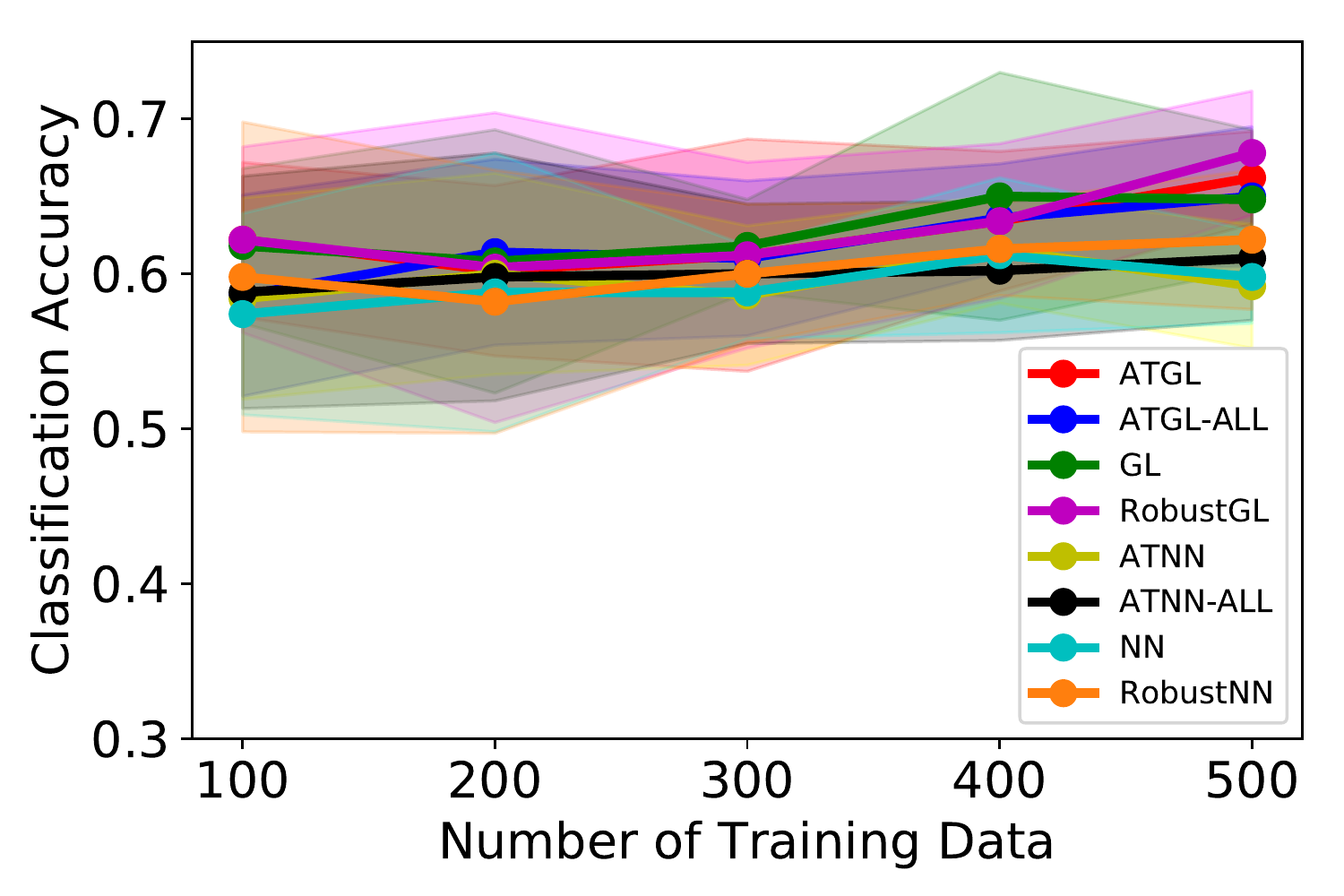}&
\hskip -0.4cm\includegraphics[clip, trim=0cm 0cm 0cm 0cm, width=0.33\columnwidth]{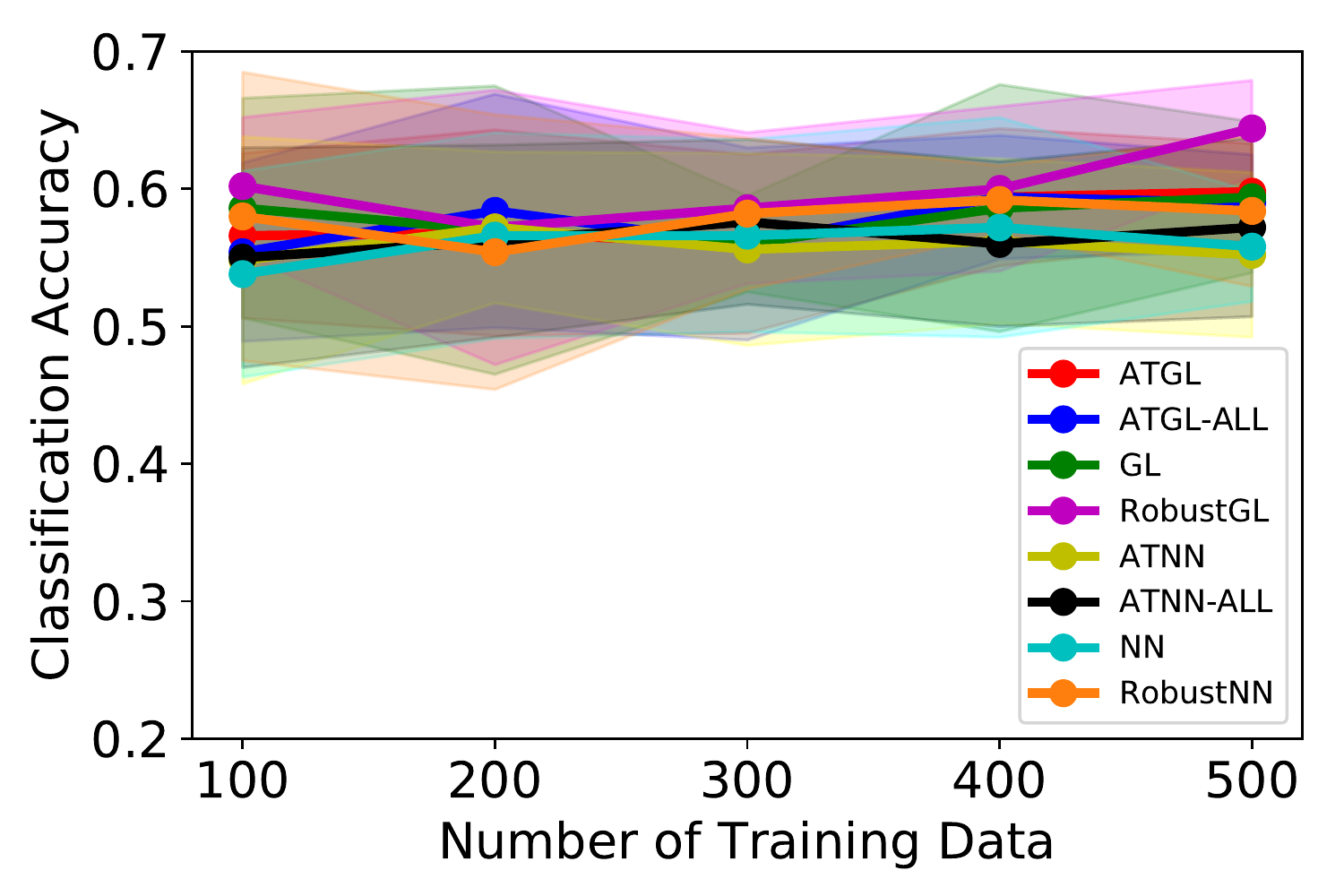}\\
LR, $r=0.01$ & LR, $r=0.02$ & LR, $r=0.04$ \\
\hskip -0.6cm\includegraphics[clip, trim=0cm 0cm 0cm 0cm, width=0.33\columnwidth]{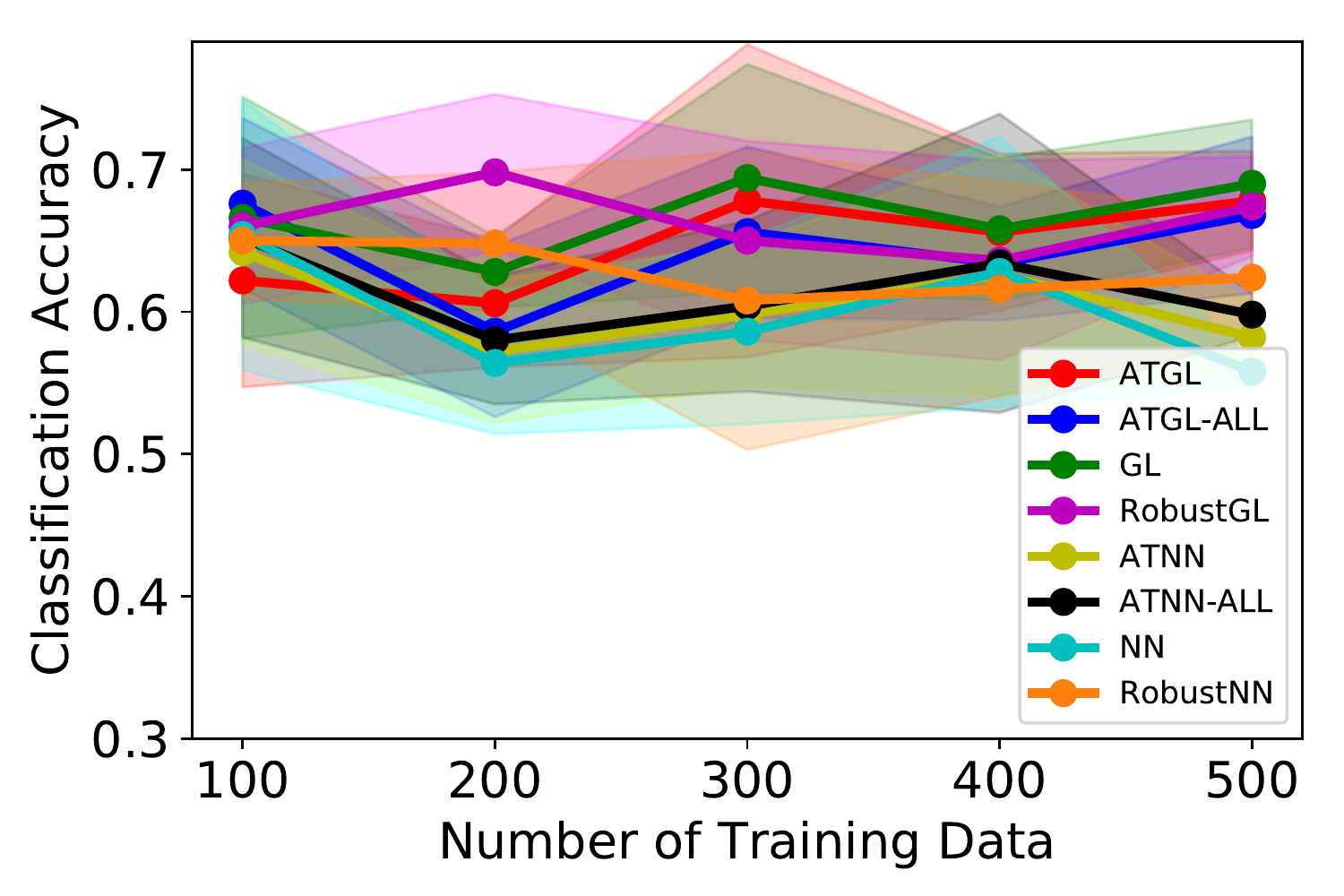}&
\hskip -0.4cm\includegraphics[clip, trim=0cm 0cm 0cm 0cm, width=0.33\columnwidth]{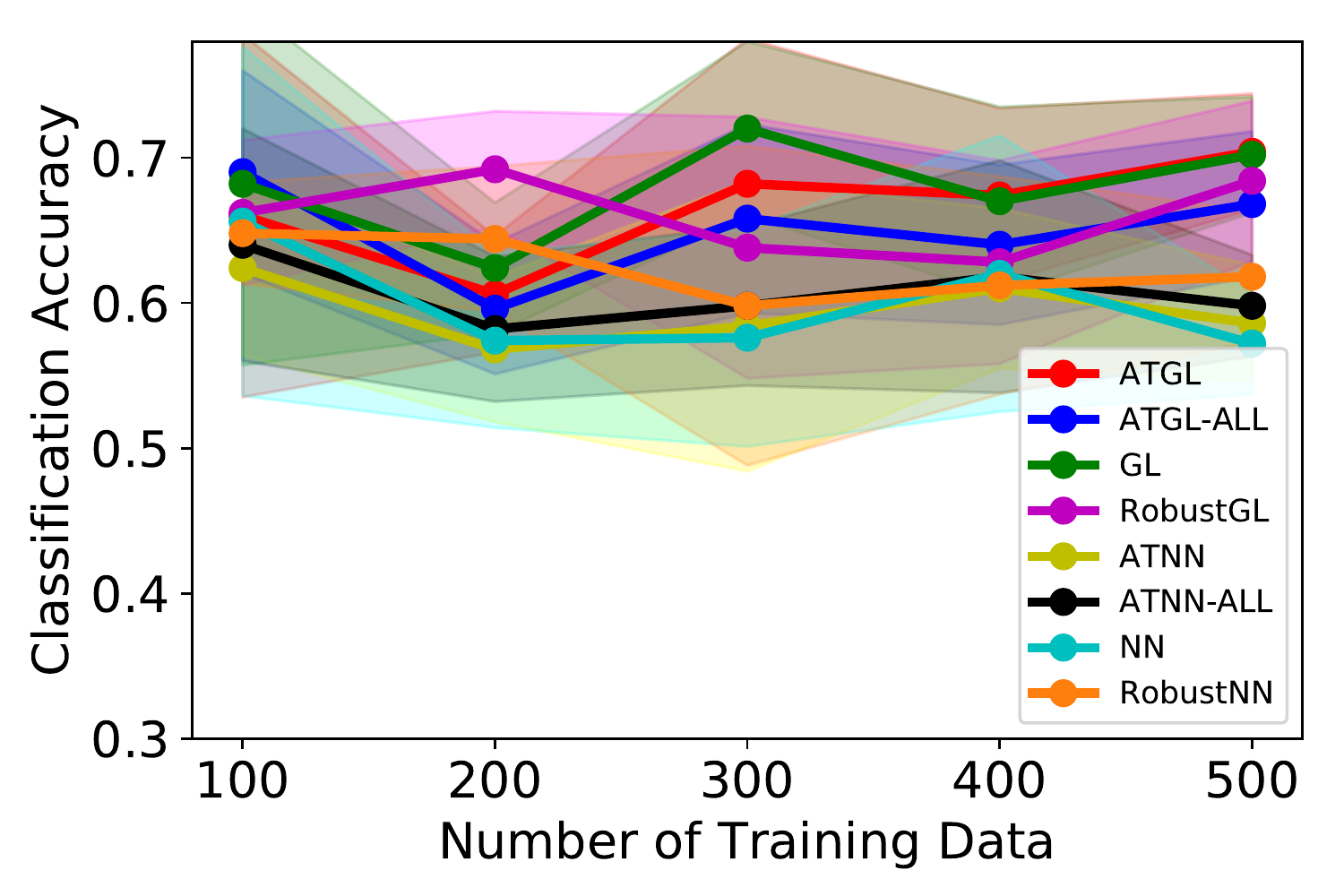}&
\hskip -0.4cm\includegraphics[clip, trim=0cm 0cm 0cm 0cm, width=0.33\columnwidth]{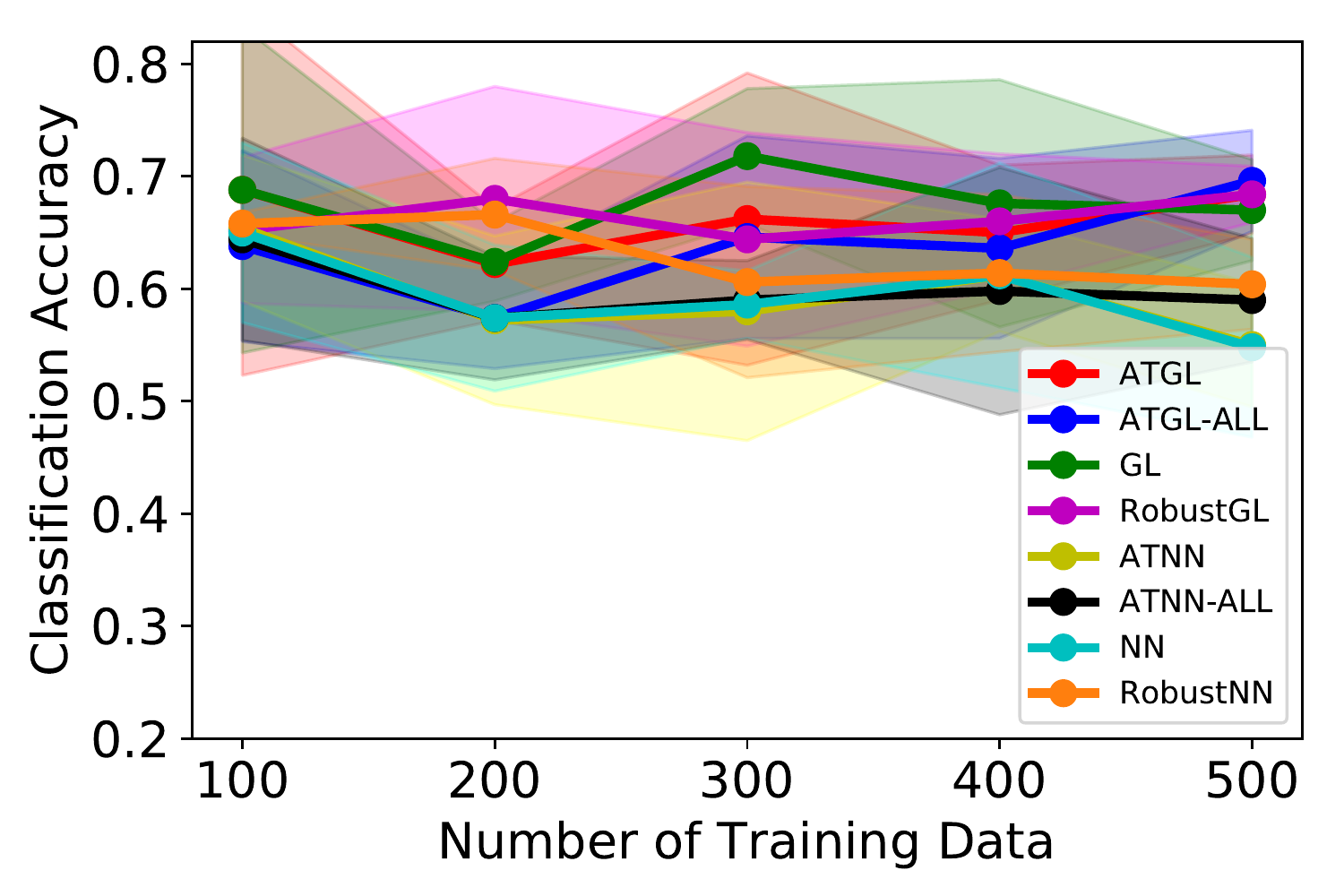}\\
NN, $r=0.01$ & NN, $r=0.02$ & NN, $r=0.04$ \\
\hskip -0.6cm\includegraphics[clip, trim=0cm 0cm 0cm 0cm, width=0.33\columnwidth]{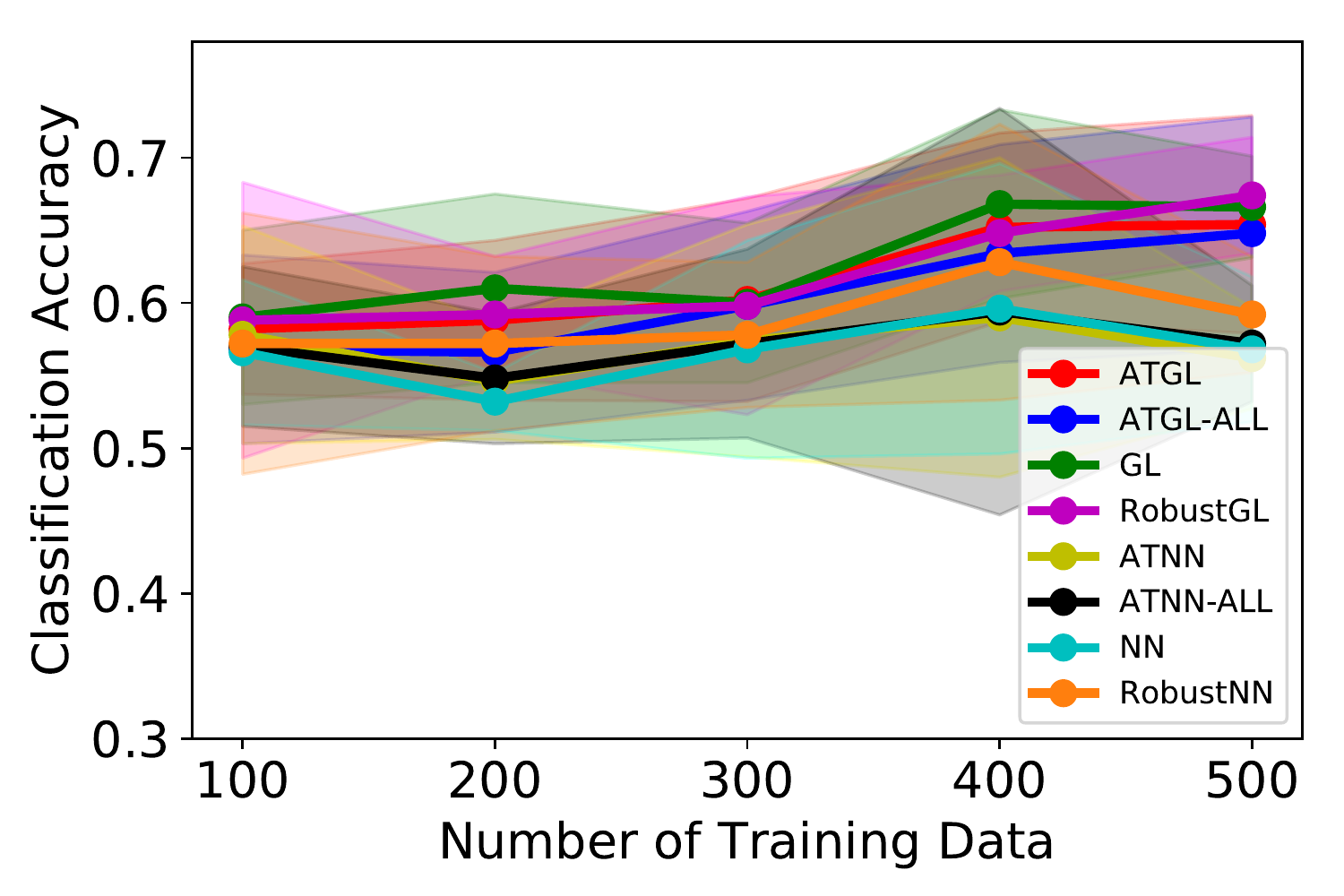}&
\hskip -0.4cm\includegraphics[clip, trim=0cm 0cm 0cm 0cm, width=0.33\columnwidth]{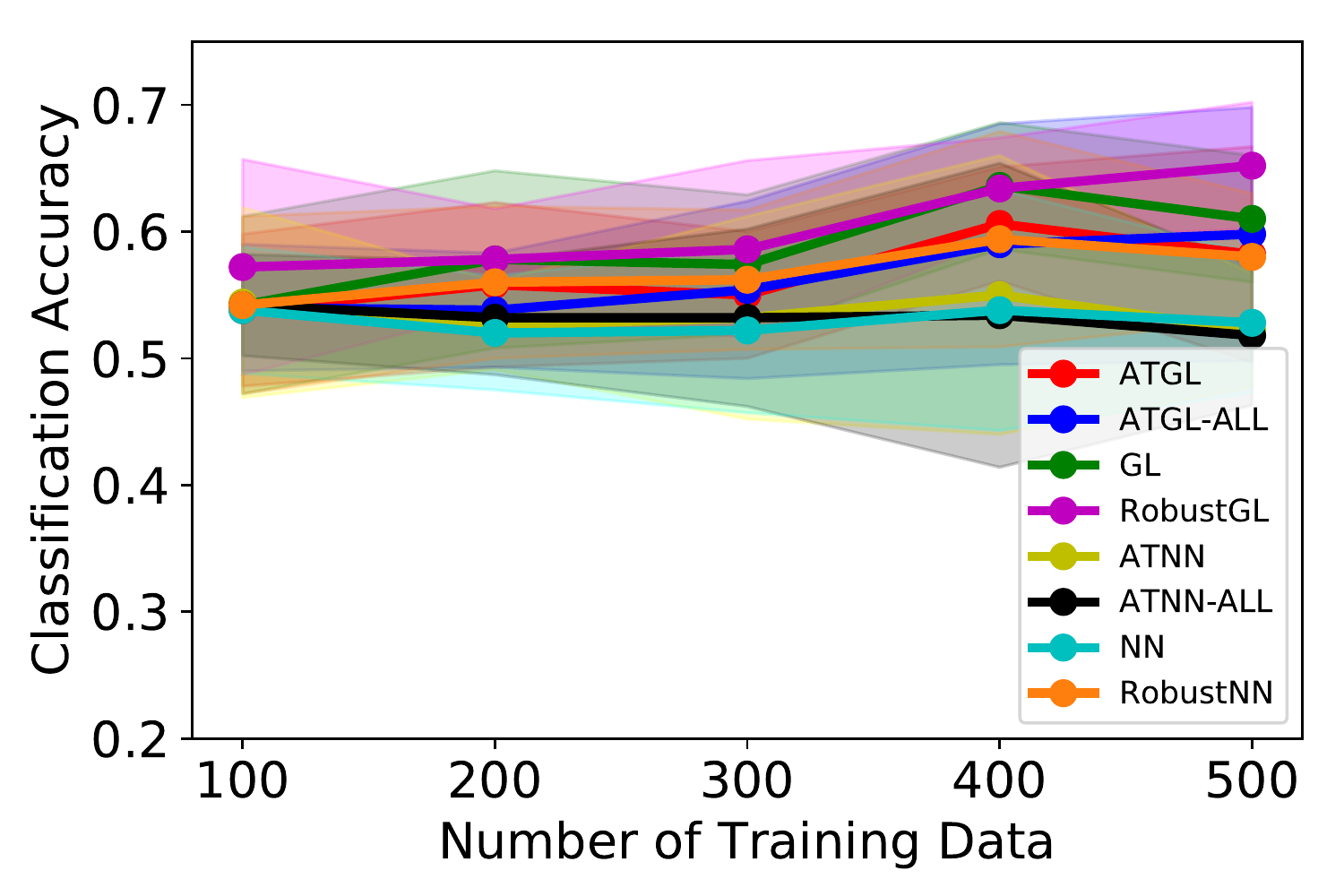}&
\hskip -0.4cm\includegraphics[clip, trim=0cm 0cm 0cm 0cm, width=0.33\columnwidth]{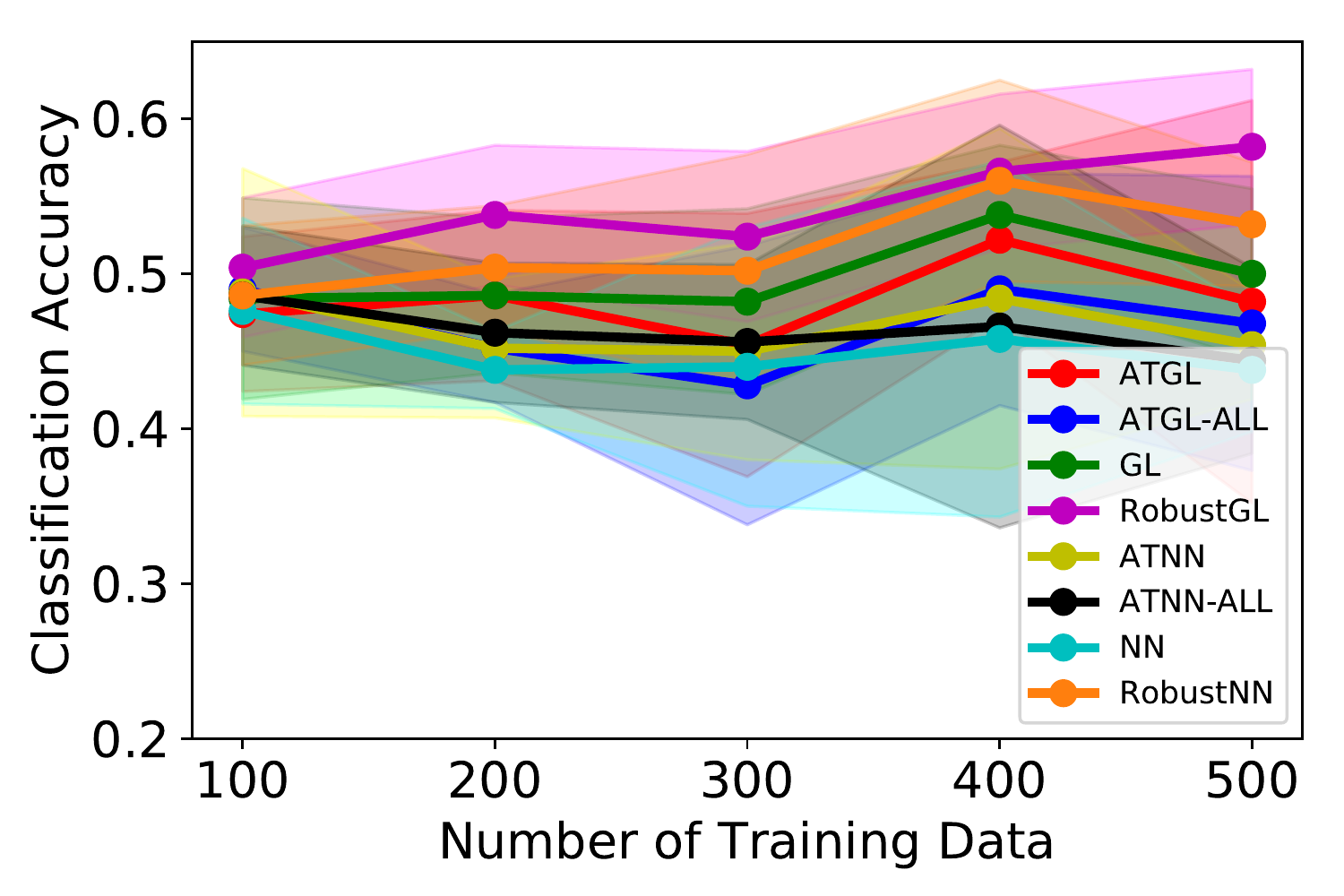}\\
Kernel, $r=0.01$ & Kernel, $r=0.02$ & Kernel, $r=0.04$ \\
\end{tabular}
\caption{Robust accuracies of GL vs. $k$NN classifiers, trained with different numbers of training data, for classifying the Abalone dataset under the BB attacks. For both LR and Kernel surrogate models, both GL and $k$NN classifiers becomes more robust as the number of training data increases. When the NN is used as the substitute model, the accuracy fluctuates as the size of training data increases. (Best viewed on a computer screen.)
}
\label{fig:acc:diff:number:training-data:BBAttack:Abalone}
\end{figure}

\section{Visualizing the Adversarial Examples of GL-Based Classifiers 
}\label{sec:Vis:BB:Attack}
In this part, we visualize the adversarial examples of the MNIST 1v7 and the Halfmoon datasets under adversarial attacks with $r=4$ and $0.2$, respectively. For the MNIST 1v7, under the strong WB attacks ($r=4$); in particular, the KSA attack, many adversarial digit $1$'s are hard for us to classify, and this explains there is a sharp accuracy drop when $r$ is large (see Fig.~\ref{fig:acc:wb-wb-kerbel} (f)). For the Halfmoon dataset, when we apply the strong WB-KSA attack, the adversarial examples enter the opposite territory, which leads to misclassification. The GL-based classifiers leverage the global geometric information to improve the classification robustness in this scenario.

\begin{figure}[!ht]
\centering
\begin{tabular}{cccc}
\hskip -1.0cm
\includegraphics[clip, trim=0cm 0cm 0cm 0cm, width=0.3\columnwidth]{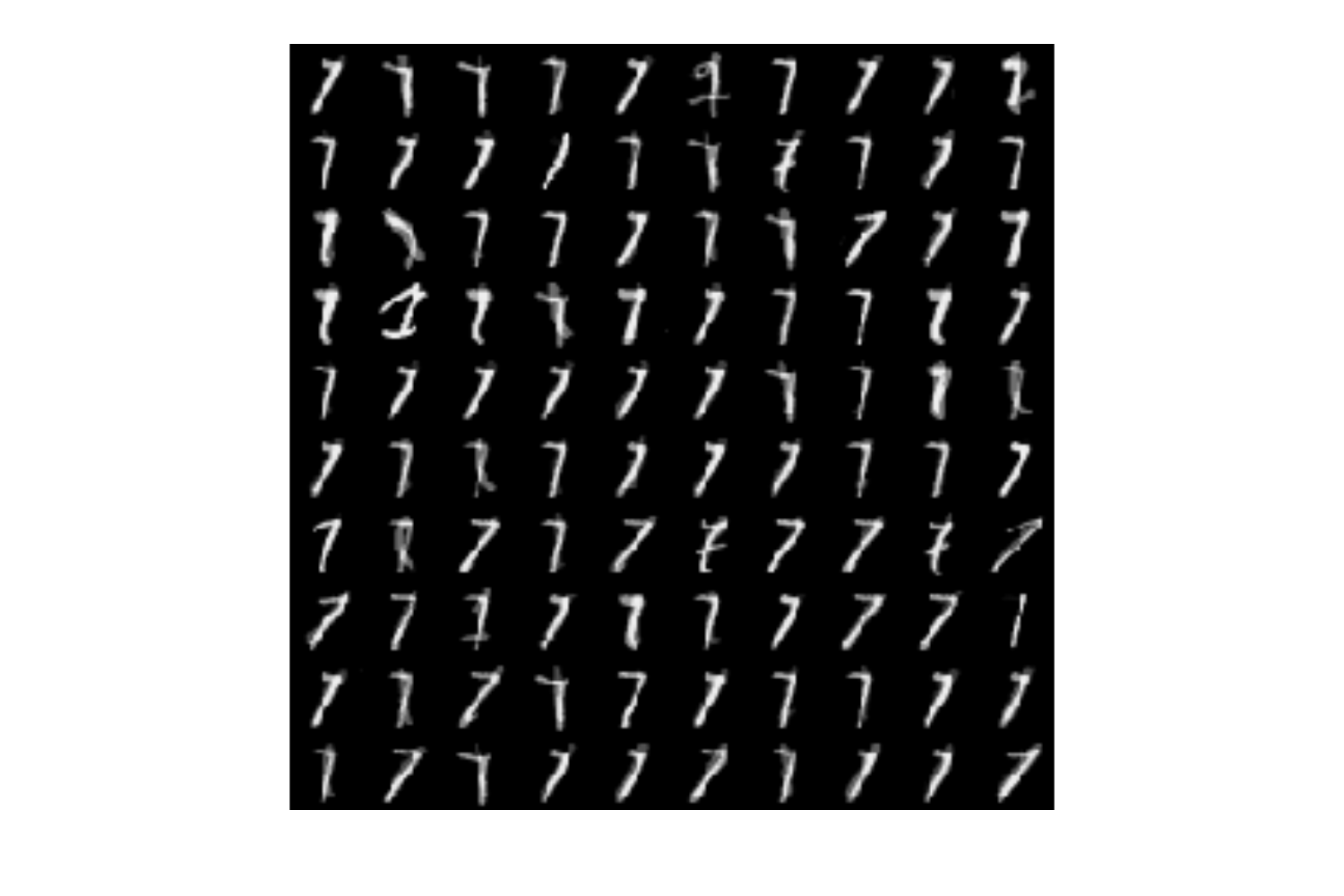}&
\hskip -1.4cm
\includegraphics[clip, trim=0cm 0cm 0cm 0cm, width=0.3\columnwidth]{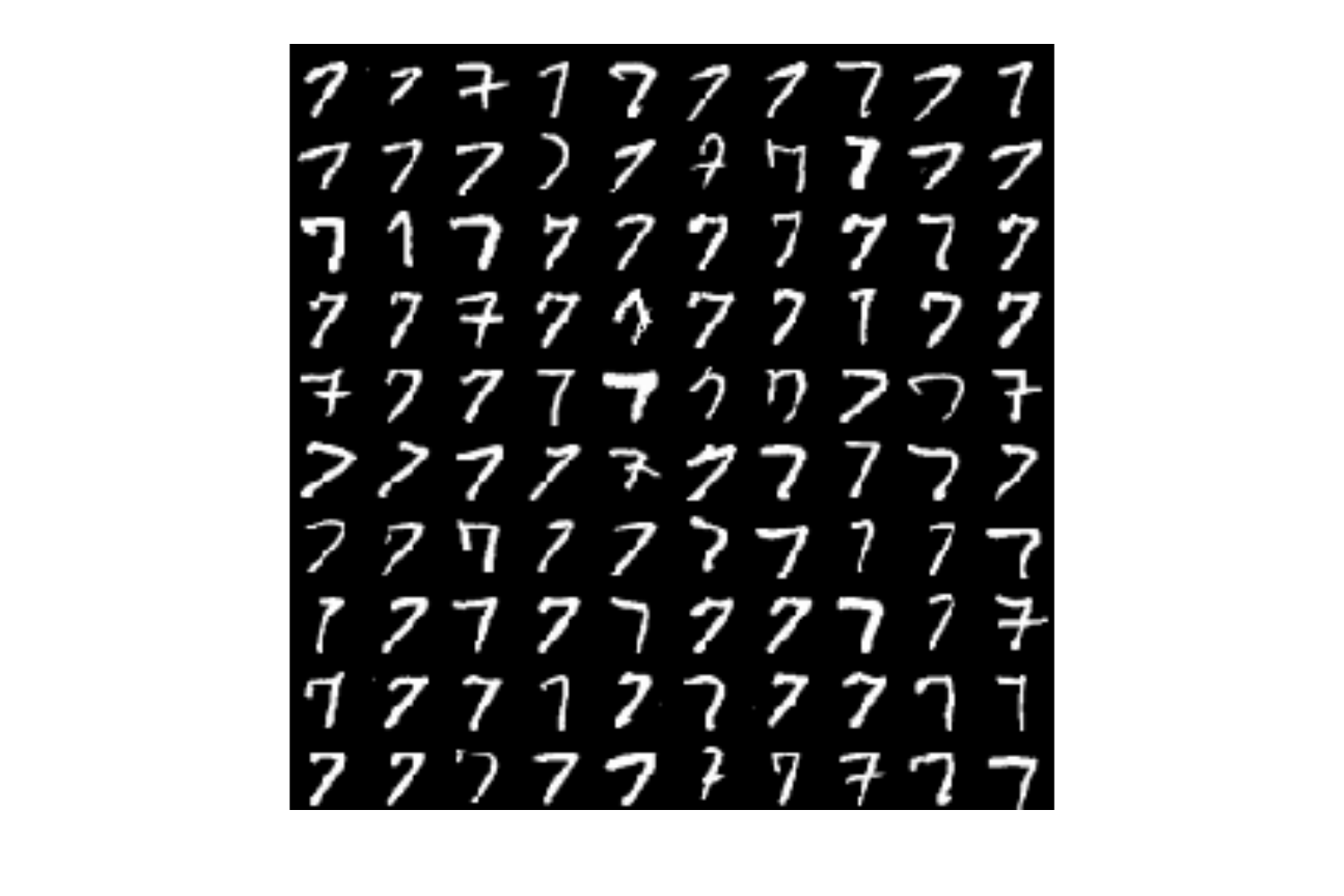}&
\hskip -1.4cm
\includegraphics[clip, trim=0cm 0cm 0cm 0cm, width=0.3\columnwidth]{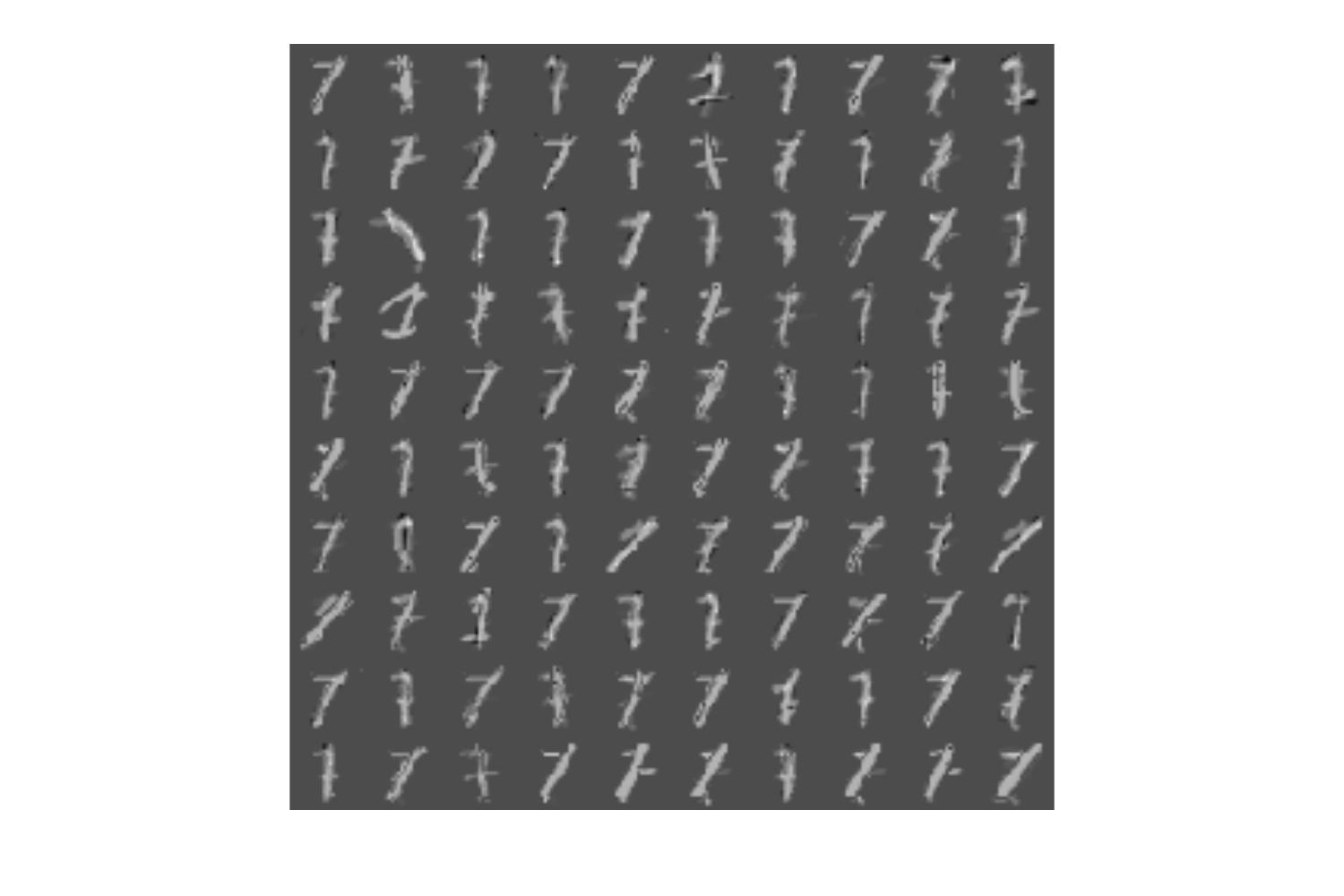}&
\hskip -1.4cm
\includegraphics[clip, trim=0cm 0cm 0cm 0cm, width=0.3\columnwidth]{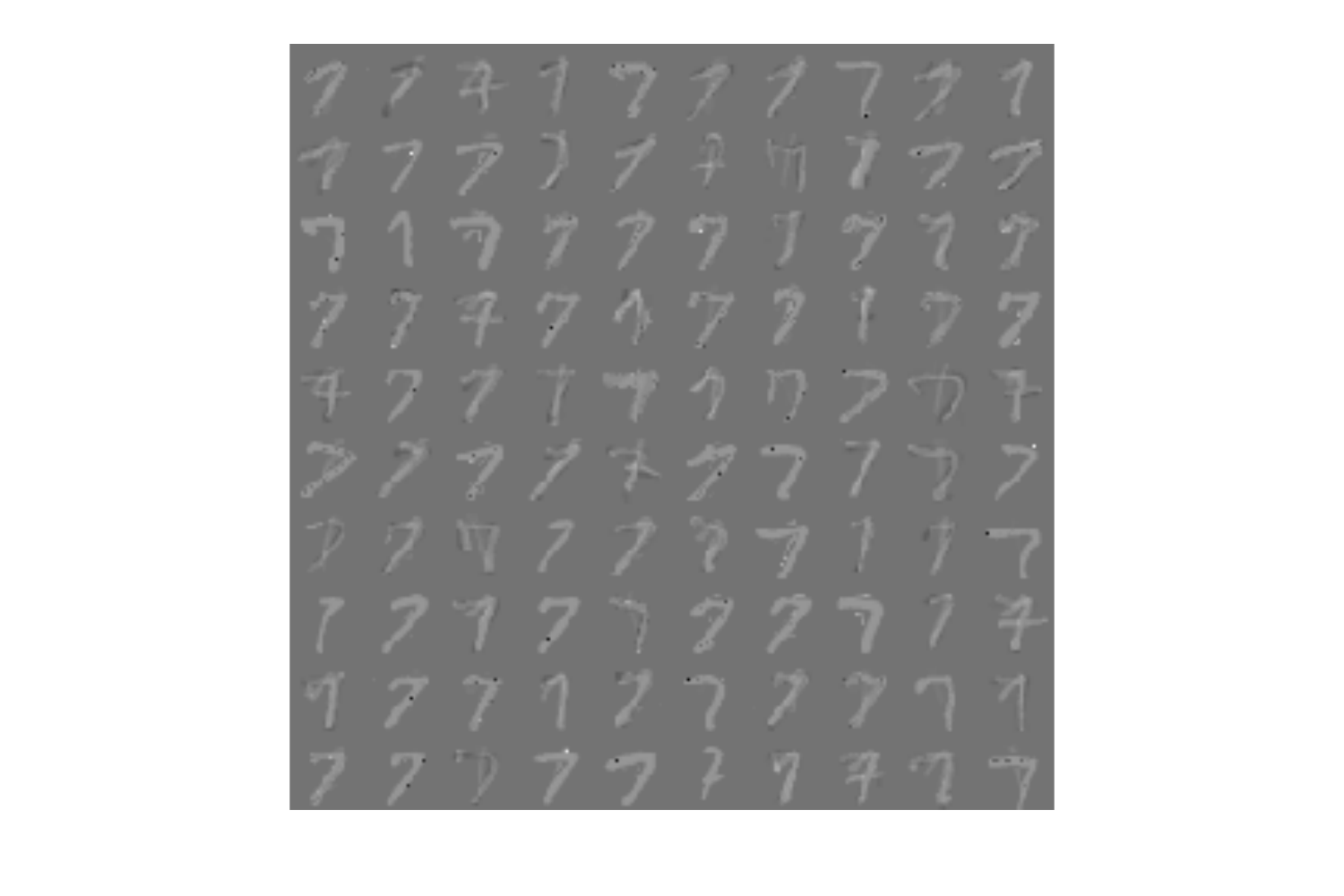}\\
\end{tabular}
\caption{Adversarial images of the MNIST generated by WB attacks ($r=4.0$). Adversarial images by both attacks are harder to classify than the clean images, and KSA attack changes the contrast of the images severely. Left (Right) two: adversarial images of digits 1 and 7 from DA (KSA).}
\label{fig:vis-MNIST:WB-Attack}
\end{figure}

\begin{figure}[!ht]
\centering
\begin{tabular}{cc}
\includegraphics[clip, trim=0cm 0cm 0cm 0cm, width=0.45\columnwidth]{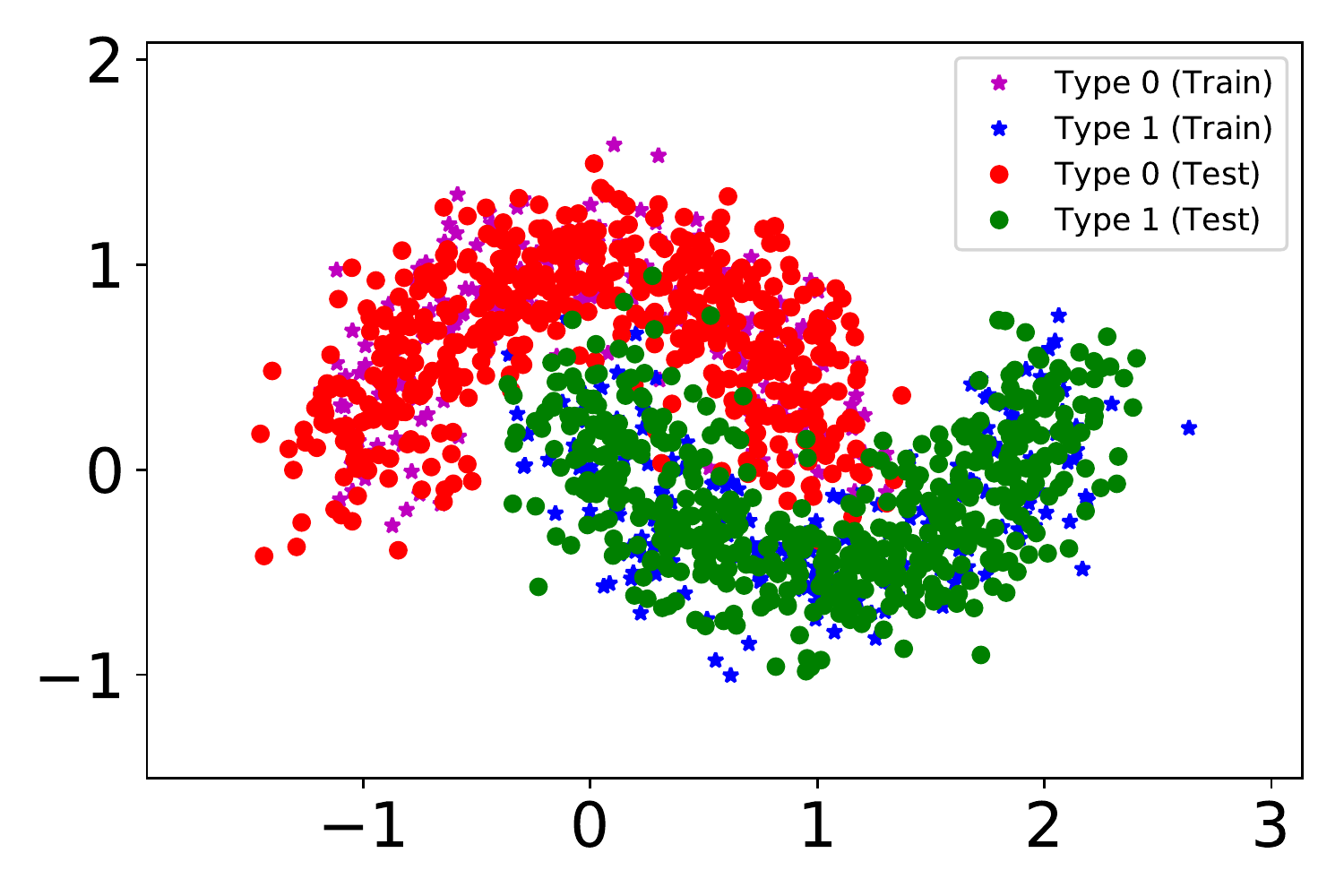}&
\includegraphics[clip, trim=0cm 0cm 0cm 0cm, width=0.45\columnwidth]{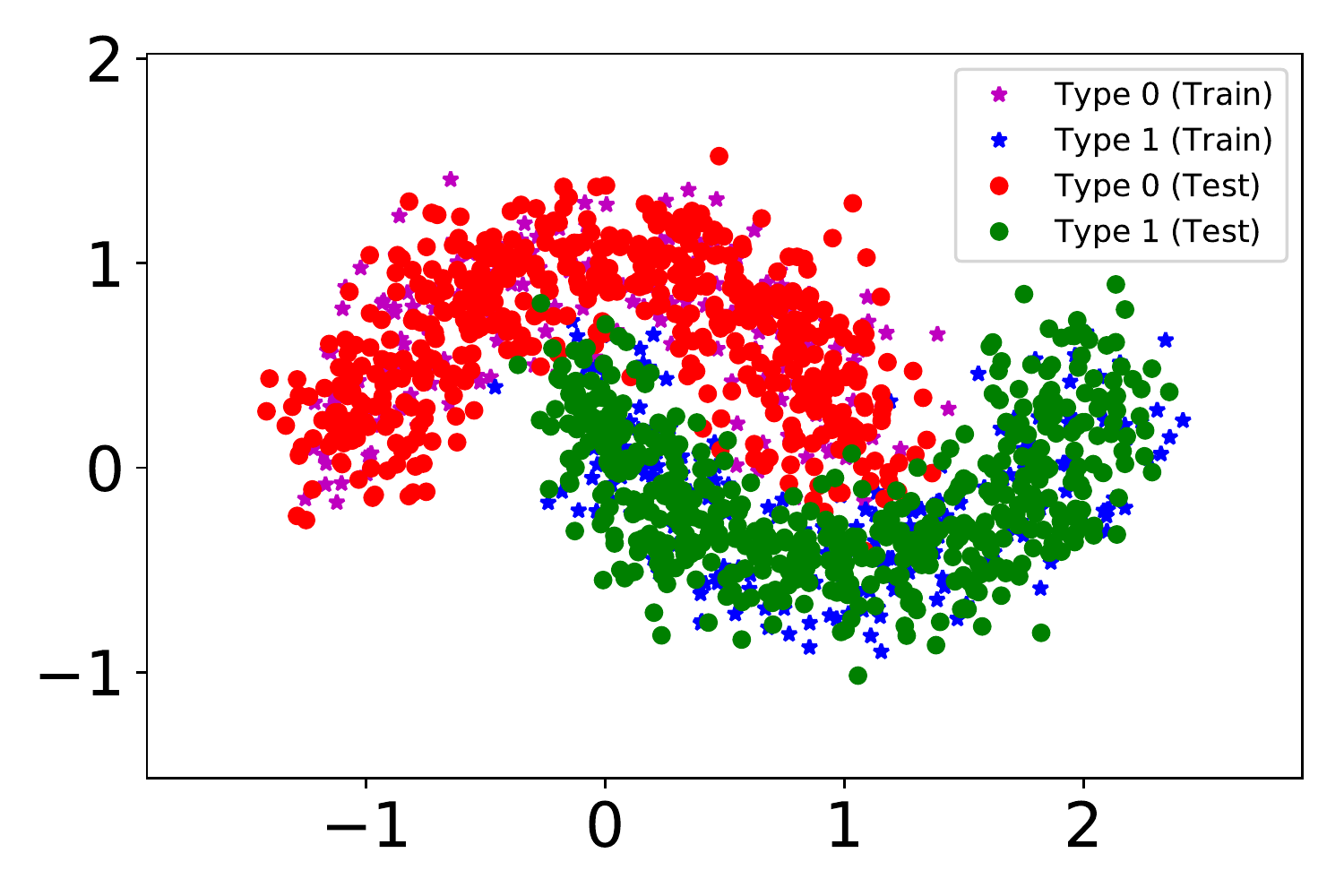}\\
\end{tabular}
\caption{Plots of training and adversarial points of the test data for the Halfmoon dataset. The adversarial points are generated by WB attacks ($r=0.2$), which have a significant overlap with the training data of different type. Left: DA; Right: KSA. (Best viewed on a computer screen.) 
}
\label{fig:vis-Halfmoon:WB-Attack}
\end{figure}

Figures~\ref{fig:vis-MNIST:BB-Attack} and \ref{fig:vis-Halfmoon:BB-Attack} show the adversarial examples of the MNIST 1v7 and the Halfmoon datasets under the BB attacks with $r=4$ and $0.2$, respectively. For MNIST 1v7, under the BB attack with three different substitute models, namely, logistic regression (LR), neural net (NN), and the kernel classifier (Kernel), the contrast of the adversarial images has been changed dramatically from the original images. The adversarial images of digit $1$ and digit $7$ are much harder to classifier even for us compared with the clean images, especially those generated by BB attacks with LR or Kernel model substitution.

\begin{figure}[!ht]
\centering
\begin{tabular}{ccc}
\hskip -1.0cm\includegraphics[clip, trim=0cm 0cm 0cm 0cm, width=0.4\columnwidth]{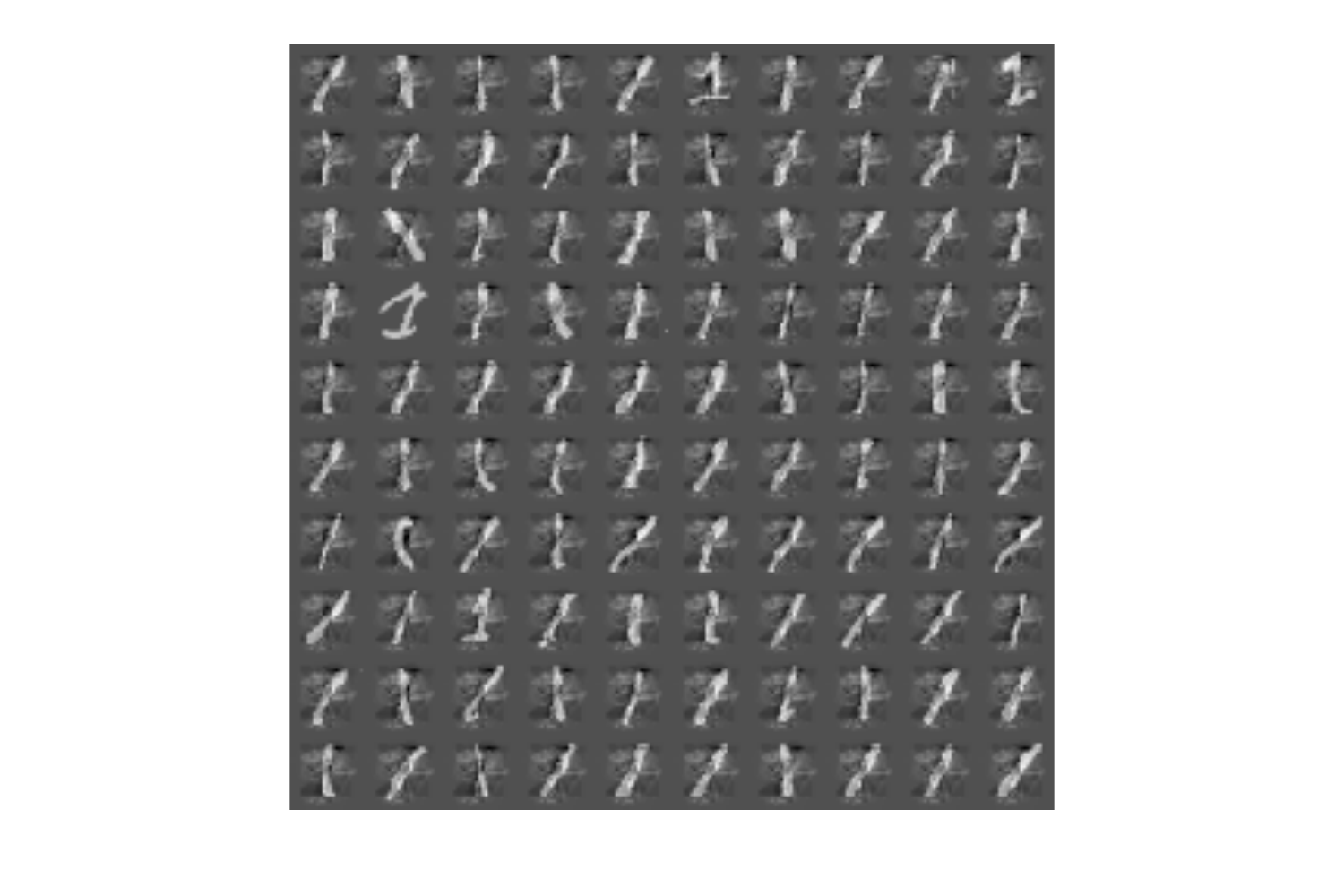}&
\hskip -1.4cm\includegraphics[clip, trim=0cm 0cm 0cm 0cm, width=0.4\columnwidth]{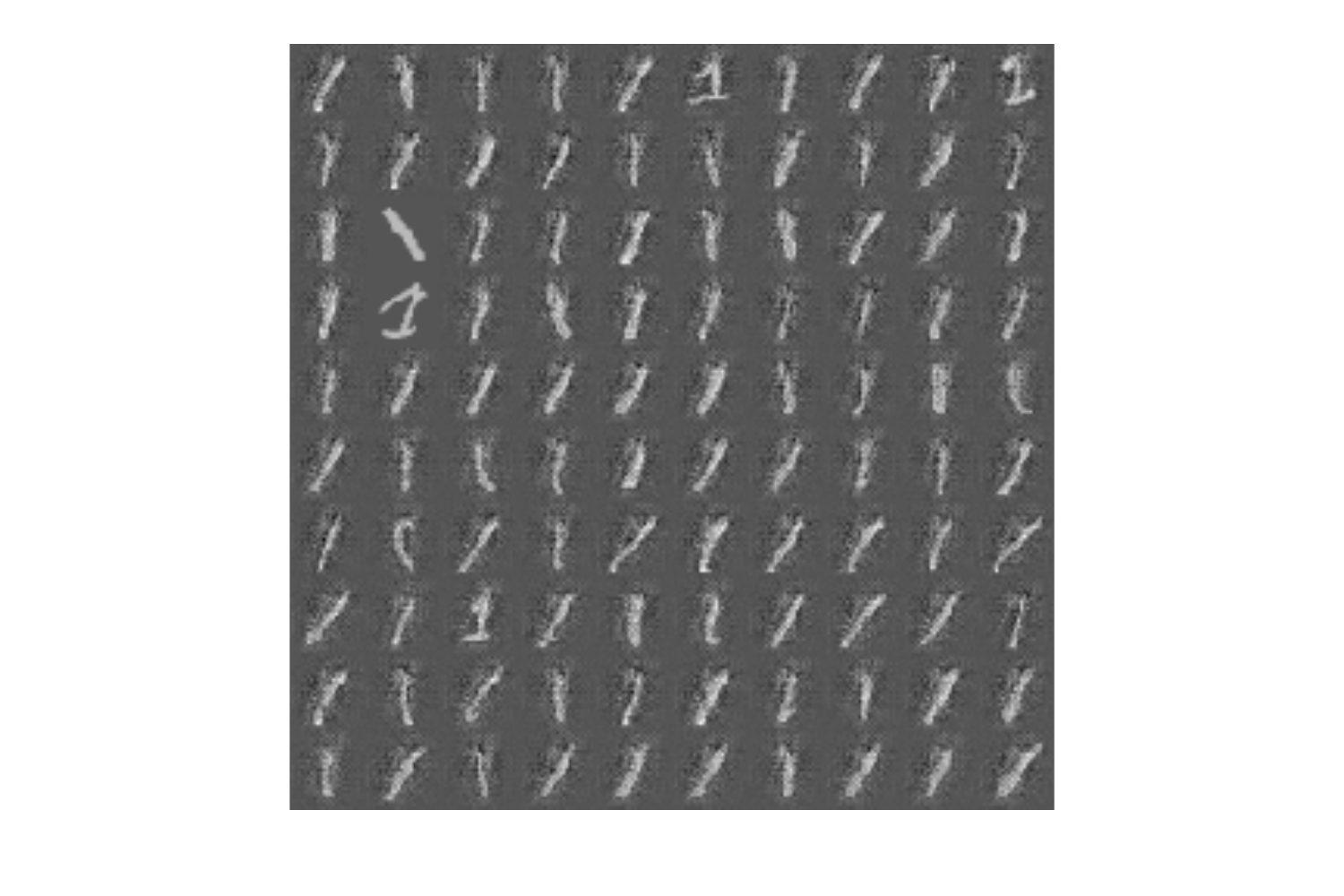}&
\hskip -1.4cm\includegraphics[clip, trim=0cm 0cm 0cm 0cm, width=0.4\columnwidth]{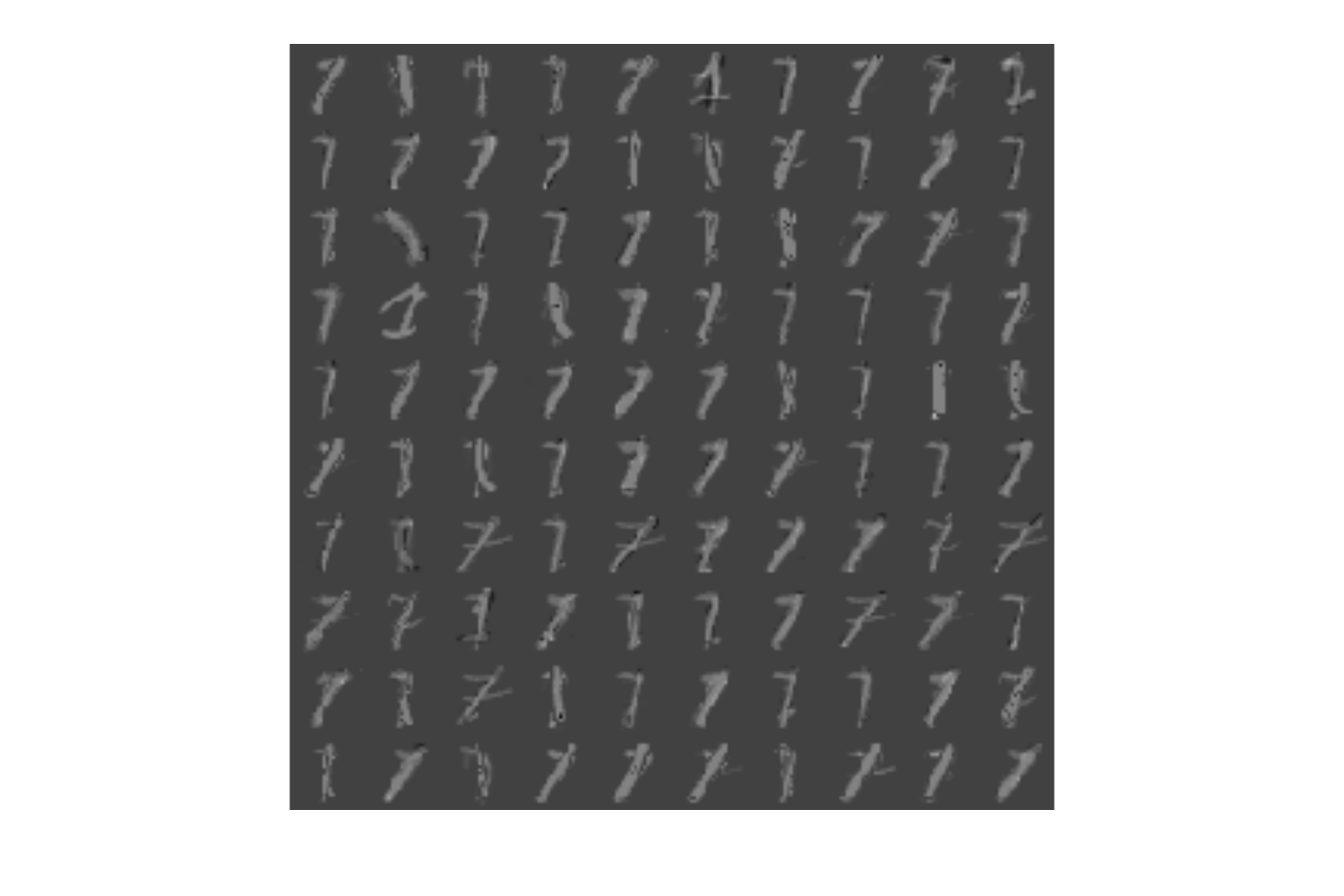}\\
\hskip -1.0cm LR (Digit $1$) & \hskip -1.3cm NN (Digit $1$) & \hskip -1.3cm Kernel (Digit $1$) \\
\hskip -1.0cm\includegraphics[clip, trim=0cm 0cm 0cm 0cm, width=0.4\columnwidth]{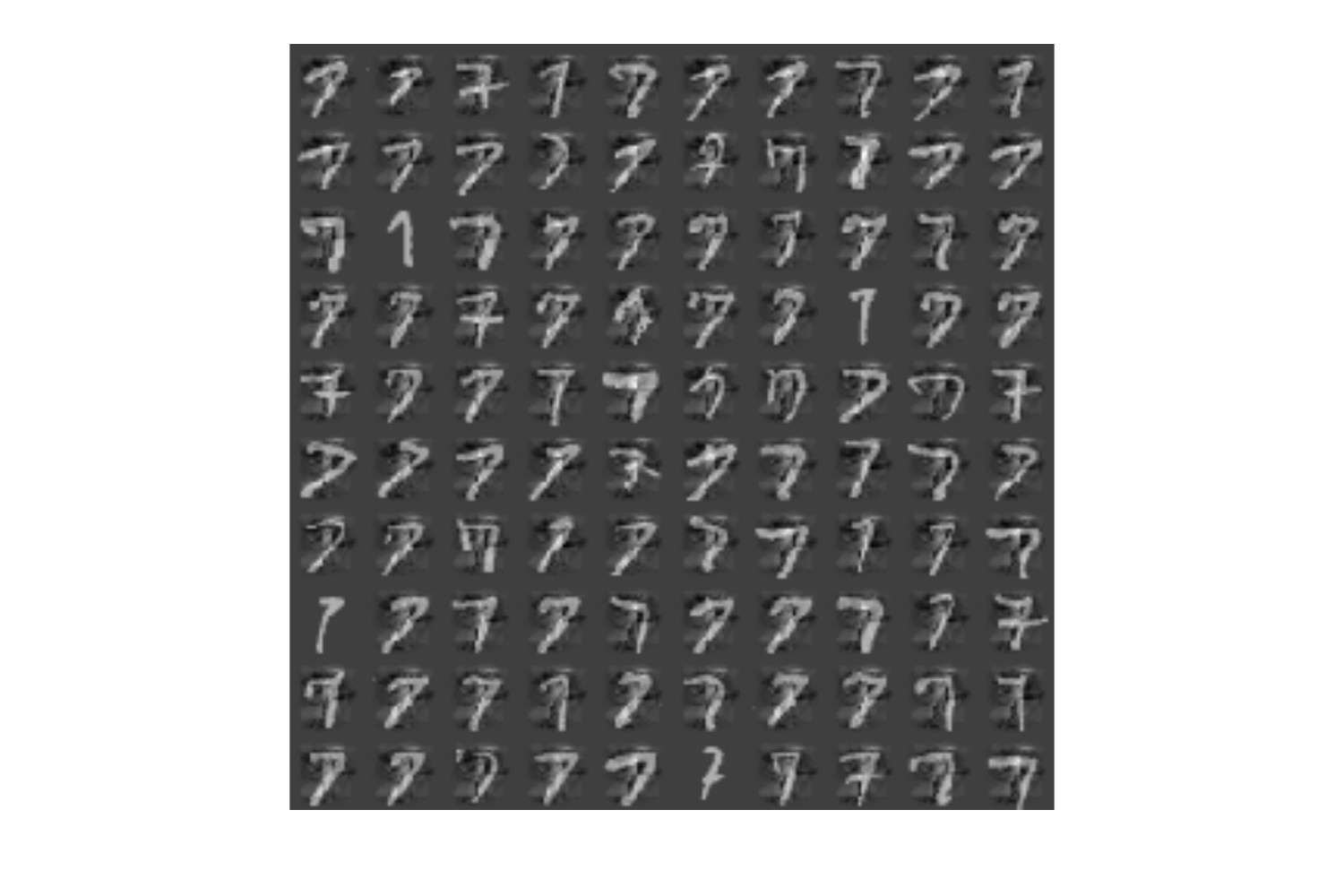}&
\hskip -1.4cm\includegraphics[clip, trim=0cm 0cm 0cm 0cm, width=0.4\columnwidth]{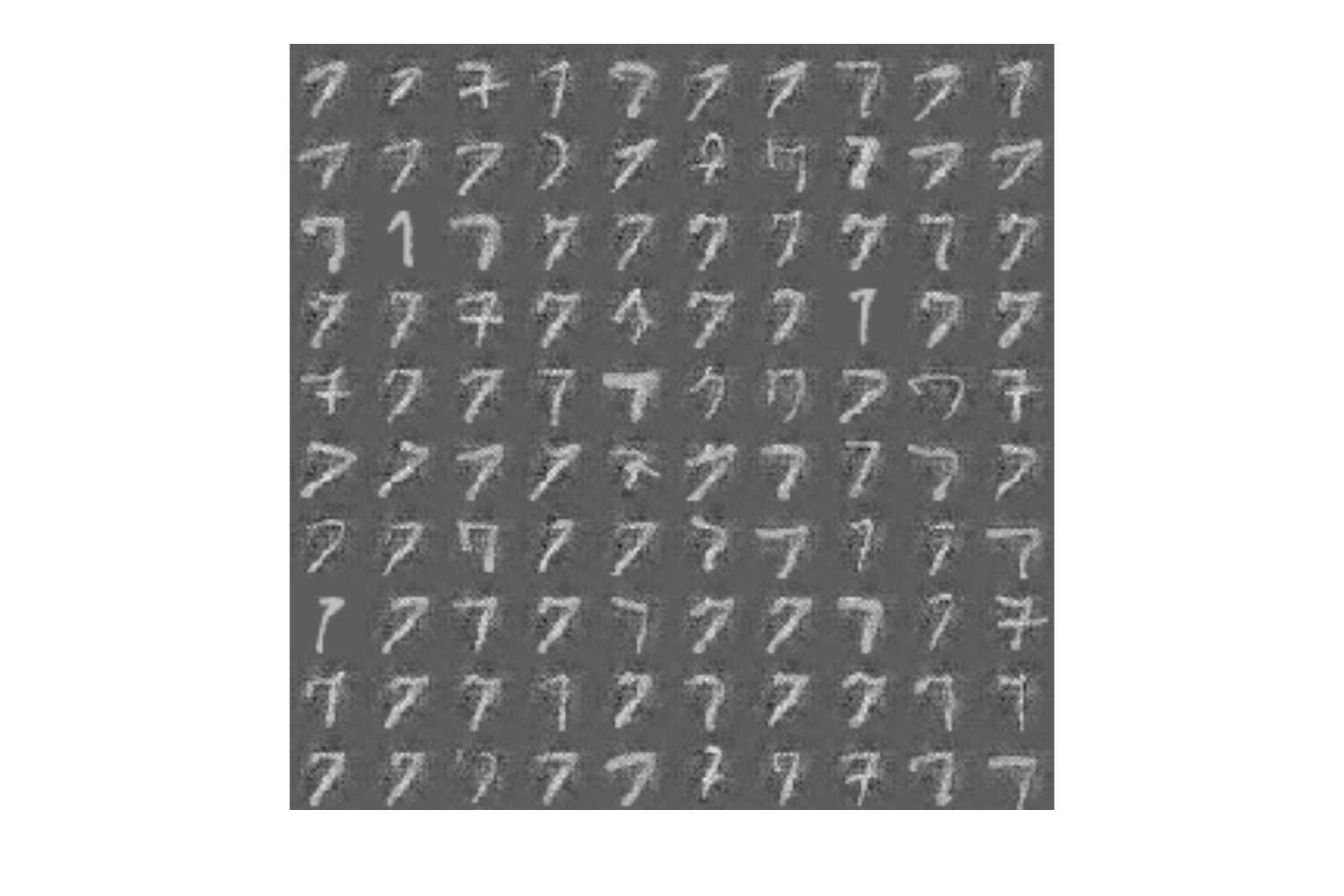}&
\hskip -1.4cm\includegraphics[clip, trim=0cm 0cm 0cm 0cm, width=0.4\columnwidth]{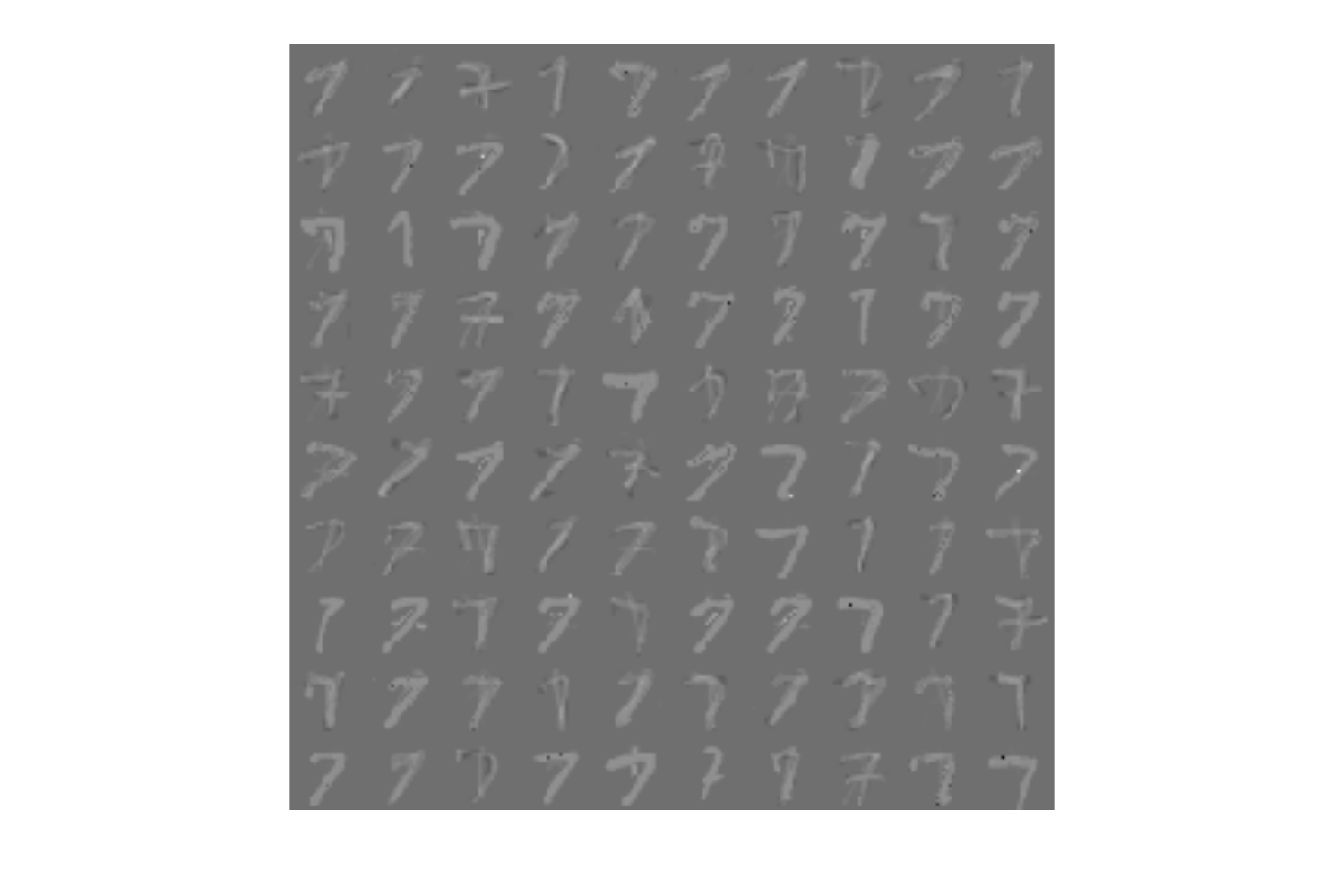}\\
\hskip -1.0cm LR (Digit $7$) & \hskip -1.3cm NN (Digit $7$) & \hskip -1.3cm Kernel (Digit $7$) \\
\end{tabular}
\caption{Adversarial images of the MNIST generated by BB attacks with $r=4.0$. Both attacks make the adversarial images harder to classifier, and change the contrast of the adversarial remarkably. (Best viewed on a computer screen.)}
\label{fig:vis-MNIST:BB-Attack}
\end{figure}

As shown in Fig.~\ref{fig:vis-Halfmoon:BB-Attack}, for the Halfmoon dataset, with a maxium perturbation $r=0.2$ the adversarial version of the test set of Type $0$ will have a significant overlap with the training set of Type $1$, and vice versa. This will degrade the classification accuracy GL and $k$NN classifiers severely, as shown in Figures~\ref{fig:acc:lr}-\ref{fig:acc:nn}.

\begin{figure}[!ht]
\centering
\begin{tabular}{ccc}
\hskip -0.8cm\includegraphics[clip, trim=0cm 0cm 0cm 0cm, width=0.33\columnwidth]{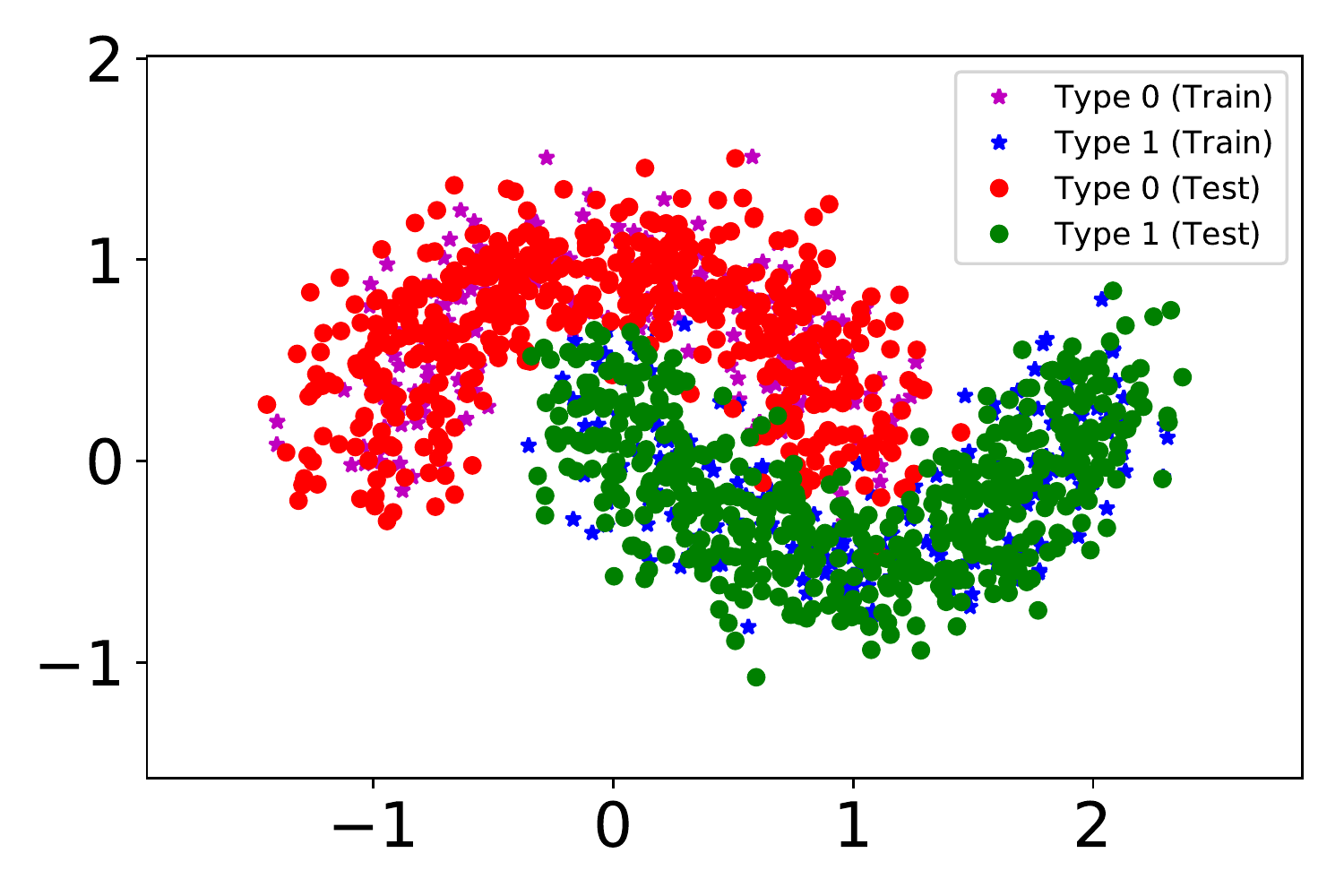}&
\includegraphics[clip, trim=0cm 0cm 0cm 0cm, width=0.33\columnwidth]{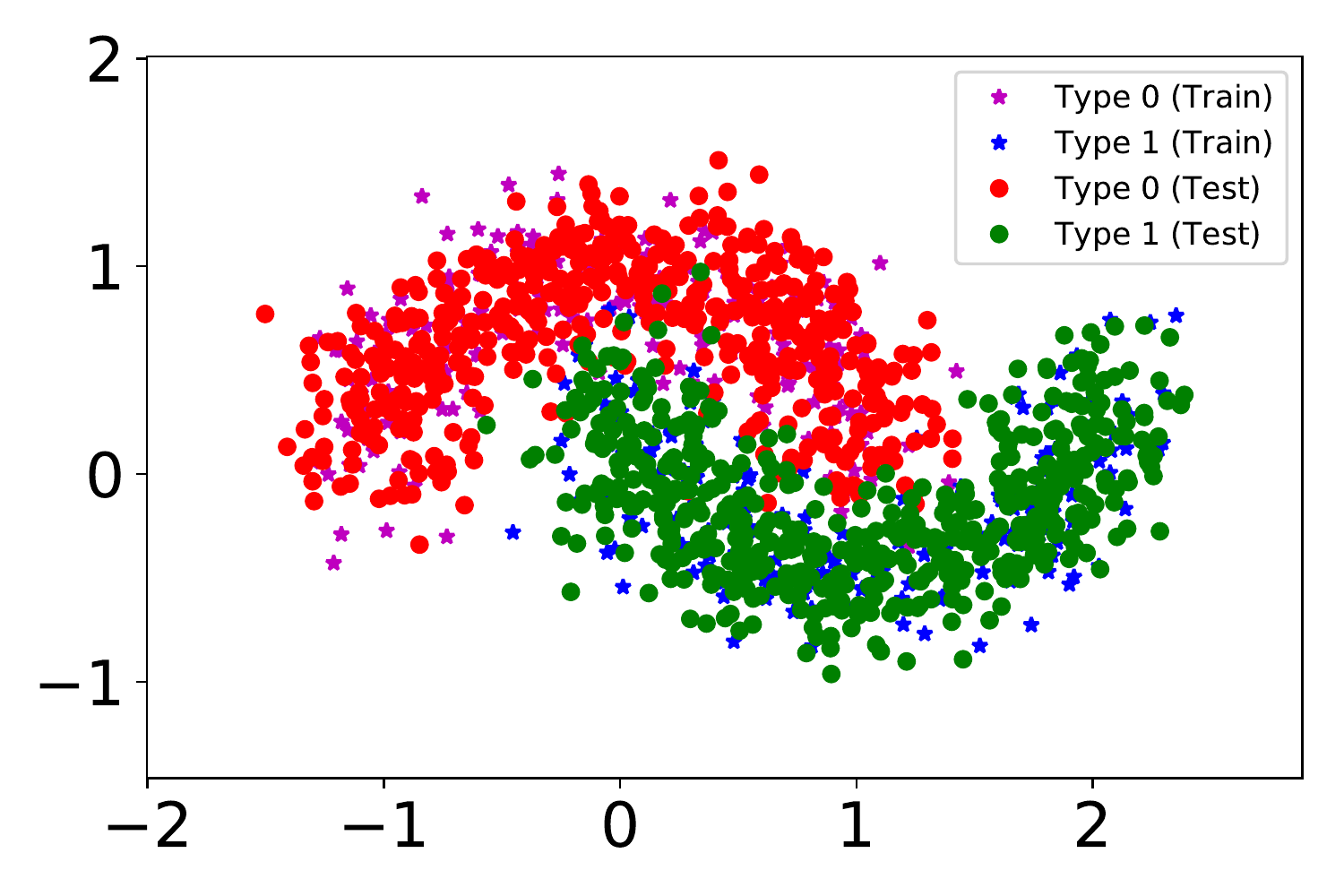}&
\includegraphics[clip, trim=0cm 0cm 0cm 0cm, width=0.33\columnwidth]{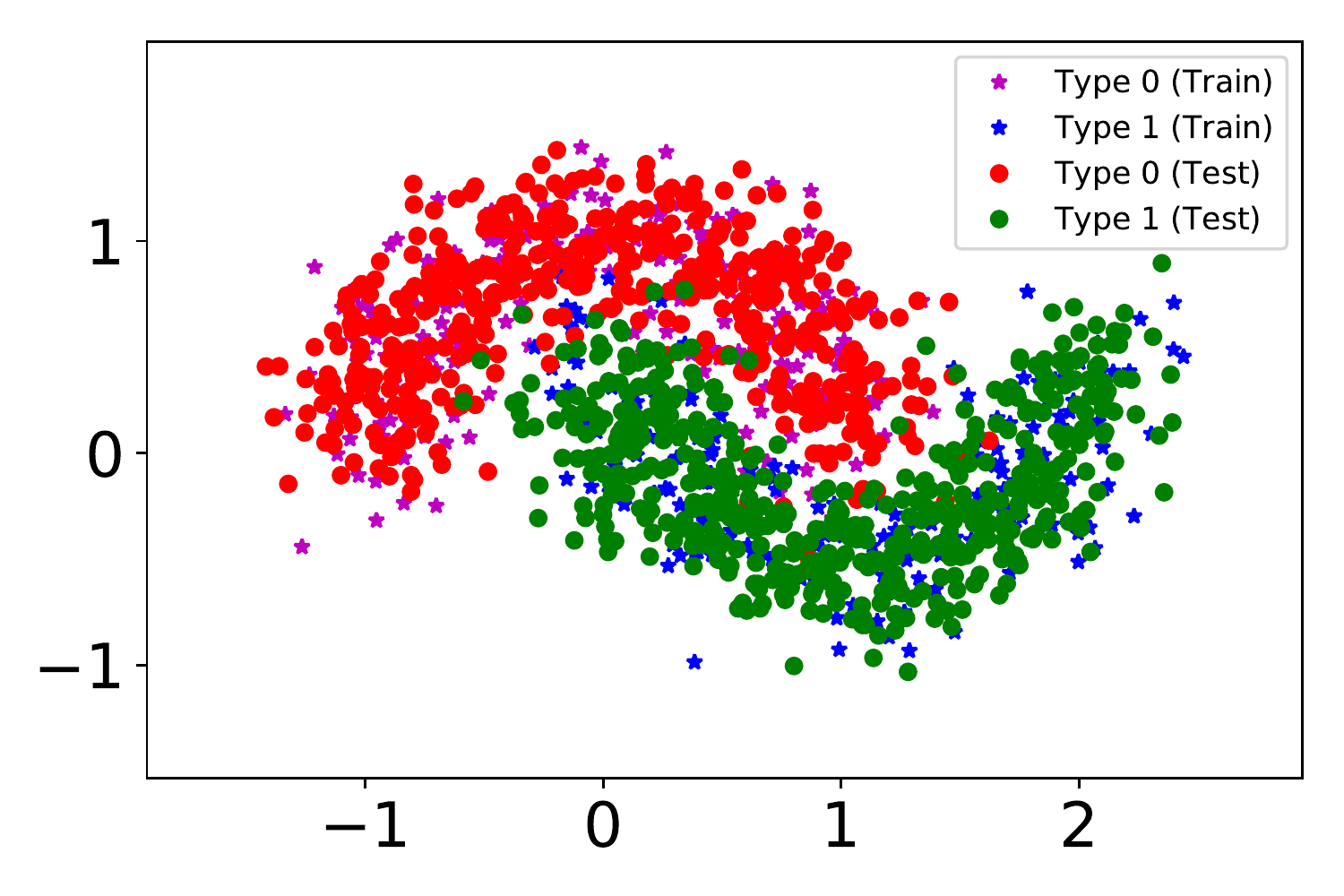}\\
\hskip -0.8cm LR & NN & Kernel\\
\end{tabular}
\caption{Plots of training and adversarial points of the test data for the Halfmoon dataset. The adversarial points are generated by BB attacks $r=0.2$. The adversarial points have a significant overlap with the training data of different type. (Best viewed on a computer screen.)}
\label{fig:vis-Halfmoon:BB-Attack}
\end{figure}

\end{document}